\theoremstyle{plain}
\newtheorem{theorem}{Theorem}[section]
\newtheorem{proposition}[theorem]{Proposition}
\newtheorem{lemma}[theorem]{Lemma}
\newtheorem{corollary}[theorem]{Corollary}
\theoremstyle{definition}
\theoremstyle{remark}
\newtheorem{remark}[theorem]{Remark}
\renewcommand\thmcontinues[1]{\emph{cont.}}
\newcommand{\rp}[1]{\todo[color=brown, inline]{\small {\bf Riccardo} #1}}
\newcommand{\E}{\mathbb{E}}
\newcommand{\argmin}{\operatornamewithlimits{argmin}}
\newcommand{\argmax}{\operatornamewithlimits{argmax}}
\newcommand{\F}{\mathcal{F}}
\newcommand{\Prob}{\mathbb{P}}
\newcommand{\bR}{\mathbb{R}}
\newcommand{\bE}{\mathbb{E}}
\newcommand{\MF}{\mathrm{MF}}
\newcommand{\cM}{\mathcal{M}}
\title{Optimal Multi-Fidelity Best-Arm Identification}
\author{%
  Riccardo Poiani\thanks{Work done while at Inria, Lille, France.} \\
  DEIB, Politecnico di Milano, Milan, Italy\\
  \texttt{riccardo.poiani@polimi.it} \\
  % examples of more authors
  \And
  Rémy Degenne \\
  Univ. Lille, Inria, CNRS, Centrale Lille, UMR 9189-CRIStAL, F-59000 Lille, France \\
  \texttt{remy.degenne@inria.fr} \\
  \AND
  Emilie Kaufmann\\
  Univ. Lille, CNRS, Inria, Centrale Lille, UMR 9189-CRIStAL, F-59000 Lille, France \\
  \texttt{emilie.kaufmann@univ-lille.fr} \\
  \And
  Alberto Maria Metelli \\
  DEIB, Politecnico di Milano, Milan, Italy\\
  \texttt{albertomaria.metelli@polimi.it} \\
  \And
  Marcello Restelli \\
  DEIB, Politecnico di Milano, Milan, Italy \\
  \texttt{marcello.restelli@polimi.it} \\
}
\begin{document}

\maketitle

\begin{abstract}
  In bandit best-arm identification, an algorithm is tasked with finding the arm with highest mean reward with a specified accuracy as fast as possible. We study multi-fidelity best-arm identification, in which the algorithm can choose to sample an arm at a lower fidelity (less accurate mean estimate) for a lower cost. Several methods have been proposed for tackling this problem, but their optimality remain elusive, notably due to loose lower bounds on the total cost needed to identify the best arm. Our first contribution is a tight, instance-dependent lower bound on the cost complexity. The study of the optimization problem featured in the lower bound provides new insights to devise computationally efficient algorithms, and leads us to propose a gradient-based approach with asymptotically optimal cost complexity. We demonstrate the benefits of the new algorithm compared to existing methods in experiments. Our theoretical and empirical findings also shed light on an intriguing concept of optimal fidelity for each arm. 
\end{abstract}

% !TeX root = ../mf_paper.tex
\section{Introduction}

In multi-armed bandits \cite{BanditBook}, an algorithm chooses at each step one \emph{arm} among $K > 1$ possibilities. It then observes a reward, sampled from a probability distribution on $\mathbb{R}$ corresponding to the arm.
Several goals are possible for the algorithm, and we focus on the \emph{best arm identification} task (BAI) in which we aim to identify the arm with the largest mean, using as few samples as possible.
This is a well-studied problem \cite{evendar06,bubeck10sr,kalyanakrishnan2012pac,jamiesonal14LILUCB,garivier2016optimal} with potential applications to, e.g. A/B/n testing \cite{russac21abn} or hyper-parameter optimization \cite{jamieson16hpo}.

In some applications, like physics, parameter studies, or hyper-parameter optimization, getting a sample from the arm distribution might be expensive since it requires evaluating or training a complex model and is computationally demanding.
However, it is often the case that cheaper, less accurate sampling methods are available, for instance, by using a coarser model in the physics study example.
The \emph{multi-fidelity} bandit framework takes such scenarios into account.
When choosing an arm, the algorithm also chooses a fidelity, with a trade-off: a higher fidelity gives a more precise observation but has a higher cost. We assume that the algorithm knows both the cost and the maximal bias of the observations from each fidelity.
This is also how the knowledge about the fidelity was modeled in prior work \citep[see, e.g.,][]{kandasamy2016multi,kandasamy2016gaussian,poiani2022multi,wang2024multi}.
The goal is then to find the best arm (i.e., the arm with the highest mean at the highest fidelity) with high probability and minimal cost.

Specifically, the bandit algorithm interacts with the multi-fidelity environment and gathers information to find which arm has the highest mean when pulled at the highest fidelity.
In the fixed confidence setting, we want to ensure that the algorithm returns a correct answer with probably at least $1 - \delta$ for a given parameter $\delta \in (0,1)$. A good algorithm should do that at a minimum cost, and thus, the appropriate quality metric for evaluating an algorithm's performance is the sum of costs paid until it stops, i.e., the cost complexity.
Previous work on the multi-fidelity BAI problem \citep{poiani2022multi,wang2024multi} provided lower bounds on the cost complexity as well as algorithms with cost upper bounds.
Those lower and upper bounds do not match, and the proposed methods require additional prior information \citep{wang2024multi}, or their guarantees are restricted to problems satisfying additional hypotheses \citep{poiani2022multi}.
We lift all those requirements and provide an improved lower bound and an algorithm with a matching upper bound.

\paragraph{Contributions and organization of the paper} After presenting additional related works, in Section~\ref{sec:back}, we define fixed-confidence best arm identification in multi-fidelity bandits in more mathematical detail and introduce the notations used throughout the paper. Then, Section \ref{sec:lb} contains our first contribution: a tight instance-dependent lower bound on the cost complexity of any algorithm expressed with the maximum of a complex function over all possible \emph{cost} allocations. We also study features of that lower bound, like the existence of an optimal fidelity for each arm in $2 \times M$ bandits, which should be chosen exclusively.
In Section \ref{sec:algo}, we propose a computationally efficient procedure for computing gradients of the function featured in the lower bound and describe a gradient-based algorithm whose cost complexity is asymptotically matching the lower bound. Finally, in Section \ref{sec:exp}, we present the results of numerical experiments which demonstrate the good empirical performance of our new algorithm compared to prior work.

\paragraph{Additional related works} The multi-fidelity setting has mostly been studied in the context of Bayesian optimization \citep{huang2006sequential,picheny2013quantile,kandasamy2017multi,poloczek2017multi,kandasamy2019multi,li2020multi} and black-box function optimization with different structural assumptions \citep{sen2018multi,sen2019noisy,fiegel2020adaptive,nguyen2021nonmyopic}. The goal there is to find the minimum of a function by successive queries of that function or of cheaper approximations. The metric for success in these works is most often the simple regret, that is, the difference between the best value found and the true minimum, although other goals were considered like the cumulative regret \citep{kandasamy2016multi,kandasamy2016gaussian}. Furthermore, we notice that best arm identification with costs has recently been studied in \citep{kanarios2024cost} for BAI with only one fidelity. The authors introduce a variant of the Track-and-Stop algorithm \citep{garivier2016optimal} and prove its asymptotic optimality. However, we will not be able to adapt this study to the multi-fidelity case because, as we shall see, it requires solving a complex optimization problem for which we have no efficient solution. Finally, our work is related to the vast strand of BAI studies that proposes tight lower bound with asymptotically optimal algorithms \citep[e.g.,][]{garivier2016optimal,degenne2019non,menard2019gradient}; nevertheless, as we discuss throughout the text, these studies cannot be directly applied to the multi-fidelity BAI problem.

% !TeX root = ../mf_paper.tex
\section{Background}\label{sec:back}
In this section, we provide essential background and notation that is used throughout the rest of the paper. A table that summarizes the notation is available in Appendix \ref{app:table}.

A multi-fidelity bandit model with $K$ arms and $M$ fidelities is a set of $K\times M$ probability distributions $\bm\nu = (\nu_{a,m})_{a \in [K],m \in [M]}$ where $\nu_{a,m}$ has mean of $\mu_{a,m}$. For each arm $a\in [K]$, $\mu_{a,m}$ represents the mean value of an observation of arm $a$ using fidelity $m$, and let $\bm{\mu} = (\mu_{a,m})_{a \in [K],m\in [M]}$.
An observation at fidelity $m$ is assigned a (known) cost $\lambda_m \geq 0$ with $\lambda_1 < \lambda_2 < \dots< \lambda_M$. The goal is to identify the arm that has the largest mean at the highest fidelity $M$, $a_\star(\bm\mu) := \argmax_{a \in [K]} \mu_{a,M}$ (sometimes denoted by $\star$ in the sequel to ease notation) with a small total sampling cost, by exploring the arms at different fidelities and using some prior knowledge about their precision. Specifically, we assume that there are some (known) values $\xi_1 > \xi_2 > \dots > \xi_M = 0$ such that, for all arm $a \in [K]$, the vector $\mu_a := (\mu_{a,m})_{m \in [M]}$ satisfies 
\[\forall m \in [M], \ \ \ |\mu_{a,m} - \mu_{a,M}| \leq \xi_m\;.\]
We write $\mu_a \in \MF$ to indicate that arm $a$ satisfies these multi-fidelity constraints, with these particular parameters $\xi_m$ (although they are not shown in the notation).
In this paper, we consider arms that belong to a canonical exponential family \citep{cappe2013kullback}. This includes, e.g. arms that have Bernoulli distributions or Gaussian distributions with known variances. %(and algorithms can easily be generalized to, e.g. sub-Gaussian distributions). 
Such models are known to be characterized by their means and we refer to such an exponential multi-fidelity bandit model $\bm{\nu}$ using the means of its arms $\bm{\mu}$, which belongs to the set $\cM_{\MF} : = \{\bm \mu \in \Theta^{K\times M} : \forall a \in [K], \mu_a \in \MF\}$, where $\Theta \subseteq \bR$ is the interval of possible means.

At each interaction round $t=1,2,\dots$, the agent selects an arm $A_t$ and a fidelity $M_t$, observes a sample $X_t \sim \nu_{A_t, M_t}$ and pays a cost $\lambda_{M_t}$. Letting $\F_t = \sigma(A_1, M_1,X_1, \dots, A_t,M_t, X_t)$ be the sigma field generated by the observations up to time $t$, a fixed-confidence identification algorithm takes as input a risk parameter $\delta \in (0,1)$ and is defined by the following ingredients: (i) a \emph{sampling rule} $\left( A_t,M_t \right)_t$, where $(A_t,M_t)$ is $\F_{t-1}$-measurable, (ii) a \emph{stopping rule} $\tau_\delta$, which is a stopping time w.r.t. $\F_t$, and (iii) a \emph{decision rule} $\hat{a}_{\tau_{\delta}} \in [K]$, which is $\F_{\tau_\delta}$-measurable. We want to build strategies that ensure $\Prob_{\bm{\mu}} \left( \hat{a}_{\tau_\delta} \neq a_\star(\bm\mu)  \right) \le \delta$ for all $\bm\mu \in \cM_{\MF}$ with a unique optimal arm. Such a strategy is called \emph{$\delta$-correct}. Among $\delta$-correct strategies,  we are looking for strategies that minimize the expected identification cost (i.e., \emph{cost complexity}) defined as 
\begin{align*}
    \E_{\bm{\mu}}[c_{\tau_\delta}] \coloneqq \sum_{a \in [K]} \sum_{m \in [M]} \lambda_m \E_{\bm{\mu}}[ N_{a,m}(\tau_\delta)] = \sum_{a \in [K]} \sum_{m \in [M]} \mathbb{E} _{\bm{\mu}}[C_{a,m}(\tau_\delta)],
\end{align*}
where $N_{a,m}(t)$ denotes the number of pulls of arm $a$ at fidelity $m$ up to time $t$ and $C_{a,m}(t) = \lambda_m N_{a,m}(t)$ denotes the cost associated to these pulls. In the sequel, we will provide cost complexity guarantees for multi-fidelity instances $\bm \mu$ that belong to the set $\cM_{\MF}^*$ of multi-fidelity instances with a unique optimal arm, i.e., for which $|a_\star(\bm\mu)|=1$. 
We remark that for $M=1$ and $\lambda_m=1$ we recover the best arm identification problem in a classical bandit model, for which the cost complexity coincides with the sample complexity, $\bE_{\bm{\mu}}[\tau_{\delta}]$.

\paragraph{Additional notation}

Given an integer $n \in \mathbb{N}$, we denote by $\Delta_n$ the $n$-dimensional simplex. Furthermore, given $x,y \in (0,1)$, we define $\textup{kl}(x,y) = x \log(x/y) + (1-x)\log((1-x)/(1-y))$. 
Given  $(p, q) \in \Theta^2$, we denote by $d(p,q)$ the Kullback-Leibler (KL) divergence between the distribution in the exponential family with mean $p$ and that with mean $q$. 
We also write $d^-(x,y) = d(x,y) \bm{1}\left\{ x \ge y \right\}$ and $d^+(x,y) = d(x,y) \bm{1} \left\{ x \le y \right\}$. Finally, we denote by $v(p)$ the variance of the distribution with mean $p$.

% !TeX root = ../mf_paper.tex
\section{On the cost complexity of multi-fidelity best-arm identification}\label{sec:lb} 

In this section, we discuss the statistical complexity of identifying the best-arm in MF-BAI problems.
%More specifically, we first provide an instance-dependent lower bound on the identification cost that any $\delta$-correct algorithm will incur (Section \ref{subsec:lb}).
%We then highlight how that lower bound implicitly introduces a concept of optimal fidelity that should be used to gather data (Section \ref{subsec:sparsity}). 
Formal proofs of the claims of this section are presented in Appendix \ref{app:proof-sec-lb}.

\subsection{Lower bound on the cost complexity}\label{subsec:lb}

We present an instance-dependent lower bound on the expected cost-complexity.
The lower bound uses the solution to an optimization problem, where the functions optimized quantify the trade-off between the information gained by pulling an arm at some fidelity and the cost of that fidelity.
Since those functions also appear in our algorithm, we will now introduce notation for them. For all $\bm{\omega} \in \Delta_{K\times M}$ and $\bm{\mu} \in \Theta^{K\times M}$, we define
\begin{align}
f_{i,j}(\bm{\omega}, \bm{\mu})
&\coloneqq \inf_{\substack{\theta_i \in  \MF,\ \theta_j \in \MF \\ \theta_{j,M} \ge \theta_{i,M}}}
    \sum_{a \in \{i,j\}}\sum_{m \in [M]} \omega_{a,m} \frac{d(\mu_{a,m}, \theta_{a,m})}{\lambda_m}
\: ,\label{eq:char-time}
\\
F(\bm{\omega}, \bm{\mu})
&\coloneqq \max_{i \in [K]} \min_{j \ne i} f_{i, j} (\bm{\omega}, \bm{\mu}) \label{eq:F-def}
\: .
\end{align}
The quantity $f_{i,j}(\bm{\omega}, \bm{\mu})$ is the dissimilarity according to a KL weighted by the costs between $\bm{\mu}$ and the closest $\bm{\theta} \in \Theta^{K\times M}$ such that arms $i$ and $j$ satisfy the multi-fidelity constraints and $\theta_k = \mu_k$ for $k \notin \{i,j\}$, with arm $j$ better than arm $i$.
If $\bm{\mu} \in \mathcal M_{\textup{MF}}$ then that closest $\theta$ is also in $\mathcal M_{\textup{MF}}$ but otherwise it might not be the case: if an arm $k \notin \{i,j\}$ is not in MF for $\bm{\mu}$, then it is equally not in MF for $\bm{\theta}$.
For $\bm{\mu} \in \mathcal M_{\textup{MF}}$ the maximum in the definition of $F$ is realized at the best arm $\star$, as $\min_{a \ne i} f_{i, a} (\bm{\omega}, \bm{\mu})$ is zero for $i \ne \star$. That is, $F(\bm{\omega}, \bm{\mu}) = \min_{j \ne \star} f_{\star, j} (\bm{\omega}, \bm{\mu})$.
We define $F$ with a maximum over $i$ and not with that last expression because we want to define it for all points in $\Theta^{K\times M}$, even the points which are not in $\mathcal M_{\textup{MF}}$.
For those points, we could imagine different notions of best arm, for example, $\arg\max_k \mu_{k,M}$, but the right one for our algorithm is the arm for which we have the most evidence (weighted by cost) to say that all other arms are not better. That arm is the argmax in our definition of $F$. Given these definitions, we now introduce our new lower bound.

\begin{restatable}{theorem}{lb}\label{prop:lb}
Let $\delta \in (0, 1)$. For any $\delta$-correct strategy, and any multi-fidelity bandit model $\bm{\mu} \in \mathcal{M}_{\textup{MF}}^*$, it holds that:
\begin{align}\label{eq:lb}
    \E_{\bm{\mu}}[c_{\tau_\delta}] \ge C^*(\bm{\mu})\log\left(\tfrac{1}{2.4\: \delta}\right),
\end{align}
where $C^*(\bm{\mu})^{-1} \coloneqq \sup_{\bm{\omega} \in \Delta_{K \times M}} F(\bm{\omega}, \bm{\mu}) = \sup_{\bm{\omega} \in \Delta_{K \times M}} \min_{a \ne \star} f_{\star, a} (\bm{\omega}, \bm{\mu})$~.
\end{restatable}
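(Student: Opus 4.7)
The plan is to adapt the change-of-measure argument of \citet{garivier2016optimal} to the multi-fidelity setting, the key adjustment being that the alternative bandit instances must preserve the MF constraints on arms $\star$ and $j$ while the remaining arms stay untouched. I proceed in three steps.

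\emph{Step 1 (transportation inequality).} Fix a $\delta$-correct strategy and let $\star = a_\star(\bm{\mu})$. For every alternative $\bm{\theta} \in \mathcal{M}_{\MF}^*$ with $a_\star(\bm{\theta}) \neq \star$, apply the data-processing inequality to the event $E = \{\hat a_{\tau_\delta} = \star\} \in \F_{\tau_\delta}$. Combining Wald's identity on the log-likelihood ratio with $\Prob_{\bm{\mu}}(E) \geq 1-\delta$ and $\Prob_{\bm{\theta}}(E) \leq \delta$ (both from $\delta$-correctness) yields
\begin{align*}
\sum_{a \in [K]} \sum_{m \in [M]} \E_{\bm{\mu}}[N_{a,m}(\tau_\delta)] \, d(\mu_{a,m}, \theta_{a,m}) \;\geq\; \textup{kl}(1-\delta, \delta) \;\geq\; \log\tfrac{1}{2.4 \delta},
\end{align*}
the last step being a standard estimate of the Bernoulli KL at extreme arguments.

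\emph{Step 2 (specializing the alternatives).} For each $j \neq \star$, pick $(\theta_\star, \theta_j)$ with $\theta_\star, \theta_j \in \MF$ and $\theta_{j,M} > \theta_{\star,M}$, and set $\theta_k = \mu_k$ for $k \notin \{\star, j\}$. The unchanged arms still satisfy MF because $\bm{\mu} \in \mathcal{M}_{\MF}^*$, and $j$ is now the unique best arm, so $\bm{\theta} \in \mathcal{M}_{\MF}^*$ and Step 1 applies. The KL terms for $a \notin \{\star, j\}$ vanish, and taking the infimum over the allowed $(\theta_\star, \theta_j)$ (relaxing the strict inequality $\theta_{j,M} > \theta_{\star,M}$ to $\theta_{j,M} \geq \theta_{\star,M}$ by continuity of $d$ in its second argument) gives, for each $j \neq \star$,
\begin{align*}
\inf_{\substack{\theta_\star, \theta_j \in \MF \\ \theta_{j,M} \geq \theta_{\star, M}}} \sum_{a \in \{\star, j\}} \sum_{m \in [M]} \E_{\bm{\mu}}[N_{a,m}(\tau_\delta)] \, d(\mu_{a,m}, \theta_{a,m}) \;\geq\; \log\tfrac{1}{2.4 \delta}.
\end{align*}

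\emph{Step 3 (conclusion via the cost simplex).} Set $c := \E_{\bm{\mu}}[c_{\tau_\delta}]$ and $\omega_{a,m} := \lambda_m \E_{\bm{\mu}}[N_{a,m}(\tau_\delta)]/c$, so $\bm{\omega} \in \Delta_{K \times M}$ and $\E_{\bm{\mu}}[N_{a,m}(\tau_\delta)] = c\,\omega_{a,m}/\lambda_m$. Substitution turns the left-hand side of Step 2 into $c \cdot f_{\star, j}(\bm{\omega}, \bm{\mu})$, giving $c \cdot f_{\star, j}(\bm{\omega}, \bm{\mu}) \geq \log(1/(2.4\delta))$ for every $j \neq \star$. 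Minimizing over $j$ and using that $\bm{\mu} \in \mathcal{M}_{\MF}^*$ makes the maximum in $F(\bm{\omega}, \bm{\mu})$ attained at $i = \star$ gives $c \cdot F(\bm{\omega}, \bm{\mu}) \geq \log(1/(2.4\delta))$, and then $F(\bm{\omega}, \bm{\mu}) \leq \sup_{\bm{\omega}' \in \Delta_{K \times M}} F(\bm{\omega}', \bm{\mu}) = 1/C^*(\bm{\mu})$ delivers $c \geq C^*(\bm{\mu}) \log(1/(2.4\delta))$.

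The main subtlety, specific to the multi-fidelity problem, is in Step 2: the feasibility set of $f_{\star, j}$ allows the boundary case $\theta_{j,M} = \theta_{\star, M}$, whereas Step 1 strictly requires $a_\star(\bm{\theta}) \neq \star$, and one must also verify that $\bm{\theta}$ remains in $\mathcal{M}_{\MF}^*$ globally, not only that $\theta_\star, \theta_j \in \MF$. Both points are settled by the observation that changing only arms $\star, j$ preserves MF for every other arm (since $\bm{\mu} \in \mathcal{M}_{\MF}^*$), together with an $\varepsilon$-approximation letting $\theta_{j,M} \downarrow \theta_{\star, M}$.
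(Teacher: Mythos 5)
Your proposal is correct and follows essentially the same route as the paper's proof: the transportation inequality of Kaufmann et al.\ applied to alternatives that modify only arms $\star$ and $j$, followed by normalization into cost proportions and the passage to the supremum over $\Delta_{K\times M}$. The only (immaterial) difference is that you restrict to two-arm alternatives \emph{before} taking the infimum, whereas the paper takes the infimum over all of $\textup{Alt}(\bm{\mu})$ and then observes that, since $\bm{\mu}\in\mathcal{M}_{\MF}$, the minimizer leaves the arms outside $\{\star,j\}$ untouched; your explicit handling of the boundary case $\theta_{j,M}=\theta_{\star,M}$ via continuity is a point the paper leaves implicit.
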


The quantity $C^*(\bm{\mu})$ describes the statistical complexity of an MF problem $\bm{\mu}$ as the typical max-min game that appears in lower bounds for BAI problems \citep[see, e.g.,][]{garivier2016optimal,degenne2019pure}.
Specifically, first, the max-player chooses a vector $\bm{\omega} \in \Delta_{K \times M}$, and then the min-player chooses a bandit model $\bm{\theta} \in \mathcal{M}_{\textup{MF}}$ in which the optimal arm is different, with the goal of minimizing the function $F(\bm{\omega}, \bm{\mu})$.
Following the methods from previous work, the objective value for $\bm{\omega}$ and $\bm{\theta}$ should be $\sum_{a \in [K]}\sum_{m \in [M]} \omega_{a,m} \frac{d(\mu_{a,m}, \theta_{a,m})}{\lambda_m}$, featuring a sum over all arms and fidelities.
However in the definition of $f_{i,a}(\bm{\omega},\bm{\mu})$ we restrict $\bm{\theta}$ to be different from $\bm{\mu}$ on only two arms. We can prove that if $\bm{\mu} \in \mathcal M_{\textup{MF}}$, this gives the same objective value at the minimizing $\bm{\theta}$ as the full sum. The difference will be important in our algorithm, which will compute that minimizer for points $\bm{\hat{\mu}}$ that do not belong to $\mathcal M_{\textup{MF}}$.

A difference with standard BAI settings is that in Equation \eqref{eq:char-time} each $\bm{\omega} \in \Delta_{K \times M}$ should be interpreted as a vector of \emph{cost proportions} that the max-player is investing (in expectation) in each arm-fidelity pair to identify the optimal arm $\mu_{\star,M}$.\footnote{This claim is evident when looking at the proof of Theorem \ref{prop:lb}.}
We can interpret the \emph{oracle weights} $\bm{\omega}^* \in \argmax_{\Delta_{K \times M}} F(\bm{\omega}, \bm{\mu})$ as the optimal cost proportions that the agent should follow in order to identify $\mu_{\star,M}$ while minimizing the identification cost.
To clarify the difference and the relationship between cost and pull proportions %since this will play a role in the design of our algorithm 
we notice that, given a cost proportion $\bm{\omega}$, it is always possible to compute the pull proportions $\bm{\pi}(\bm{\omega}) \in \Delta_{K \times M}$ that the agent should play in order to incur the costs proportions specified by $\bm{\omega}$, and vice versa.
%Similarly, from a vector of pull proportions $\bm{\pi}$, it always possible to compute the resulting cost proportions $\bm{\omega}(\bm{\pi})$ that the agent obtains while playing arms according to $\bm{\pi}$.
More specifically, these relationships are described for each arm-fidelity pair by the following equations for every $a \in [K]$ and $m \in [M]$:
\begin{align}\label{eq:pi-to-w}
\pi_{a,m}(\bm{\omega}) = \frac{\omega_{a,m}}{\lambda_m} \frac{1}{\sum_{i \in [K]} \sum_{j \in [M]} \frac{\omega_{i,j}}{\lambda_j}} \quad \quad \omega_{a,m}(\bm{\pi}) = \frac{\lambda_m \pi_{a,m}}{\sum_{i \in [K]} \sum_{j \in [M]} \lambda_j \pi_{i,j}}.
\end{align}
As a direct consequence, it is possible to rewrite ${C}^*(\bm{\mu})^{-1}$ as a function of $\bm{\pi}$, the pull proportions.
Doing so reveals that the minimizer $\bm{\theta}$ in $f_{\star,j}$ does not depend on the costs: it is also the minimizer of $\sum_{a \in \{i,j\},m \in [M]} \pi_{a,m} d(\mu_{a,m}, \theta_{a,m})$.
While the agent optimizes the cost proportions $\bm{\omega}$ to get the best possible information/cost ratio, the min-player minimizes only the information available to the algorithm to tell $\bm{\mu}$ and $\bm{\theta}$ apart.
Finally, we notice that $F(\bm{\omega}, \bm{\mu})$ is concave in $\bm{\omega}$\footnote{This is a consequence of the fact $F(\bm{\omega}, \bm{\mu})$ is an infimum over linear functions of $\bm{\omega}$.} but $F(\bm{\omega}(\bm{\pi}),\bm{\mu})$ is not concave in $\bm{\pi}$. 
As we shall see in Section \ref{sec:algo}, this difference will play a crucial role in constructing an asymptotically optimal algorithm.

The formulation of the lower bound as a game where one player maximizes an information/cost ratio while the other player minimizes information makes our result close to lower bounds for regret minimization like the one of \citep{degenne2020structure}, where the (unknown) gap of an arm plays the role of the cost.

\paragraph{Comparison to previous work}

The only known lower bound for the multi-fidelity BAI problem is the one presented in \citep{poiani2022multi}. That same bound was then shown in \citep{wang2024multi}. The bound from those previous works is looser than Theorem~\ref{prop:lb}.
For example, in a two-arms bandit with a single fidelity (denoted by $M$) and Gaussian rewards with variance 1, the bound from previous work is $\lambda_M(\mu_{1,M} - \mu_{2,M})^{-2}\log(1/2.4\delta)$, while our lower bound is $8\lambda_M(\mu_{1,M} - \mu_{2,M})^{-2}\log(1/2.4\delta)$.
Furthermore, on particular instances with 2 arms and 2 fidelity, we can prove that our lower bound improves by a factor $\lambda_M / \lambda_{1}$, which can be arbitrarily large (See Appendix~\ref{app:compare-lb}).
%Even in this simple problem, Theorem~\ref{prop:lb} is better by a factor 8.
More generally, the proof of the previous lower bounds exhibits a particular point in the alternative, which makes it always looser than our bound which features an infimum over all points.
Theorem~\ref{prop:lb} is also optimal in the regime $\delta \to 0$ since it is matched by the algorithm we introduce in the next section.

\subsection{Sparsity of the oracle weights: a tight concept of optimal fidelity}\label{subsec:sparsity}

We conclude our study of the lower bound by further analyzing the optimal allocation $\bm{\omega}^*$. Unlike in the standard best arm identification problem, we did not find an efficient algorithm to compute it, which prevents us from using a Track-and-Stop-like approach \cite{garivier2016optimal}. Nevertheless, we will explain in the next section how to efficiently compute the $f_{i,j}$ functions and their gradient. These computations are crucial for our algorithm but also allow us to prove our next result about the possible sparsity of $\bm \omega ^*$ in $2 \times M$ bandits. For each arm $a \in [K]$, it is not difficult to show that there must exist some fidelity $m \in [M]$ for which $\omega_{a,m}^* >0$ (Lemma~\ref{lemma:fidelity-existence}). However, as the following result highlights, in most cases of $2 \times M$ bandits, only one fidelity per arm has non-zero weight.

\begin{restatable}{theorem}{sparsity}\label{theo:sparsity} Let $K=2$, $\Delta_{2 \times M}^*(\bm{\mu}) \coloneqq  \argmax_{\bm{\omega} \in \Delta_{2 \times M}} F(\bm{\omega}, \bm{\mu})$ and
\begin{align*}
\widetilde{\mathcal{M}}_{\textup{MF}} \coloneqq \left\{ \bm{\mu} \in \mathcal{M}_{\MF}^*: \exists i \in [2], \exists m_1, m_2 \in [M]^2, \exists \bm{\omega}^* \in \Delta^*_{2 \times M}(\bm{\mu}): \omega^*_{i, m_1}>0, \omega^*_{i, m_2} > 0  \right\}.
\end{align*}
The set $\widetilde{\mathcal{M}}_{\textup{MF}}$ is a subset of $\bR^{2\times M}$ whose Lebesgue measure is zero. % set of null measure.\rdout{say for which measure it has null measure?}
\end{restatable}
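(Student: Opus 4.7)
The plan is to derive, for each $\bm{\mu} \in \widetilde{\mathcal{M}}_{\MF}$, a non-trivial real-analytic equation on the coordinates of $\bm{\mu}$, and conclude by the standard fact that the zero set of such an equation in $\mathbb{R}^{2M}$ has Lebesgue measure zero; a finite case split then exhausts $\widetilde{\mathcal{M}}_{\MF}$.

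For $K=2$, writing $b$ for the unique suboptimal arm, we have $F(\bm{\omega}, \bm{\mu}) = f_{\star, b}(\bm{\omega}, \bm{\mu})$, a concave function of $\bm{\omega}$. Denote by $\bm{\theta}^*(\bm{\omega}, \bm{\mu})$ the (generically unique) minimizer of the inner infimum in~\eqref{eq:char-time}. By the envelope theorem,
\[
\frac{\partial f_{\star, b}}{\partial \omega_{a,m}}(\bm{\omega}, \bm{\mu}) \;=\; \frac{d(\mu_{a,m}, \theta^*_{a,m}(\bm{\omega}, \bm{\mu}))}{\lambda_m}.
\]
At any maximizer $\bm{\omega}^*$ of $F(\cdot, \bm{\mu})$ on the simplex, the KKT conditions give a multiplier $\eta \geq 0$ such that this partial equals $\eta$ on the support of $\bm{\omega}^*$ and is $\leq \eta$ off of it. Since $\bm{\mu} \in \mathcal{M}_{\MF}^*$ implies $F(\bm{\omega}^*, \bm{\mu}) > 0$, we have $\eta > 0$. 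If $\omega^*_{i, m_1} > 0$ and $\omega^*_{i, m_2} > 0$ for distinct fidelities $m_1, m_2$, this yields the \emph{balance equation}
\begin{equation}\label{eq:plan-balance}
\frac{d(\mu_{i, m_1}, \theta^*_{i, m_1})}{\lambda_{m_1}} \;=\; \frac{d(\mu_{i, m_2}, \theta^*_{i, m_2})}{\lambda_{m_2}} \;>\; 0,
\end{equation}
and positivity forces each $\theta^*_{i, m_k}$ to lie on the boundary of its MF box $[\theta^*_{i,M} - \xi_{m_k}, \theta^*_{i,M} + \xi_{m_k}]$.

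I would then partition $\widetilde{\mathcal{M}}_{\MF}$ into finitely many pieces indexed by (i) the support $S \subseteq [2] \times [M]$ of $\bm{\omega}^*$, (ii) the active set of MF constraints at $\bm{\theta}^*$ together with the sign choices $\theta^*_{i, m_k} = \theta^*_{i, M} \pm \xi_{m_k}$, and (iii) whether the coupling $\theta^*_{b, M} \geq \theta^*_{\star, M}$ is tight. On each such piece, the inner KKT conditions express $\bm{\theta}^*$ as an explicit real-analytic function of $\bm{\mu}$: each $\theta^*_{a, m}$ is either $\mu_{a, m}$ or one of the two endpoints $\theta^*_{a, M} \pm \xi_m$, and $\theta^*_{a, M}$ solves a scalar analytic equation whose coefficients depend analytically on the active entries of $\bm{\mu}$. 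Substituting into~\eqref{eq:plan-balance} produces a real-analytic equation $G(\bm{\mu}) = 0$ on an open subset of $\mathbb{R}^{2 \times M}$; exhibiting one $\bm{\mu}_0$ in the piece where the two sides of~\eqref{eq:plan-balance} strictly differ (e.g., by taking $\mu_{i, m_1}$ just outside its box and $\mu_{i, m_2}$ far outside its own) certifies $G \not\equiv 0$, whence $G^{-1}(0)$ has Lebesgue measure zero.

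The main obstacle is the case analysis: carefully enumerating the active MF constraints, tracking the signs of $\theta^*_{i, m_k} - \mu_{i, m_k}$, and verifying case by case that after substitution the balance equation is not an identity. A minor subtlety is that $\Delta^*_{2 \times M}(\bm{\mu})$ need not be a singleton; however, the definition of $\widetilde{\mathcal{M}}_{\MF}$ requires only the existence of \emph{one} maximizer with two positive fidelities on some arm, and KKT applied to that specific maximizer directly yields~\eqref{eq:plan-balance}. The finite union of measure-zero pieces is still of measure zero, which proves the theorem.
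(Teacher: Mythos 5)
Your proof is correct in its first half and coincides there with the paper's: for $K=2$ the objective reduces to the single transportation cost $f_{\star,b}$, the envelope theorem gives its partial derivatives, and the KKT conditions at a maximizer $\bm{\omega}^*$ force the weighted divergences $d(\mu_{i,m},\theta^*_{i,m})/\lambda_m$ to share a common positive value across all supported arm--fidelity pairs (the paper's Lemma~\ref{lemma:w-star-at-eq}; your observation that $F(\bm{\omega}^*,\bm{\mu})>0$ rules out a zero multiplier is the same point the paper needs). Where you genuinely diverge is the concluding measure-zero step. You stratify by support, active MF constraints and signs, express $\bm{\theta}^*$ as an analytic function of $\bm{\mu}$ on each stratum via the analytic implicit function theorem, and invoke the fact that the zero set of a non-trivial real-analytic function is Lebesgue-null --- which obliges you to certify $G\not\equiv 0$ case by case, the burden you yourself flag. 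The paper instead exploits monotonicity directly: using a supported fidelity $m_3$ of the \emph{other} arm (which exists by Lemma~\ref{lemma:fidelity-existence}), the cross-arm balance equation pins down $\eta^*$ as a function of $(\mu_{i,m_1},\mu_{\star,m_3})$ because $d^+(\cdot,\eta-\xi)$ and $d^-(\cdot,\eta+\xi)$ are strictly monotone in $\eta$ in opposite directions, and the second balance equation on arm $i$ then determines $\mu_{i,m_2}$ uniquely as a measurable function of the remaining coordinates; the set is therefore contained in a finite union of graphs of measurable functions, which are null by Fubini, with no analyticity and no non-triviality check needed. Both routes work, but note one imprecision in yours: a witness $\bm{\mu}_0$ violating the balance equation cannot lie \emph{in} the stratum (the stratum is contained in $G^{-1}(0)$ by construction); you need the witness in the ambient open set on which the analytic extension of $G$ is defined, and you should say so explicitly when you carry out the case analysis.
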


Theorem \ref{theo:sparsity} implies that in almost all $2 \times M$ multi-fidelity bandits, for any $\bm{\omega}^* \in \Delta^*_{2 \times M}(\bm{\mu})$ and each arm $a \in [2]$, there exists a single fidelity $m^*_a \in [M]$ for which $\omega^*_{a,m^*_a} > 0$ holds. However, we note that this result does not offer an easy way to compute these optimal arm-dependent fidelities. \footnote{We provide insights on cases in which it is possible to compute the optimal fidelity in Appendix~\ref{app:add-results-sparsity}.} 

We remark that existing MF-BAI works \cite{poiani2022multi,wang2024multi} already proposed notions of optimal, arm-dependent fidelity that the agent should employ to identify the optimal arm $\star$. Nevertheless, as we verify in Appendix \ref{app:sub-opt}, these concepts do not comply with the concept of optimal fidelity that arises from the tight lower bound of Theorem \ref{prop:lb}. In other words, there exist bandit models $\bm{\mu}$ in which following these alternative concepts of optimal fidelity leads to sub-optimal performance.

%Finally, we note that Theorem~\ref{theo:sparsity} is limited to case where $K=2$. We currently conjecture that the results hold even for $K > 2$, but we leave closing this gap to future work.\footnote{As our experiments highlight, the empirical cost proportions played by our optimal algorithm converge toward a sparse solution. More generally, we never experienced a non-sparse pattern while optimizing $C^*(\bm\mu)$ for a several $\bm\mu$'s.}

% !TeX root = ../mf_paper.tex
\section{The multi-fidelity sub-gradient ascent algorithm}\label{sec:algo}

We present our solution for solving MF-BAI problems, an algorithm called Multi-Fidelity Sub-Gradient Ascent (MF-GRAD). Its pseudocode can be found in Algorithm \ref{alg:grad}. All proofs for this section are presented in Appendix \ref{app:proof-algo}. 

A reader familiar with the literature on BAI algorithms inspired from lower bounds like Theorem~\ref{prop:lb} may have the natural idea of simply using the Track-and-Stop algorithm \citep{garivier2016optimal} or the related game-based algorithm of \citep{degenne2019non}.
Those algorithms can't be directly applied here, first because of the costs: we want to bound the cost complexity, not the stopping time, and adapting those methods to costs is not trivial.
Furthermore, Track-and-Stop (even in the cost-aware variant of \citep{kanarios2024cost}) would require the computation of the optimal cost proportions at $\bm{\hat{\mu}}(t)$, which is a max-min problem for which we don't have an efficient algorithm.
Our solution is inspired by the gradient ascent algorithm of \citep{menard2019gradient}, which requires computing gradients of $F$ (hence only a minimization problem and not a max-min).
The same innovations required to extend this method to the multi-fidelity case could likely allow us to adapt the algorithm of \citep{degenne2019non}, or the exploration part of the regret-minimizing algorithm of \citep{degenne2020structure}.

Let us introduce some auxiliary notation. Let $\bm{\overline{\omega}} \in \Delta_{K \times M}$ be the uniform vector $\left(\frac{1}{KM}, \dots, \frac{1}{KM}\right)$.
For all $t \in \mathbb{N}$, we define $\textup{Clip}_t(x) = \left( \min \{ x_{a,m}, G \sqrt{t} \} \right)_{a \in [K],m \in [M]}$ for an arbitrary constant $G>0$.
We also define $\alpha_t = \frac{1}{\sqrt{t}}$ and $\gamma_t = \frac{1}{4\sqrt{t}}$. Finally, for all $t \in \mathbb{N}$, we denote by $\bm{C}(t) \in \mathbb{R}^{KM}$ the vector whose $(a,m)$-th dimension is given by $C_{a,m}(t)$. We now present Algorithm \ref{alg:grad}.

\begin{algorithm}[t]
\caption{Multi-Fidelity Sub-Gradient Ascent} \label{alg:grad}
\begin{algorithmic}[1]
\vspace{0.1cm}
\STATE{\textbf{Initialization.} Pull each arm at each fidelity once, and set $\tilde{\bm{\omega}}(t)=\bar{\bm{\omega}}$ for all $t \in \{1, \dots, KM \}$}\label{line:init}
\STATE{\textbf{Sampling Rule} for $t \ge KM$}
\STATE{Sub-gradient Ascent $$\bm{\tilde{\omega}}(t+1) \in \argmax_{\bm{\omega} \in \Delta_{K \times M}} \alpha_{t+1} \sum_{s=KM}^t \bm{\omega} \cdot \textup{Clip}_s \left( \left( \sum_{a,m} \lambda_m \tilde{\pi}_{a,m}(s) \right) \nabla F(\bm{\tilde{\omega}}(s), \bm{\hat{\mu}}(s)) \right) - \textup{kl}(\bm{\omega}, \bm{\overline{\omega}})$$}\label{line:sub-grad}
\STATE{Forced Exploration $\qquad\bm{\pi}'(t+1) = (1-\gamma_t) \bm{\tilde{\pi}}(t+1)+\gamma_t \bm{\overline{\omega}}$}\label{line:forced}
\STATE{Cumulative Tracking $\qquad(A_{t+1}, M_{t+1}) \in \argmax_{(a,m) \in [K] \times [M]} \sum_{s=1}^{t} \pi'_{a,m}(s)  - N_{a,m}(t)$}\label{line:tracking}
\STATE{\textbf{Stopping Rule} $\qquad\tau_\delta = \inf \left\{ t \ge KM: \max_{i \in [K]} \min_{j \ne i} f_{i,j}(\bm{C}(t), \bm{\hat{\mu}}(t)) \ge \beta_{t,\delta}\right\}$}\label{line:stopping}
\STATE{\textbf{Decision Rule} $\qquad\hat{a}_{\tau_\delta} \in \argmax_{i \in [K]} \min_{j \ne i} f_{i,j}(\bm{C}(t), \bm{\hat{\mu}}(t))  $}\label{line:recc}
\end{algorithmic}
\end{algorithm}

\paragraph{Sampling rule}

After a first initialization phase in which the algorithm pulls each arm at each fidelity once (Line~\ref{line:init}), the agent starts its sub-gradient ascent routine.
More specifically at each iteration $t \in \mathbb{N}$, the agent first computes the vector $\bm{\tilde{\omega}}(t+1)$ using the Exponential Weights algorithm on the sequence of gain functions $\{g_s \}_{s=1}^t \coloneqq \{ \textup{Clip}_s \left( \left( \sum_{a,m} \lambda_m \tilde{\pi}_{a,m}(s) \right) \nabla F(\bm{\tilde{\omega}}(s), \bm{\hat{\mu}}(s)) \right)  \}_{s=1}^t$, where $\bm{\tilde{\pi}}(t) \coloneqq \bm{\pi}(\bm{\tilde{\omega}}(t))$ represents the pull-proportions induced by $\bm{\tilde{\omega}}(t)$ and $\nabla F(\bm{\tilde{\omega}}(s), \bm{\hat{\mu}}(s))$ denotes a sub-gradient of $F(\bm{\omega}, \bm{\mu})$ w.r.t $\bm{\omega}$ (Line~\ref{line:sub-grad}).
Neglecting for a moment the clipping function and the term $\tilde{c}(s) \coloneqq \left( \sum_{a,m} \lambda_m \tilde{\pi}_{a,m}(s) \right)$ (these terms are present mainly for technical reasons), this step can be interpreted, from an intuitive perspective, as finding a sequence of weights $\{\bm{\tilde{\omega}}(t)\}_t$ that minimizes the regret on the sequence of empirical losses $F(\bm{\omega}^*, \bm{\hat{\mu}}(s)) - F(\bm{\tilde{\omega}}(s), \bm{\hat{\mu}}(s))$.
\footnote{Whenever $\bm{\hat{\mu}}(s)$ is sufficiently close to $\bm{\mu}$, this implicitly generates a sequence of weights that provide values of $F(\cdot, \bm{\mu})$ "close" to the one of the oracle weights $\bm{\omega}^*$.}
At this point, once $\bm{\tilde{\omega}}(t+1)$ is computed, Algorithm \ref{alg:grad} will convert these cost proportions into pull proportions while adding some forced exploration (Line~\ref{line:forced}), and then, it applies a standard cumulative tracking procedure \citep{garivier2016optimal} in the pull-proportion space so to ensure that $N_{a,m}(t) \approx \sum_{s=1}^t {\pi}'_{a,m}(s)$ (Line~\ref{line:tracking}).

\paragraph{Stopping and decision rule} 

Finally, the algorithm applies a generalized likelihood ratio (GLR) test to decide when to stop (Line~\ref{line:stopping}).
For $i,j \in [K]$, $f_{i,j}(\bm{C}(t), \bm{\hat{\mu}}(t))$ can be interpreted a GLR statistics for comparing two classes: $\Theta^{KM}$ versus $\{\bm{\theta} \mid \theta_i \in \textup{MF},\: \theta_j \in \textup{MF}, \: \theta_{j,M} \ge \theta_{i,M}\}$. If that GLR is large enough (if it exceeds a threshold $\beta_{t,\delta}$), we can reject the hypothesis that $\bm\mu$ belongs to the second class.
If there is an arm $i$ for which we can reject the alternative class for all $j \ne i$, we have rejected all $\bm{\theta} \in \mathcal M_{\textup{MF}}$ where $i$ is not the best arm and we can safely stop and return the answer $\hat{a}_{\tau_\delta} = i$.
Since each ${f}_{i,j}$ is expressed as a sum of only two arms and $M$ fidelities, it is possible to show that choosing $\beta_{t,\delta} \approx \log(K / \delta) + 2M \log\left( \log(t) +1 \right)$ (see its exact expression in \eqref{eq:threshold_exact}) guarantees the correctness of the test, namely that $\mathbb{P}_{\bm{\mu}}(\hat{a}_{\tau_\delta} \ne \star) \le \delta$ holds (Proposition \ref{prop:correctness}).
%If we did not manage to have $f_{i,j}$ depend only on two arms, we would have $KM$ instead of $M$ in the second term, which would significantly delay the stopping.
%% A REMARK FOR GLR GEEKS

\subsection{Theoretical guarantees}

At this point, we are ready to state the main theoretical result on the performance of our algorithm.

\begin{restatable}{theorem}{cmplx}\label{theo:cmplx}
For any multi-fidelity bandit model $\bm{\mu} \in \mathcal{M}_{\textup{MF}}$, Algorithm \ref{alg:grad} using the threshold $\beta_{t,\delta}$ given in \eqref{eq:threshold_exact} is $\delta$-correct and satisfies
\begin{align}
\limsup_{\delta \rightarrow 0 } \frac{\mathbb{E}_{\bm{\mu}}[c_{\tau_\delta}]}{\log(1 / \delta)} \le C^*(\bm{\mu}).
\end{align}
\end{restatable}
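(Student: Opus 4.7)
The plan is to prove the two assertions separately: first $\delta$-correctness, then the asymptotic upper bound on the expected cost complexity. Correctness follows by a standard GLR argument: the stopping rule is equivalent to rejecting, for some candidate arm $i$, every alternative hypothesis $\{\bm\theta : \theta_i,\theta_j \in \textup{MF},\ \theta_{j,M}\ge\theta_{i,M}\}$ indexed by $j\neq i$. Since each $f_{i,j}(\bm{C}(t),\bm{\hat{\mu}}(t))$ depends on only two arms across $M$ fidelities, a deviation inequality for exponential families (together with a time-uniform peeling of the $\log\log t$ type) applied arm-pair by arm-pair and unioned over the $K$ choices of $i$ shows that choosing $\beta_{t,\delta}\approx \log(K/\delta)+2M\log(\log t +1)$ as in \eqref{eq:threshold_exact} yields $\mathbb P_{\bm\mu}(\hat a_{\tau_\delta}\neq \star)\leq \delta$.

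For the cost complexity bound, I would follow the blueprint of \citep{menard2019gradient}, adapted to the cost-weighted setting. First, define a good concentration event $\mathcal E_T$ under which $\|\bm{\hat\mu}(t)-\bm\mu\|_\infty$ is uniformly small for all $t\in[KM,T]$. The forced exploration of Line~\ref{line:forced} at rate $\gamma_t\propto 1/\sqrt t$, combined with the tracking of Line~\ref{line:tracking}, guarantees $N_{a,m}(t)\gtrsim \sqrt t$ for every pair, so a standard concentration bound for exponential families shows $\mathbb P(\mathcal E_T^c)$ is summable. On $\mathcal E_T$, by continuity of $F$ (and its subdifferential) in $\bm\mu$, the empirical value $F(\bm\omega,\bm{\hat\mu}(t))$ is close to $F(\bm\omega,\bm\mu)$ for all $\bm\omega$ in the simplex. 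Next, invoking the regret guarantee of exponential weights on the sequence of (clipped) gain vectors $g_s=\textup{Clip}_s\bigl(\tilde c(s)\,\nabla F(\bm{\tilde\omega}(s),\bm{\hat\mu}(s))\bigr)$ with step size $\alpha_t$ yields
\[
\sum_{s=KM}^{t} g_s\cdot(\bm\omega^*-\bm{\tilde\omega}(s))\ \le\ R_t\ =\ o(t),
\]
and the concavity of $F(\cdot,\bm{\hat\mu}(s))$ in $\bm\omega$ converts this, together with $\tilde c(s)$, into the weighted-average bound
\[
\sum_{s=KM}^{t}\tilde c(s)\bigl[F(\bm\omega^*,\bm{\hat\mu}(s))-F(\bm{\tilde\omega}(s),\bm{\hat\mu}(s))\bigr]\ \le\ R_t + \text{clipping correction}.
\]

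Third, I would relate $\bm{C}(t)/c_t$ to the weighted average of the $\bm{\tilde\omega}(s)$'s. Using the identity $\lambda_m\tilde\pi_{a,m}(s)=\tilde c(s)\tilde\omega_{a,m}(s)$, the cumulative tracking of pull proportions yields $C_{a,m}(t)\approx \sum_s \tilde c(s)\tilde\omega_{a,m}(s)$ and $c_t\approx\sum_s \tilde c(s)$, so $\bm{C}(t)/c_t$ is (approximately) a convex combination of $\{\bm{\tilde\omega}(s)\}_s$ with weights $\tilde c(s)/\sum_r\tilde c(r)$. This is precisely why multiplying the sub-gradient by $\tilde c(s)$ in the exponential-weights update is the right weighting: combined with Jensen's inequality and concavity of $F(\cdot,\bm\mu)$, we get
\[
F\bigl(\bm C(t)/c_t,\bm\mu\bigr)\ \ge\ \frac{\sum_s \tilde c(s)F(\bm{\tilde\omega}(s),\bm\mu)}{\sum_s \tilde c(s)}\ \ge\ F(\bm\omega^*,\bm\mu)-o(1)\ =\ \tfrac{1}{C^*(\bm\mu)}-o(1)
\]
on $\mathcal E_T$. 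By positive homogeneity of $f_{i,j}$ in its first argument, the stopping condition is equivalent to $c_t F(\bm C(t)/c_t,\bm{\hat\mu}(t))\ge \beta_{t,\delta}$, so the display above yields $c_{\tau_\delta}\bigl(1/C^*(\bm\mu)-o(1)\bigr)\le \beta_{\tau_\delta,\delta}$ on $\mathcal E_{\tau_\delta}$. Since $\beta_{t,\delta}\sim \log(1/\delta)$ for fixed $t$ polylogarithmic in $1/\delta$, this gives $c_{\tau_\delta}\le C^*(\bm\mu)\log(1/\delta)(1+o_\delta(1))$. Finally, taking expectations and splitting by $\mathcal E_{T(\delta)}$ and its complement (with $T(\delta)$ chosen as a suitable polynomial in $\log(1/\delta)$) and using the summability of $\mathbb P(\mathcal E_t^c)$ to control the tail contribution to $\mathbb E_{\bm\mu}[c_{\tau_\delta}]$ gives the claim.

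The main obstacle will be controlling the clipping and the potentially unbounded sub-gradients of $F$. Because $F$ is defined as an infimum of linear forms in $\bm\omega$, its sub-differential can blow up on points where some $\omega_{a,m}$ is very small, which is why the clipping threshold $G\sqrt t$ is introduced. The delicate step is to show that, once $\bm{\hat\mu}(t)$ is close enough to $\bm\mu$ (which happens on $\mathcal E_T$ for $t$ large enough), the subgradients are in fact uniformly bounded, so the clipping is inactive and the regret bound $R_t$ scales only as $\sqrt{t\log(KM)}$ times this bound—matching the $o(t)$ rate needed in the argument. A second subtle point is that $\bm{\hat\mu}(t)$ need not lie in $\mathcal M_{\textup{MF}}$; this is exactly why $F$ is defined through the $\max_i\min_j f_{i,j}$ structure rather than specializing $i=\star$, allowing the perturbation argument to go through uniformly in $t$.
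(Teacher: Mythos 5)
Your proposal is correct and follows essentially the same route as the paper's proof: $\delta$-correctness via a per-pair GLR deviation bound unioned over $K$, and the cost bound via a good concentration event sustained by the $1/\sqrt{t}$ forced exploration, the exponential-weights regret bound on the $\tilde{c}(s)$-weighted clipped sub-gradients, concavity of $F$ to pass from the iterates $\tilde{\bm\omega}(s)$ to the empirical cost proportions $\bm{C}(T)/c_T$, smoothness of $F$ in $\bm\mu$, and an integration-of-tails argument (with $c_{\tau_\delta}\le\lambda_M\tau_\delta$) for the expectation. The only cosmetic difference is that you phrase the aggregation step as Jensen's inequality applied to the convex combination $\bm C(T)/c_T\approx\sum_s\tilde c(s)\tilde{\bm\omega}(s)/\sum_s\tilde c(s)$, whereas the paper exchanges the $\min$-$\inf$ with the sum over time and then invokes the tracking error --- these are the same concavity fact, and you also correctly flag the two delicate points (boundedness of sub-gradients on the good event so the clipping is eventually inactive, and the need to define $F$ for $\bm{\hat\mu}(t)\notin\mathcal M_{\textup{MF}}$) that the paper addresses.
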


As we can see from Theorem \ref{theo:cmplx}, Algorithm \ref{alg:grad} is asymptotically optimal, meaning that it matches the lower bound we presented in Theorem~\ref{prop:lb} for the asymptotic regime of $\delta \rightarrow 0 $.
%We remark that although the optimal allocation for multi-fidelity bandit models satisfies the sparsity pattern we identified in Section \ref{subsec:sparsity}, the algorithm does not actually needs to identify these optimal fidelity to enjoy the optimal theoretical rates.
%% ALREADY SAID

\paragraph{Comparison with existing MF-BAI algorithms}

We conclude this section by comparing our results with the literature \cite{poiani2022multi,wang2024multi}. 
First, \cite{poiani2022multi} and \cite{wang2024multi} rely on \emph{additional assumptions} that play a crucial role both for the algorithm design and the resulting theoretical guarantees.
In \cite{poiani2022multi}, the authors enforce an additional and intricate structural assumption on the relationships between $\lambda$'s and $\xi$'s (see Assumption 1 in \cite{poiani2022multi}).
In \cite{wang2024multi}, instead, the authors assume additional knowledge expressed as an upper bound on $\mu_{\star,M}$ and a lower bound on $\argmax_{i \ne \star}\mu_{i,M}$.
For both works, whenever these assumptions are not satisfied (i.e., $\lambda$'s and $\xi$'s do not respect Assumption 1 in \cite{poiani2022multi}, and the knowledge on $\mu_{\star,M}$, $\argmax_{i \ne \star}\mu_{i,M}$ is imprecise/not available), the theoretical guarantees offered by existing algorithms are arbitrarily sub-optimal.
On the other hand, our algorithm requires no additional assumptions and is the only one that matches exactly the cost complexity lower bound.
Indeed, neither the cost upper bound of \cite{poiani2022multi} nor the one of \cite{wang2024multi} matches the lower bound of Theorem~\ref{prop:lb}, even when their additional hypotheses are satisfied.

\subsection{Computing the gradient of $F(\omega, \mu)$}\label{subsec:grad}

Algorithm \ref{alg:grad} requires computing a sub-gradient of $F(\bm{\omega}, \bm{\mu})$. Notably, we remark that this is needed for a generic $\bm{\mu} \in \Theta^{KM}$, as $\bm{\hat{\mu}}(t)$ might violate the fidelity constraints due to inaccurate estimations or degenerate cases in which the multi-fidelity constraints are attained with equality. In this section, we provide an efficient algorithm for the computation of the sub-gradient that arises from a more in-depth study of the function $F(\bm{\omega}, \bm{\mu})$. To this end, we begin by presenting some intermediate characterization of the functions $f_{i,j}(\bm{\omega}, \bm{\mu})$ that define $F(\bm{\omega}, \bm{\mu})$. 

\begin{restatable}{lemma}{fij}\label{lemma:grad-not-in-mu-1}
Consider $\bm{\mu} \in \Theta^{KM}$ and $\bm{\omega} \in \Delta_{K \times M}$. 
Define for $k \in [K]$, \[\psi_k^* := \argmin_{\psi \in \bR}  \sum_{m=1}^{M}\omega_{k,m} \frac{d^-(\mu_{k,m}, \psi+ \xi_m) + d^+(\mu_{k,m}, \psi - \xi_m)}{\lambda_m}\]
Then, the following holds:
\begin{align}
& f_{i,j}(\bm{\omega},\bm\mu)  =   \sum_{k \in \{ i,j \}}\sum_{m=1}^{M}\omega_{k,m} \frac{d^-(\mu_{k,m}, \psi^*_k + \xi_m) + d^+(\mu_{k,m}, \psi^*_k - \xi_m)}{\lambda_m} \quad \text{ if } \psi_j^* > \psi_i^* \label{eq:fij_non_standard}\\
& f_{i,j}(\bm{\omega},\bm\mu)  =   \inf_{\eta \in \mathbb{R} } \sum_{k \in \{i,j\}} \sum_{m =1}^{M} \omega_{k,m} \frac{d^-(\mu_{k,m}, \eta + \xi_m) + d^+(\mu_{k,m}, \eta - \xi_m)}{\lambda_m} \quad \text{otherwise.} \label{eq:fij_standard}
\end{align}
\end{restatable}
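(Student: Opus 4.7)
The plan is to parametrize the feasible set of the infimum in the definition of $f_{i,j}$ by the top-fidelity means $\eta_k := \theta_{k,M}$ for $k\in\{i,j\}$, to eliminate the low-fidelity components $\theta_{k,m}$ for $m<M$ in closed form, and to reduce $f_{i,j}$ to a convex two-variable problem which I then solve by a case split on the ordering of $\psi_i^*$ and $\psi_j^*$.

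For fixed $\eta_i,\eta_j\in\bR$, the minimization over $\theta_{k,m}$, $m<M$, $k\in\{i,j\}$, decouples across $m$ and $k$, and the multi-fidelity constraint confines $\theta_{k,m}$ to $[\eta_k-\xi_m,\eta_k+\xi_m]$. Since $d(\mu_{k,m},\cdot)$ is convex with unique minimum at $\mu_{k,m}$, the optimal $\theta_{k,m}$ is the projection of $\mu_{k,m}$ onto that interval, and a quick case check gives the optimal KL as $d^-(\mu_{k,m},\eta_k+\xi_m)+d^+(\mu_{k,m},\eta_k-\xi_m)$ (the two one-sided KLs have disjoint support as functions of the interval endpoint). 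Using $\xi_M=0$ to absorb the $m=M$ term, the problem becomes
\[ f_{i,j}(\bm{\omega},\bm{\mu})=\inf_{\eta_i\le\eta_j}\bigl[\,g_i(\eta_i)+g_j(\eta_j)\,\bigr], \]
with $g_k$ as in the statement and $\psi_k^*\in\argmin g_k$.

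Each $g_k$ is convex on $\bR$, since every summand vanishes on $[\mu_{k,m}-\xi_m,\mu_{k,m}+\xi_m]$, equals a convex shifted KL outside, and the pieces glue with matching one-sided derivatives at the kinks $y=\mu_{k,m}\pm\xi_m$ (because $\partial_\nu d(\mu,\nu)|_{\nu=\mu}=0$). \textbf{Case 1:} if $\psi_j^*>\psi_i^*$, the pair $(\psi_i^*,\psi_j^*)$ is feasible for the constraint $\eta_i\le\eta_j$ and minimizes each term separately, yielding \eqref{eq:fij_non_standard}. \textbf{Case 2:} if $\psi_j^*\le\psi_i^*$, let $\eta^*$ minimize the one-variable convex function $h:=g_i+g_j$. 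A sign analysis of the one-sided derivatives shows $\eta^*\in[\psi_j^*,\psi_i^*]$, so, with $c:=g_j'(\eta^*)=-g_i'(\eta^*)\ge 0$ (subgradient version if needed), convexity of $g_i$ and $g_j$ gives, for any $\eta_i\le\eta_j$,
\[ g_i(\eta_i)+g_j(\eta_j) \;\ge\; h(\eta^*) + c(\eta_j-\eta_i) \;\ge\; h(\eta^*), \]
with equality at $\eta_i=\eta_j=\eta^*$. The constrained infimum therefore equals $\inf_\eta h(\eta)$, which is exactly \eqref{eq:fij_standard}.

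The main obstacle I expect is formalizing the convexity and derivative manipulations in the non-smooth setting (the $g_k$ are only piecewise smooth, and their argmins may be closed intervals rather than singletons), which is handled by systematically replacing derivatives by subgradients and using monotonicity of one-sided derivatives of convex functions. A secondary subtlety is that $h$ may lack a finite minimizer (e.g.\ when the nonzero weights $\omega_{k,m}$ all sit on the same side of the corresponding $\mu_{k,m}$), in which case both formulas remain valid once the infima are interpreted as limits. The boundary case $\psi_j^*=\psi_i^*$ is automatically consistent, since both formulas then evaluate to $h(\psi_i^*)$.
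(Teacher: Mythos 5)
Your proof is correct and follows essentially the same route as the paper's: eliminate the low-fidelity components by projecting $\mu_{k,m}$ onto $[\theta_{k,M}-\xi_m,\theta_{k,M}+\xi_m]$ (the paper's Lemma~\ref{prop:lb-one}), reduce to a two-variable convex problem in the top-fidelity means, and split on whether the ordering constraint is active. Your Case~2 (the explicit first-order argument showing the constrained minimizer lies on the diagonal when $\psi_j^*\le\psi_i^*$) spells out a step the paper leaves implicit, but the underlying argument is the same.
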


We further introduce $\eta_{i,j}^*$ as the minimizer in the expression in \eqref{eq:fij_standard} \footnote{To ease the notation, we omit (most of the time) the dependence of $\psi_k^*$ and $\eta_{i,j}^*$ in $\bm\omega$ and $\bm \mu$.}.
When $\bm\mu \in \cM_{\MF}$, we can show that $\psi_k^* = \mu_{k,M}$ for all $k$ and due to the multi-fidelity constraints the expression in \eqref{eq:fij_non_standard} is always equal to zero. Hence in both cases $f_{i,j}(\bm\omega,\bm\mu)$ is equal to the expression in \eqref{eq:fij_standard}, which can be rewritten
\[f_{i,j}(\bm\omega,\bm\mu) = \bm{1}\left(\mu_{i,M} \geq \mu_{j,m}\right)\sum_{k \in \{i,j\}} \sum_{m =1}^{M} \omega_{k,m} \frac{d^-(\mu_{k,m}, \eta_{i,j}^* + \xi_m) + d^+(\mu_{k,m}, \eta_{i,j}^* - \xi_m)}{\lambda_m}\;.\]
This quantity can be interpreted as the transportation cost for making $\mu_{j,M}$ larger than $\mu_{i,M}$. When $\mu_i \notin \MF$ or $\mu_j \notin \MF$, if $\psi_j^* > \psi_i^*$, $f_{i,j}(\bm\omega,\bm\mu)$ is equal to the expression \eqref{eq:fij_non_standard} that can be interpreted as a transportation cost with an alternative in which $i$ and $j$ satisfy the multi-fidelity constraints.   

Using this preliminary result, we provide a precise expression for the sub-gradient of $F(\bm{\omega}, \bm{\mu})$.
\begin{restatable}{theorem}{subgradient}\label{theo:sub-gradient}
Consider $\bm{\mu} \in \Theta^{KM}$ and $\bm{\omega} \in \Delta_{K \times M}$ such that $F(\bm{\omega}, \bm{\mu}) > 0$ holds. Let $(i,a) \in [K]^2$ be a pair of arms that attains the max-min value in Equation \eqref{eq:F-def}. Then a sub-gradient $\nabla F(\bm{\omega}, \bm{\mu})$ of $F(\bm{\omega}, \bm{\mu})$ w.r.t. to $\bm{\omega}$ is given by one of the two following expressions: for $j \in \{a,i\}$ and $m \in [M]$~,
\begin{align}
& \nabla F(\bm{\omega}, \bm{\mu})_{j,m} = \frac{d^+(\mu_{j,m}, \eta^*_{i,a} - \xi_m) + d^-(\mu_{j,m}, \eta^*_{i,a} + \xi_m)}{\lambda_m} \quad \textup{if } \psi^*_{i} \ge \psi^*_{a} \: , \label{eq:sub-grad-main-text-1} \\
& \nabla F(\bm{\omega}, \bm{\mu})_{j,m} = \frac{d^+(\mu_{j,m}, \psi^*_{j} - \xi_m) + d^-(\mu_{j,m}, \psi^*_{j} + \xi_m)}{\lambda_m} \quad \textup{otherwise.} \label{eq:sub-grad-main-text-2}
\end{align}
That sub-gradient $\nabla F(\bm{\omega}, \bm{\mu})$ is $0$ in all the remaining $KM - 2M$ dimensions. 
\end{restatable}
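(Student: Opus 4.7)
The plan is to reduce the computation of a sub-gradient of $F$ at $(\bm{\omega},\bm{\mu})$ to the computation of a sub-gradient of $f_{i,a}$ at $\bm{\omega}$, and then to apply an envelope-type argument to the two explicit forms of $f_{i,a}$ provided by Lemma~\ref{lemma:grad-not-in-mu-1}.

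For the reduction, I would first observe that each $f_{i,j}(\cdot,\bm{\mu})$ is concave in $\bm{\omega}$ as an infimum over $\bm{\theta}$ of functions that are linear in $\bm{\omega}$. Since $(i,a)$ attains the max-min in \eqref{eq:F-def}, we have $F(\bm{\omega},\bm{\mu}) = f_{i,a}(\bm{\omega},\bm{\mu}) = \min_{j\ne i} f_{i,j}(\bm{\omega},\bm{\mu})$, while $F(\bm{\omega}',\bm{\mu}) \ge \min_{j\ne i} f_{i,j}(\bm{\omega}',\bm{\mu})$ for every $\bm{\omega}'$. Therefore the concave function $\bm{\omega}' \mapsto \min_{j\ne i} f_{i,j}(\bm{\omega}',\bm{\mu})$ is a lower bound of $F(\cdot,\bm{\mu})$ that is tight at $\bm{\omega}$, and any sub-gradient of this surrogate at $\bm{\omega}$ plays the role of a sub-gradient of $F$ in the sense required by the mirror-descent routine of Algorithm~\ref{alg:grad}. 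By the standard min-rule for concave functions, such a vector is inherited from any $j$ realizing the inner minimum, in particular from $j=a$, so it suffices to exhibit a sub-gradient of $f_{i,a}$ itself.

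The explicit computation then splits according to Lemma~\ref{lemma:grad-not-in-mu-1}. If $\psi_i^* \ge \psi_a^*$, the lemma writes $f_{i,a}(\bm{\omega},\bm{\mu}) = \inf_{\eta \in \mathbb{R}} h(\bm{\omega},\eta)$ with $h$ linear in $\bm{\omega}$ at fixed $\eta$ and the infimum attained at $\eta = \eta_{i,a}^*$. The envelope argument for an infimum of affine forms then shows that $\nabla_{\bm{\omega}} h(\bm{\omega},\eta_{i,a}^*)$ is a sub-gradient: for any $\bm{\omega}'$, $f_{i,a}(\bm{\omega}',\bm{\mu}) \le h(\bm{\omega}',\eta_{i,a}^*) = h(\bm{\omega},\eta_{i,a}^*) + \langle \nabla_{\bm{\omega}} h(\bm{\omega},\eta_{i,a}^*), \bm{\omega}' - \bm{\omega} \rangle$. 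Reading off this gradient coordinate by coordinate yields \eqref{eq:sub-grad-main-text-1} on the entries $(j,m)$ with $j \in \{i,a\}$ and zero elsewhere, since only arms $i$ and $a$ appear in $h$. If $\psi_i^* < \psi_a^*$, Lemma~\ref{lemma:grad-not-in-mu-1} instead decouples $f_{i,a}$ as $h_i(\bm{\omega}_i) + h_a(\bm{\omega}_a)$, where $h_k(\bm{\omega}_k) = \inf_{\psi_k} \sum_m \omega_{k,m}[d^-(\mu_{k,m},\psi_k+\xi_m) + d^+(\mu_{k,m},\psi_k-\xi_m)]/\lambda_m$ is minimized at $\psi_k^*$. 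Applying the same envelope argument independently to each $h_k$ produces \eqref{eq:sub-grad-main-text-2} on the coordinates of $i$ and $a$ and zero on the remaining $KM - 2M$ coordinates.

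The main obstacle I anticipate is the reduction step. Because $F$ is a pointwise maximum over $i$ of concave functions, it is not concave in general when $\bm{\mu} \notin \mathcal{M}_{\textup{MF}}$, so a classical sub-gradient of $F$ need not exist. The vector computed here must therefore be interpreted as a sub-gradient of the concave surrogate $\min_{j\ne i} f_{i,j}(\cdot,\bm{\mu})$ that touches $F$ from below at $\bm{\omega}$; this "touching concave lower bound" property is precisely what the mirror-descent analysis of the algorithm relies on, as in \citep{menard2019gradient,degenne2019non}. Once this interpretation is in place, the remainder is a routine differentiation of the affine integrands supplied by Lemma~\ref{lemma:grad-not-in-mu-1}.
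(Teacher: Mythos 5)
Your overall route is the same as the paper's: reduce to the active pair $(i,a)$, invoke the two-case characterization of Lemma~\ref{lemma:grad-not-in-mu-1}, and read off the gradient of the linear-in-$\bm{\omega}$ integrand at the optimal transport parameter ($\eta^*_{i,a}$ in the coupled case, $\psi^*_i,\psi^*_a$ in the decoupled one). The paper implements the last step by computing genuine partial derivatives (Lemmas~\ref{lemma:derivative-case-1} and~\ref{lemma:derivative-case-2}, via uniqueness of the minimizer and the first-order condition at it), whereas you use affine majorants; your version avoids the left/right-derivative discussion, and your remark that the output must be read as a supergradient of the tight concave minorant $\min_{j\neq i} f_{i,j}(\cdot,\bm{\mu})$ (since $F$, a max of concave functions, need not be concave off $\cM_{\MF}$) is a genuine subtlety that the paper's one-line proof does not spell out.

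One step fails as written. In the decoupled case $\psi^*_a>\psi^*_i$ you apply the envelope inequality to each $h_k$ separately, which yields an affine upper bound on $h_i(\bm{\omega}'_i)+h_a(\bm{\omega}'_a)$. But the decoupled sum is the \emph{unconstrained} relaxation of the infimum defining $f_{i,a}$, hence a \emph{lower} bound on $f_{i,a}(\bm{\omega}',\bm{\mu})$ at a generic $\bm{\omega}'$ (it coincides with $f_{i,a}$ only where the optimal $\psi$'s keep the same strict ordering). An affine upper bound on a lower bound of $f_{i,a}$ does not establish $f_{i,a}(\bm{\omega}',\bm{\mu})\le f_{i,a}(\bm{\omega},\bm{\mu})+\langle v,\bm{\omega}'-\bm{\omega}\rangle$. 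The repair is immediate and leaves the formula unchanged: since $\psi^*_a>\psi^*_i$, the point $\bm{\theta}^*$ with $\theta^*_{i,M}=\psi^*_i$ and $\theta^*_{a,M}=\psi^*_a$ is feasible for the constrained problem defining $f_{i,a}(\bm{\omega}',\bm{\mu})$ at \emph{every} $\bm{\omega}'$, so plugging it into the reformulation of Lemma~\ref{prop:lb-one} gives an affine-in-$\bm{\omega}'$ upper bound on $f_{i,a}(\cdot,\bm{\mu})$ that is tight at $\bm{\omega}$ and whose gradient is exactly \eqref{eq:sub-grad-main-text-2}. Your coupled case does not suffer from this issue, because the diagonal point $\theta_{i,M}=\theta_{a,M}=\eta^*_{i,a}$ is always feasible, so $h(\bm{\omega}',\eta^*_{i,a})$ genuinely dominates $f_{i,a}(\bm{\omega}',\bm{\mu})$ everywhere.
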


% GENERAL THEOREM COMMENT
Theorem \ref{theo:sub-gradient} shows how to compute a sub-gradient of $F(\bm{\omega}, \bm{\mu})$ under the mild assumption that $F(\bm{\omega}, \bm{\mu}) > 0$.\footnote{As we discuss in Remark \ref{remark:pos-F}, $F(\tilde{\bm{\omega}}(t), \bm{\hat{\mu}}(t)) = 0$ is a rare condition, and, whenever it happens, it is possible to alter Algorithm \ref{alg:grad} slightly without affecting its theoretical guarantees.} More specifically, it is sufficient to consider the pair $(i,a)$ that attains the max-min value in Equation \eqref{eq:F-def}, and then test whether $\psi^*_{i} \ge \psi^*_{a}$ holds to choose which expression to use among Equations \eqref{eq:sub-grad-main-text-1} and \eqref{eq:sub-grad-main-text-2}. An interesting interpretation of the sub-gradient expression is that, whenever $\psi^*_i \geq \psi^*_a$, the sub-gradient is pointing toward the direction of the space that aims at increasing the information to discriminate the eventual optimality of arm $a$ against $i$. On the other hand, whenever $\psi^*_a > \psi^*_i$ holds, the sub-gradient points towards the direction of minimizing errors in the multi-fidelity constraints for arm $i$ and arm $a$ (if any). %Indeed, as discussed above, $\psi^*_i \ge \psi^*_a$ can hold only when $\bm{\mu} \notin \mathcal{M}_{\textup{MF}}$.

\textbf{Computing the sub-gradient efficiently} To conclude, we notice that to compute a sub-gradient, it is required to compute $\psi^*_k$ for all arm $k$ and $\eta^*_{i,j}$ for all pairs of arms such that $\psi^*_{i} \geq \psi^*_{j}$.
Using their definitions, 
this will require solving $\mathcal{O}(K^2)$ one-dimensional optimization problems of functions that involve $\mathcal{O}(M)$ variables, which leads to a computational complexity which is roughly $\mathcal{O}(K^2 M n)$, where $n$ is the number of iterations of the convex solver.
In the following, we show that it is possible to exploit the structure of the $f_{i,j}$'s to obtain an algorithm whose total complexity is $\mathcal{O}(K^2 M^2)$ and that does not suffer from any approximation error due to the optimization procedure.
Specifically, we now present a result that shows how to compute $\eta^*_{i,j}$. A similar result holds also for $\psi^*_{k}$ and is deferred to Appendix \ref{app:proof-algo}.

\begin{restatable}{lemma}{uniquemin}\label{lemma:unique-minimizer}
Consider $\bm{\mu} \in \Theta^{KM}$ and $\bm{\omega} \in \Delta_{K \times M}$ such that $f_{i,j}(\bm{\omega}, \bm{\mu}) > 0$~. Suppose that $\psi^*_{i,} \ge \psi^*_{j}$ holds. Then, there exists a unique minimizer $\eta^*_{i,j}(\bm{\omega})$ of Equation \eqref{eq:fij_standard} which is the unique solution of the following equation of $\eta$:
\begin{align}\label{eq:eta-eq-main-text}
\eta = \frac{\sum_{a \in \{i,j\}} \sum_{m} \frac{\omega_{a,m}}{\lambda_m} \left( \overline{k}_{a,m}(\eta) \frac{\mu_{a,m} + \xi_m}{v(\eta-\xi_m)} + \underline{k}_{a,m}(\eta) \frac{\mu_{a,m} - \xi_m}{v(\eta+\xi_m)}  \right)}{ \sum_{a \in \{i,j\}} \sum_{m} \frac{\omega_{a,m}}{\lambda_m} \left( \overline{k}_{a,m}(\eta) \frac{1}{v(\eta - \xi_m)} + \underline{k}_{a,m}(\eta) \frac{1}{v(\eta + \xi_m)} \right)  },
\end{align}
where $\overline{k}_{a,m}(x) = \bm{1}\{x \ge \mu_{a,m} + \xi_m \}$ and $\underline{k}_{a,m}(x) = \bm{1}\{ x \le \mu_{i,m} - \xi_m \}$.
\end{restatable}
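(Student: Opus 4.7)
The plan is to reduce the statement to a one-dimensional convex optimization problem and then translate the first-order condition into the fixed-point equation. Define
\[g(\eta) \coloneqq \sum_{k \in \{i,j\}}\sum_{m=1}^M \frac{\omega_{k,m}}{\lambda_m}\Bigl[d^-(\mu_{k,m},\eta+\xi_m) + d^+(\mu_{k,m},\eta-\xi_m)\Bigr]\,,\]
so that the infimum appearing in \eqref{eq:fij_standard} is $\inf_\eta g(\eta)$. The hypothesis $\psi_i^*\ge\psi_j^*$ is used only to invoke Lemma~\ref{lemma:grad-not-in-mu-1}, which then ensures $f_{i,j}(\bm\omega,\bm\mu)=\inf_\eta g(\eta)$; the assumption $f_{i,j}(\bm\omega,\bm\mu)>0$ therefore translates into $g(\eta)>0$ for every $\eta$. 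The remaining work is to show that $g$ is $C^1$, strictly convex, and coercive, so the minimizer is unique and characterized by $g'(\eta)=0$, which I will manipulate into \eqref{eq:eta-eq-main-text}.

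\textbf{Regularity, strict convexity, uniqueness.} For an exponential family, $d(p,\cdot)$ is $C^\infty$ and strictly convex on the mean domain, with $\partial_q d(p,q)=(q-p)/v(q)$ that vanishes at $q=p$. Consequently $\eta\mapsto d^-(\mu_{k,m},\eta+\xi_m)$ is convex, identically zero on $\{\eta \ge \mu_{k,m}-\xi_m\}$ and strictly convex on the complementary open ray, and is globally $C^1$ because its derivative vanishes continuously at the switch point; the symmetric statement holds for $d^+(\mu_{k,m},\eta-\xi_m)$. Summing yields that $g$ is convex, $C^1$, with derivative
\[g'(\eta) = \sum_{k,m} \frac{\omega_{k,m}}{\lambda_m}\Bigl[\underline{k}_{k,m}(\eta)\tfrac{\eta+\xi_m-\mu_{k,m}}{v(\eta+\xi_m)} + \overline{k}_{k,m}(\eta)\tfrac{\eta-\xi_m-\mu_{k,m}}{v(\eta-\xi_m)}\Bigr]\,.\]
On each open subinterval of the finite partition of $\bR$ induced by the switch points $\mu_{k,m}\pm\xi_m$, the indicator pattern is constant, so $g$ restricted there is a sum of terms that are either zero or strictly convex. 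Since $g \ge \inf_\eta g = f_{i,j}(\bm\omega,\bm\mu)>0$, at least one summand is active at every $\eta$, hence on every such subinterval. A convex continuous function that is strictly convex on every piece of a finite partition cannot be affine on any sub-interval, so $g$ is globally strictly convex. Coercivity follows from the fact that $d(\mu_{k,m},\cdot)$ blows up at the boundary of the mean domain, so a minimizer exists and is unique.

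\textbf{Fixed-point reformulation.} Setting $g'(\eta)=0$ and separating the terms that contain $\eta$ from those that do not yields
\[\eta\sum_{k,m}\tfrac{\omega_{k,m}}{\lambda_m}\Bigl[\tfrac{\underline{k}_{k,m}(\eta)}{v(\eta+\xi_m)} + \tfrac{\overline{k}_{k,m}(\eta)}{v(\eta-\xi_m)}\Bigr] = \sum_{k,m}\tfrac{\omega_{k,m}}{\lambda_m}\Bigl[\underline{k}_{k,m}(\eta)\tfrac{\mu_{k,m}-\xi_m}{v(\eta+\xi_m)} + \overline{k}_{k,m}(\eta)\tfrac{\mu_{k,m}+\xi_m}{v(\eta-\xi_m)}\Bigr]\,.\]
The denominator is strictly positive because some indicator is always active, so dividing produces exactly \eqref{eq:eta-eq-main-text}. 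Conversely, any $\eta$ satisfying \eqref{eq:eta-eq-main-text} solves $g'(\eta)=0$ and, by convexity, minimizes $g$; uniqueness of the minimizer gives uniqueness of the fixed point.

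\textbf{Main obstacle.} The delicate step is upgrading the piecewise strict convexity to a global one across the junction points $\mu_{k,m}\pm\xi_m$: one has to check that the left and right derivatives of each individual summand agree (both equal to zero) at its switch, which is what prevents a kink from producing an interval on which $g$ is merely affine. The positivity hypothesis $f_{i,j}(\bm\omega,\bm\mu)>0$, imported through the assumption $\psi_i^*\ge\psi_j^*$, is crucial here because it guarantees a non-empty active set everywhere, both for strict convexity and for the non-vanishing of the denominator in \eqref{eq:eta-eq-main-text}.
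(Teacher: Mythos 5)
Your proof is correct and reaches the same fixed-point equation by the same first-order-condition computation, but the uniqueness step is argued by a genuinely different mechanism. The paper proceeds by contradiction: two minimizers of the convex objective would give a whole interval of minimizers, each of which must satisfy \eqref{eq:eta-unique}, and it then argues that any two distinct solutions of that equation must differ in at least one indicator $\overline{k}_{a,m}$ or $\underline{k}_{a,m}$ --- which can happen at only finitely many points, contradicting the infinitude of the interval. You instead prove global strict convexity of the objective directly: on each open cell of the partition of $\bR$ induced by the switch points $\mu_{k,m}\pm\xi_m$ the indicator pattern is frozen, the positivity $f_{i,j}(\bm{\omega},\bm{\mu})>0$ forces at least one strictly convex summand to be active on every cell, and a convex function that is strictly convex on every cell of a finite partition admits no affine segment. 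Both arguments hinge on the same core fact (positivity of $f_{i,j}$ keeps the active set nonempty everywhere), but yours is arguably the more robust of the two: the paper's step ``equal indicator patterns $\Rightarrow$ equal right-hand sides of \eqref{eq:eta-eq-main-text}'' is literally true only when the variance $v$ is constant (the Gaussian case), since for general exponential families the right-hand side also depends on $\eta$ through $v(\eta\pm\xi_m)$, whereas your strict-convexity route does not need that. You also explicitly address existence (coercivity) and the $C^1$ matching of one-sided derivatives at the switch points, which the paper's proof leaves implicit; the only standing assumption you share with the paper is strict convexity of $q\mapsto d(p,q)$ in the mean parametrization, which the paper also uses without proof elsewhere (e.g., Lemma~\ref{lemma:eta-case-2}).
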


From Lemma~\ref{lemma:unique-minimizer}, to compute $\eta^*_{i,j}$ it is sufficient to find the unique solution to the fixed point equation given in \eqref{eq:eta-eq-main-text}. To do this efficiently, we observe that the right hand side of Equation \eqref{eq:eta-eq-main-text} depends on $\eta$ only for the presence of the indicator functions $\overline{k}_{a,m}(\eta)$  and $\underline{k}_{a,m}(\eta)$, which can only take a finite number of values. Hence, it is sufficient to evaluate the right-hand side at an arbitrary point within a given interval where the values of the indicator functions do not change. If the resulting value is within the considered interval, then this value is our fixed point.
Since there are at most $\mathcal{O}(M)$ candidate fixed points, this procedure takes at most $\mathcal{O}(M^2)$ steps.

% As we can see from Lemma \ref{lemma:unique-minimizer}, $\eta^*_{i,j}$ is the unique solution of Equation \eqref{eq:eta-eq-main-text}. More specifically, the right hand side of Equation \eqref{eq:eta-eq-main-text} depends on $\eta$ only for the presence of the indicator functions $\overline{k}_{a,m}(\eta)$  and $\underline{k}_{a,m}(\eta)$.
% Therefore, to compute $\eta^*_{i,j}$ it is sufficient to find the unique fixed point of Equation \eqref{eq:eta-eq-main-text}. For this step, it is sufficient to evaluate the right-hand side at an arbitrary point within a given interval where the values of the indicator functions do not change. If the resulting value is within the considered interval, then we have found a fixed point.
% Since there are at most $\mathcal{O}(M)$ candidate fixed points, this procedure takes at most $\mathcal{O}(M^2)$ steps.

\textbf{Computational complexity remark} It follows that the per-iteration computational complexity of Algorithm \ref{alg:grad} is $\mathcal{O}\left( K^2 M^2 \right)$.
The computationally efficient technique explained above indeed applies not only to the sampling rule but also to the stopping and the decision rules.\footnote{Given the definition of $f_{i,j}$, it is sufficient to replace $\omega_{i,m}/\lambda_m$ in Equation \eqref{eq:eta-eq-main-text} with $N_{a,m}(t)$.}

% !TeX root = ../mf_paper.tex
\section{Numerical experiments}\label{sec:exp}

We conclude this work by presenting numerical simulations whose goal is to show the empirical benefits of our approach. We compare MF-GRAD against IISE \cite{poiani2022multi}, and the gradient approach of \cite{menard2019gradient} that simply does BAI using samples collected at fidelity $M$. We will refer to this additional baseline as GRAD. In the following, we avoided the comparison with the multi-fidelity algorithms in \cite{wang2024multi} as we ran into issues when doing experiments. We elaborate more on this point in Appendix~\ref{app:lucb-sub-opt}, where we provide numerical evidence of the fact those algorithms might fail at stopping, together with an argument that shows a mistake in the proofs of \cite{wang2024multi}.

Given this setup, first, we test all methods on a $4 \times 5$ multi-fidelity bandit with Gaussian arms that have been randomly generated, using a risk parameter $\delta=0.01$. Due to space constraints and for the sake of exposition, we refer the reader to Appendix \ref{app:main-exp-details} for the value of $\bm{\mu}$, $\xi$'s and $\lambda$'s and details on the stopping rules calibration. We report the empirical distribution of the resulting cost complexities in Figure~\ref{fig:res-1-v2}. As one can verify, MF-GRAD obtains the most competitive performance. Experiments on additional $4 \times 5$ bandits that are reported in Appendix \ref{sec:add-domains} provide a similar conclusion. 

\begin{figure}[t]
    \centering
    \begin{minipage}{0.49\textwidth}
        \centering
        \includegraphics[width=\textwidth]{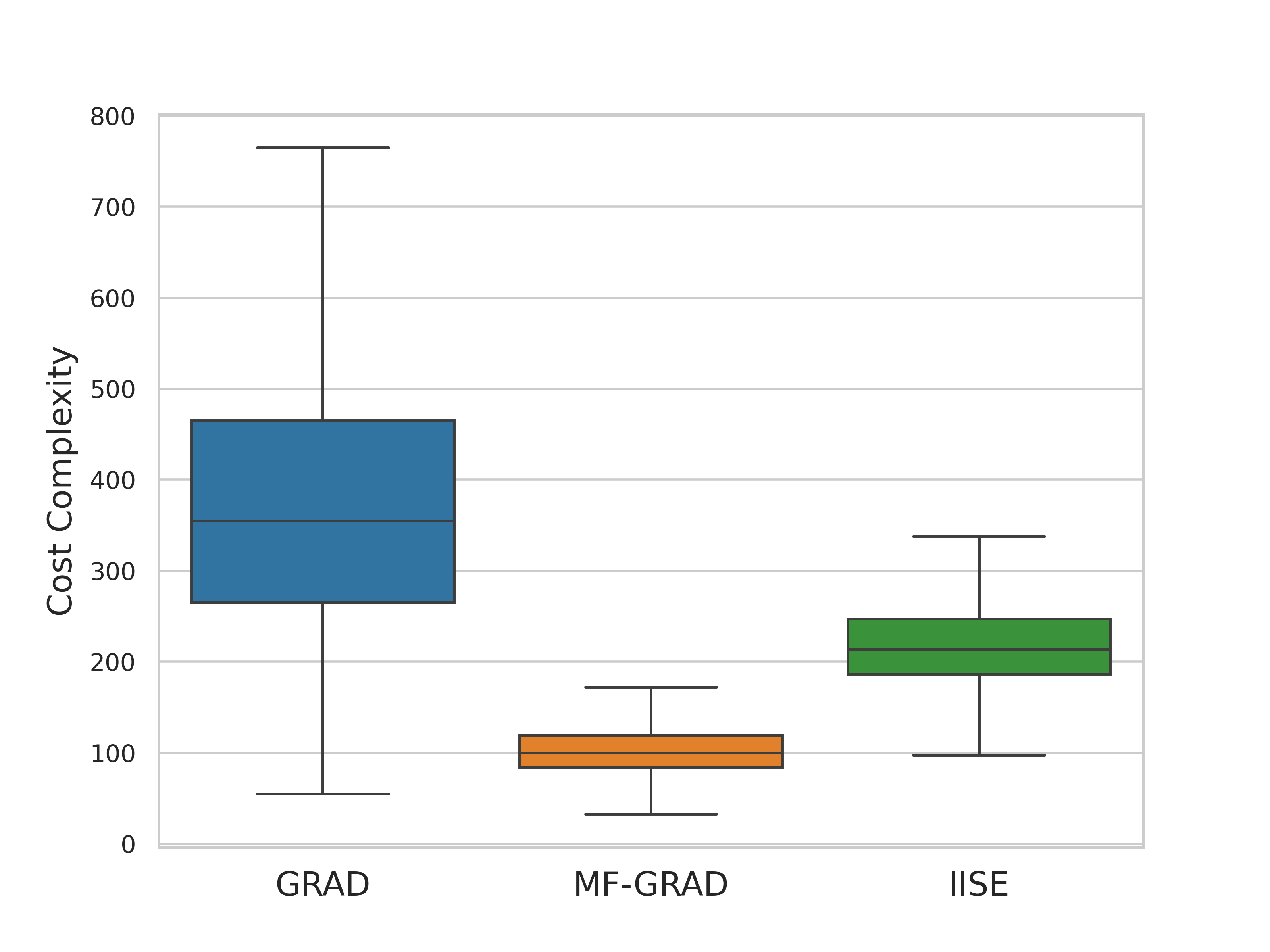} % replace with your image file
        \caption{Empirical cost complexity for $1000$ runs times with $\delta=0.01$ on the $4 \times 5$ multi-fidelity bandit.}
        \label{fig:res-1-v2}
    \end{minipage}
    \hfill
    \begin{minipage}{0.49\textwidth}
        \centering
        \includegraphics[width=\textwidth]{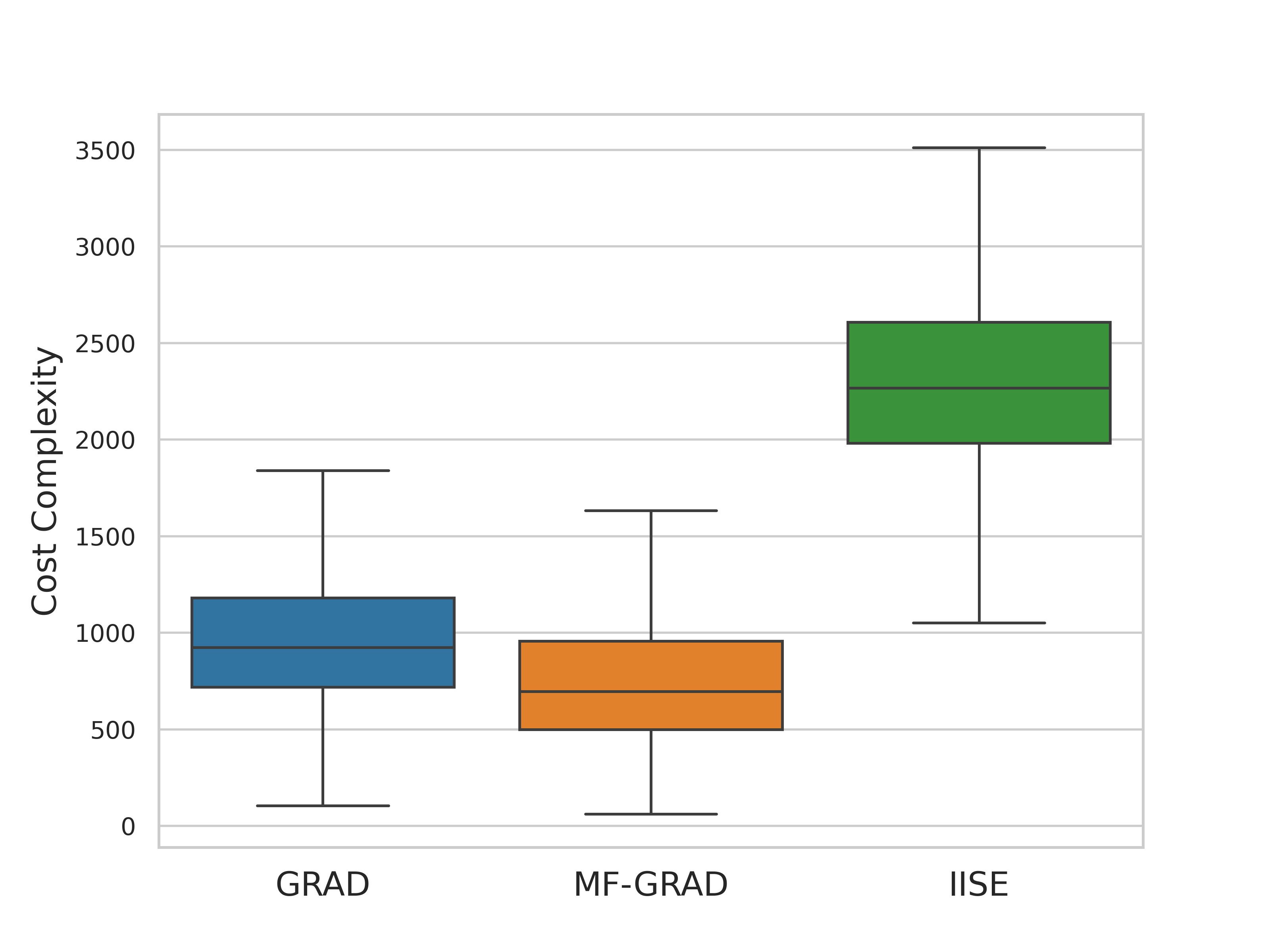} % replace with your image file
        \caption{Empirical cost complexity for $1000$ runs times with $\delta=0.01$ on the $5 \times 2$ multi-fidelity bandit.}
        \label{fig:res-2-v2}
    \end{minipage}
\end{figure}

\begin{figure}[t]
        \centering
        \includegraphics[width=8cm]{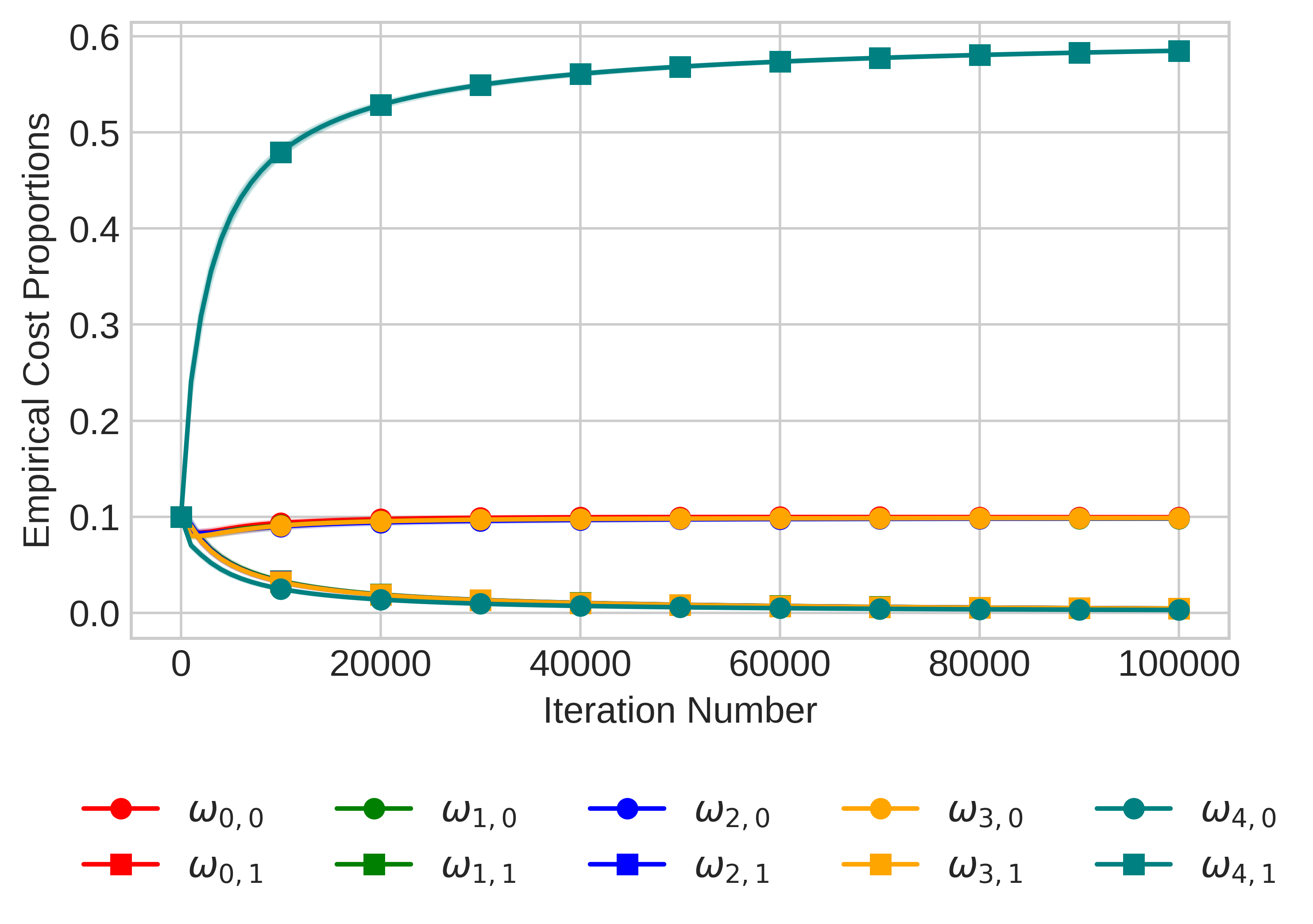} % replace with your image file
        \caption{Empirical cost proportions of MF-GRAD for $100000$ iterations on the $5 \times 2$ bandit model. Results are average over $100$ runs and shaded area report $95\%$ confidence intervals. Empirical cost proportions of a certain arm are plotted with the same color. Cost proportions at fidelity $1$ and $2$ are visualized with a circle and a squared respectively.}
        \label{fig:speed-5x2-1}
\end{figure}

Furthermore, to illustrate the sub-optimality of IISE and GRAD from an intuitive perspective, we test our algorithm on a simple $5 \times 2$ instance that allows to easily understand why existing methods underperform MF-GRAD. Specifically, we consider $\mu_{i} = [0.4, 0.5]$ for all $i \in [4]$, $\mu_{5} = [0.5, 0.6]$, $\lambda=[0.5, 5]$, $\xi=[0.1, 0]$ and we report the cost complexity of the three algorithms in Figure~\ref{fig:res-2-v2}. In this case, we can prove that the optimal fidelity is sparse on fidelity $m=1$ for $i \in [4]$, and on fidelity $m=2$ for arm $5$. Furthermore, thanks to the symmetry of the problem, it is possible to show that $\omega^*_{i} = [0.09621, 0]$ for all $i \in [4]$, and $\omega^*_{5} = [0, 0.61516]$ (see Appendix \ref{app:main-exp-details}).
As one can see, IISE obtains the worst performance in this domain. The reason is that the concept of optimal fidelity on which IISE relies is sub-optimal (i.e., according to the design principle of IISE, the optimal fidelity is $m=2$ for all arms), and the algorithm, in practice, will discard sub-optimal arms using samples that have been collected only at fidelity $m=2$.
Nevertheless, this will only happen after a first period in which IISE tries to exploit (unsuccessfully) data at fidelity $m=1$. GRAD, on the other hand, obtains sub-optimal performances since although most of the budget should be spent on fidelity $2$ (as ${\omega^*}_{5,2} = 0.61516$), it never pulls the cheapest (and optimal) fidelity for arms $i \in [4]$. Finally, MF-GRAD, on the other hand, obtains the most competitive performance since, as learning progresses, its empirical cost proportions eventually approach the one prescribed by $\bm{\omega}^*$.
To verify this behavior, we removed the stopping rule from MF-GRAD, and let the algorithm run for $10^5$ iterations. In Figure~\ref{fig:speed-5x2-1}, we report the entire evolution of the cost proportions during learning. As one can appreciate, at the end of this process, the empirical cost proportions of MF-GRAD are approaching the one described by $\bm{\omega}^*$. \footnote{Furthermore, at the end of this period, we measured the distance between $\bm{\omega}^*$ and the empirical cost proportions $\bm{\omega}(10^{5})$; it holds that $||\bm{\omega}^* - \bm{\omega}(10^{5}) ||_2 \approx 0.031 \pm 0.0006$. The error has been estimated with $100$ independent runs, and $0.0006$ reports the $95\%$ confidence intervals.}. Finally, we also refer the reader to Appendix \ref{app:exp-details} for additional results (e.g., additional domains, smaller regimes of $\delta$) and further insights.

% Indeed, we recall that IISE runs in phases, one for each fidelity $m$, and starting from the cheapest one. During each phase $m$, IISE will try to perform arm elimination in a successive elimination style, using samples collected at fidelity $m$. Nevertheless, given the values of $\xi$ and $\bm{\mu}$ considered in this example, it is impossible to draw any conclusion on $\star$ looking only at fidelity $m=1$. 

% !TeX root = ../mf_paper.tex
\section{Conclusions}\label{sec:conclusion}

For fixed-confidence best arm identification in multi-fidelity bandits, we presented a lower bound on the cost complexity and an algorithm with a matching upper bound in the regime of high confidence. The algorithm uses features of the lower bound optimization problem in order to compute its updates efficiently.
Unlike prior work, it does not require any assumption or prior knowledge on the bandit instance. Our work also confirmed the existence in most $2 \times M$ bandits of an ``optimal fidelity'' to explore each arm in the asymptotic regime, and revealed that the intuitive such notions proposed in prior work were inaccurate.  

This raises the following questions: (i) does the sparsity pattern hold for $K \times M$ bandits, and, (ii) could the performance of the algorithm be eventually enhanced by exploiting this pattern? We conjecture that the estimating the optimal fidelities accurately may actually be harder than identifying the best arm, even in $2 \times M$ bandits. However, leveraging some sufficient conditions for $w_{a,m}^*=0$ (such as the ones given in Proposition~\ref{prop:null-weights}) to eliminate some fidelities and reduce the support of the forced exploration component of the algorithm seems a promising idea. A limitation of our current analysis is that it only provides asymptotic guarantees in the high confidence regime, although our experiments reveal good performance for moderate values of $\delta$. In future work, we will seek a better understanding of the moderate confidence regime \cite{simchowitz17theSimulator}. To this end, we may leverage some proof techniques from other works using online optimization that obtain finite-time bounds \cite{degenne2019non,degenne2020structure}. On the lower bound side, under the assumption that the sparsity pattern holds for $K \times M$ bandits, $C^*(\bm{\mu})$ would essentially scales with $K$ due to this pattern. An interesting question is whether there is a worse case $\mathcal{O}(KM)$ scaling in the moderate confidence regime, indicating that all fidelities do need to be explored at least a constant amount of times.

\begin{ack}
	This work was done while Riccardo Poiani was visiting the Scool team of Inria Lille. He acknowledges the funding of a MOB-LIL-EX grant from the University of Lille. Rémy Degenne and Emilie Kaufmann acknowledge the funding of the French National Research Agency under the project FATE (ANR22-CE23-0016-01) and the PEPR IA FOUNDRY project (ANR-23-PEIA-0003). Alberto Maria Metelli and Marcello acknowledge the funding of the European Union – Next Generation EU within the project NRPP M4C2, Investment 1.,3 DD. 341 -  15 march 2022 – FAIR – Future Artificial Intelligence Research – Spoke 4 - PE00000013 - D53C22002380006.
\end{ack}

\newpage

\bibliographystyle{plain}
\bibliography{biblio}

\newpage
%%%%%%%%%%%%%%%%%%%%%%%%%%%%%%%%%%%%%%%%%%%%%%%%%%%%%%%%%%%%

\appendix

% !TeX root = ../mf_paper.tex
\section{Table of Symbols}\label{app:table}

Table \ref{table:notation} reports a summary on the main symbols and the notation used throughout the paper.

\begin{table}[h!]
  \caption{Notation}
  \label{table:notation}
  \centering
  \begin{tabular}{ll}
    \toprule
    Symbol &  Meaning \\
    \midrule
    $K,M$										& Number of arms and number of fidelity \\
    $\delta \in (0,1)$							& Maximum risk parameter \\
    $\tau_\delta$								& Stopping time of an algorithm \\
    $c_{\tau_\delta}$							& Cost incurred at the stopping time $\tau_\delta$ \\
	$a_{\star}(\bm{\mu})$						& $\argmax_{a \in [K]} \mu_{a,M}$. Often denoted simply by $\star$ \\   	
  	$\hat{a}_{\tau_\delta}$						& Arm recommended by the algorithm when it stops \\
    $\bm{\mu}$ 									& Bandit model  \\			
	$\mu_{i,m}$ 									& Mean of the $i$-th arm at fidelity $m$ within bandit model $\bm{\mu}$ \\
	$\xi_m$  									& Precision of fidelity $m$, i.e., $\max_{i \in [K]}|\mu_{i,m} - \mu_{i,M} | \le \xi_m$ \\
	$\lambda_m$									& Cost incurred for gathering samples at fidelity $m$ \\	
	$\Theta$                  					& Set of possible means in the exponential family \\						
	$\mu_a \in \textup{MF}$						& Arm $a$ satisfies the multi-fidelity constraints\\ 
	$\bm{\hat{\mu}}(t)$							& Empirical bandit model at time $t$ \\ 
	$\mathcal{M}_{\textup{MF}}$                  & Set of multi-fidelity bandit models \\
	$\mathcal{M}_{\textup{MF}}^*$                  & Set of multi-fidelity bandit models with a unique optimal arm \\
	$d(p,q)$										& KL divergence between two distributions with means $p$,$q$ in the exponential family \\
	$d^+(p,q),d^-(p,q)$								& $d^+(p,q) =d(p,q)\bm{1}\left\{ p \le q \right\}$, $d^-(x,y) = d(p,q)\bm{1}\left\{ p \ge q \right\}$ \\
	$v(y)$										& Variance of the distribution in the exponential family with mean parameter $y$ \\
	$C^*(\bm{\mu})^{-1}$ 						& Expression that characterizes the lower-bound on the cost-complexity \\
	$\bm{\omega}, \bm{\pi}$						& Vector of cost and pull proportions respectively \\  
	$\bm{\omega}^*$								& $\bm{\omega}^* \in \argmax_{\bm{\omega} \in \Delta_{K \times M}} F(\bm{\omega}, \bm{\mu})$ \\
	$f_{i,j}(\bm{\omega}, \bm{\mu})$ 			& Dissimilarity between arms $i$ and $j$ defined in Equation \eqref{eq:char-time} \\
	$F(\bm{\omega}, \bm{\mu})$					& $\max_{i \in [K]} \min_{j \ne i} f_{i,j}(\bm{\omega}, \bm{\mu})$ \\ 
	$\Delta_{K \times M}^*(\bm{\mu})$			& Set of optimal oracle weights $\bm{\omega}^*$ for the multi-fidelity bandit model $\bm{\mu}$ \\
	$\widetilde{\mathcal{M}}_{\textup{MF}}$ 	    & Subset of multi-fidelity bandit models for which there exists a non-sparse optimal allocation $\bm{\omega}^*$ \\
	$\overline{\bm{\omega}}$						& Uniform $KM$-dimensional vector $((KM)^{-1}, \dots, (KM)^{-1})$ \\
	$G > 0$										& Clipping constant in Algorithm \ref{alg:grad} \\
	$\alpha_t, \gamma_t$							& Learning rate and forced exploration rate respectively \\ 
	$\bm{C}(t)$									& Vector whose $(a,m)$-th dimension is $C_{a,m}(t)$ \\
	$\bm{\omega}(t)$								& Vector of empirical cost proportions, namely $\omega_{a,m}(t) =C_{a,m}(t) (\sum_{i \in [K]} \sum_{j \in [M]} C_{i,j}(t))^{-1}$ \\
	$\psi^*_i$ 									& Minimizer of Equation \eqref{eq:fij_non_standard} \\	
	$\eta^*_{i,j}$ 								& Minimizer of Equation \eqref{eq:fij_standard} \\
	$\overline{k}_{i,m}(\eta)$					& $\mathbf{1}\left\{ \eta \ge \mu_{i,m}+\xi_m \right\}$ \\ 
	$\underline{k}_{i,m}(\eta)$					& $\mathbf{1}\left\{ \eta \le \mu_{i,m}-\xi_m \right\}$ \\ 
	\bottomrule
  \end{tabular}
\end{table}

% !TeX root = ../mf_paper.tex

\section{Cost complexity lower bound: proofs and derivations}\label{app:proof-sec-lb}

\subsection{Proof of Theorem \ref{prop:lb}}
\lb*
\begin{proof}
Consider $\delta \in (0,1)$, a multi-fidelity bandit model $\bm{\mu}$ and an alternative instance $\bm{\theta} \in \textup{Alt}(\bm{\mu})$ where $\textup{Alt}(\bm{\mu}) = \bigcup_{i \ne \star} \left\{ \bm{\theta} \in \mathcal{M}_{\textup{MF}}: \theta_{a,M} > \theta_{\star,M} \right\}$. Then, by applying Lemma 1 in \cite{kaufmann2016complexity}, we can directly connect the expected number of draws of each arm to the KL divergence of the two multi-fidelity bandit models. More specifically, we have that:
\begin{align}\label{eq:prop-lb-eq-1}
    \sum_{a \in [K]} \sum_{m \in [M]} \E_{\bm{\mu}}[N_{a,m}(\tau_{\delta})] d(\mu_{a,m}, \theta_{a,m}) \ge \textup{kl}(\delta, 1-\delta).
\end{align}
Then, similarly to Theorem 1 in \cite{garivier2016optimal}, we now proceed by applying Equation \eqref{eq:prop-lb-eq-1} with all the alternative models $\bm{\theta} \in \textup{Alt}(\bm{\mu})$. Specifically, we have that:
\begin{align*}
    \textup{kl}(\delta, 1-\delta) & \le \inf_{\bm{\theta} \in \textup{Alt}(\bm{\mu})}\sum_{a \in [K]} \sum_{m \in [M]} \E_{\bm{\mu}}[N_{a,m}(\tau_\delta)] d(\mu_{a,m}, \theta_{a,m}) \\ 
    & = \E_{\bm{\mu}}[c_{\tau_\delta}] \inf_{\bm{\theta} \in \textup{Alt}(\bm{\mu})}\sum_{a \in [K]} \sum_{m \in [M]} \frac{\E_{\bm{\mu}}[\lambda_m N_{a,m}(\tau_\delta)]}{\E_{\bm{\mu}}[c_{\tau_\delta}]} \frac{d(\mu_{a,m}, \theta_{a,m})}{\lambda_m} \\ 
    & \le \E_{\bm{\mu}}[c_{\tau_\delta}] \sup_{\bm{\omega} \in \Delta_{K \times M}} \inf_{\bm{\theta} \in \textup{Alt}(\bm{\mu})}\sum_{a \in [K]} \sum_{m \in [M]} \omega_{a,m} \frac{d(\mu_{a,m}, \theta_{a,m})}{\lambda_m} \\ 
    & = \E_{\bm{\mu}}[c_{\tau_\delta}] \sup_{\bm{\omega} \in \Delta_{K \times M}} \min_{a \ne \star} \inf_{\substack{\bm{\theta} \in \mathcal{M}_{\textup{MF}}: \\ \theta_{a,M} > \theta_{\star,M}}} \sum_{i,m} \omega_{i,m} \frac{d(\mu_{i,m}, \theta_{i,m})}{\lambda_m} \\ 
    & \overset{(a)}{=} \E_{\bm{\mu}}[c_{\tau_\delta}] \sup_{\bm{\omega} \in \Delta_{K \times M}} \min_{a \ne \star} \inf_{\substack{\theta_{a} \in\MF, \theta_{\star} \in \MF : \\ \theta_{a,M} > \theta_{\star,M}}} \sum_{i \in \{\star,a\},m \in [M]} \omega_{i,m} \frac{d(\mu_{i,m}, \theta_{i,m})}{\lambda_m} \\ 
    & = \E_{\bm{\mu}}[c_{\tau_\delta}] \sup_{\bm{\omega} \in \Delta_{K \times M}} \min_{a \ne \star} f_{\star,a}(\bm\omega,\bm\mu) \\ 
    & = \E_{\bm{\mu}}[c_{\tau_\delta}] C^*(\bm{\mu})^{-1},
\end{align*}
where in $(a)$ we use that as $\bm\mu \in \cM_{\MF}$, the minimum in $\bm\theta$ does not change any arm $i \notin \{\star,a\}$. Finally, we lower bound $\textup{kl}(\delta, 1-\delta)$ with $\log\left(\tfrac{1}{2.4\: \delta}\right)$, thus concluding the proof.
\end{proof}

\subsection{Comparison with existing lower bound}\label{app:compare-lb}
In this section, we provide a comparison with the existing lower bound for the MF-BAI setting. In the following, we restrict our attention to bandits with Gaussian arms with variance $1/2$. We assume for simplicity of the exposition that $\star = 1$ and that $\mu_{1,M} > \mu_{2,M} \ge \dots \ge \mu_{K,M}$. Given this setup, we begin by recalling Theorem 1 in \cite{poiani2022multi}.

\begin{theorem}[Theorem 1 in \cite{poiani2022multi}]
Consider any multi-fidelity bandit model $\bm{\mu}$ with Gaussian arms with variance $1/2$. Then, for any $\delta$-correct algorithm and $\delta \le 0.15$ it holds that $\E_{\bm{\mu}}[c_{\tau_\delta}] / \log\left( (2.4\delta)^{-1} \right)$ is lower bounded by:
\begin{align*}
\min_{\substack{m \in [M]: \\ \mu_{1,m} > \mu_{2,M} + \xi_m}} \frac{\lambda_m}{(\mu_{1,m} - (\mu_{2,M} + \xi_m))^2} + \sum_{i=1}^K \min_{\substack{m \in [M]: \\ \mu_{1,M} - \xi_m > \mu_{i,m}}}\frac{\lambda_m}{(\mu_{i,m} - (\mu_{1,M} - \xi_m))^2}.
\end{align*}
\end{theorem}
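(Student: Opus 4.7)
The plan is to apply the standard change-of-measure argument (Lemma 1 of \cite{kaufmann2016complexity}) but with a separate ``one-arm'' alternative for each arm rather than taking the infimum over a joint alternative as in the proof of Theorem \ref{prop:lb}. Since those one-arm alternatives are much coarser, the resulting bound will be strictly weaker than Theorem \ref{prop:lb}, which is precisely the point of recording it in this appendix. Throughout, I use that for Gaussian arms of variance $1/2$ the KL divergence reduces to $d(x,y)=(x-y)^2$, so the transportation lemma reads, for every $\bm\theta\in\textup{Alt}(\bm\mu)$,
\begin{align*}
\sum_{a,m}\E_{\bm\mu}[N_{a,m}(\tau_\delta)](\mu_{a,m}-\theta_{a,m})^2 \ge \log\!\left(\tfrac{1}{2.4\delta}\right).
\end{align*}

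For each suboptimal arm $i\in\{2,\dots,K\}$, I would construct $\bm\theta^{(i)}$ that coincides with $\bm\mu$ on every arm $j\neq i$ and modifies arm $i$ as follows: set $\theta^{(i)}_{i,M}=\mu_{1,M}+\epsilon$ and, for each $m<M$, let $\theta^{(i)}_{i,m}$ be the projection of $\mu_{i,m}$ onto $[\theta^{(i)}_{i,M}-\xi_m,\theta^{(i)}_{i,M}+\xi_m]$. By construction $\bm\theta^{(i)}\in\mathcal M_{\textup{MF}}$ and, since $\mu_{i,M}<\mu_{1,M}$, we have $\theta^{(i)}_{i,M}>\theta^{(i)}_{1,M}$, so $\bm\theta^{(i)}\in\textup{Alt}(\bm\mu)$. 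Letting $\epsilon\downarrow 0$, the projection contributes $(\mu_{i,m}-(\mu_{1,M}-\xi_m))^2$ at those fidelities where $\mu_{i,m}<\mu_{1,M}-\xi_m$, and $0$ otherwise. Substituting this into the transportation inequality and using the elementary bound $\sum_m a_m x_m\le \max_m(a_m/\lambda_m)\sum_m\lambda_m x_m$ with $x_m=\E_{\bm\mu}[N_{i,m}(\tau_\delta)]$ gives
\begin{align*}
\sum_m\E_{\bm\mu}[C_{i,m}(\tau_\delta)] \ge \log\!\left(\tfrac{1}{2.4\delta}\right)\,\min_{m:\mu_{1,M}-\xi_m>\mu_{i,m}}\frac{\lambda_m}{(\mu_{i,m}-(\mu_{1,M}-\xi_m))^2}.
\end{align*}

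A symmetric one-arm construction for the optimal arm $1$, namely $\theta^{(1)}_{1,M}=\mu_{2,M}-\epsilon$ followed by the $\xi_m$-ball projections, makes arm $2$ the best and yields the twin inequality with denominator $(\mu_{1,m}-(\mu_{2,M}+\xi_m))^2$ under the constraint $\mu_{1,m}>\mu_{2,M}+\xi_m$. Summing the $K$ resulting inequalities and using $\E_{\bm\mu}[c_{\tau_\delta}]=\sum_{i,m}\E_{\bm\mu}[C_{i,m}(\tau_\delta)]$ gives the statement, the $i=1$ contribution of the written sum $\sum_{i=1}^K$ being vacuous because $\mu_{1,M}-\xi_m>\mu_{1,m}$ is incompatible with the MF constraint on arm $1$.

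The main technical point is to check that each $\bm\theta^{(i)}$ does belong to $\mathcal M_{\textup{MF}}$ after the $\xi_m$-ball projection and that the $\epsilon\downarrow 0$ limit can be passed inside the transportation inequality (both are standard). The looseness relative to Theorem \ref{prop:lb} comes from two slack steps in this plan: fixing a single one-arm alternative per arm instead of taking the infimum over all joint alternatives, and applying the $\max_m(\cdot/\lambda_m)$ bound, which is tight only when the cost on each arm is concentrated at a single fidelity; this is exactly what produces the $\lambda_M/\lambda_1$ gap exhibited later in this appendix.
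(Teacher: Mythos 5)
This statement is quoted verbatim from prior work and the paper supplies no proof of it in this appendix, only the remark elsewhere that the earlier bound ``exhibits a particular point in the alternative''; your reconstruction via single one-arm alternatives ($\theta^{(i)}_{i,M}=\mu_{1,M}+\epsilon$ for each suboptimal arm, $\theta^{(1)}_{1,M}=\mu_{2,M}-\epsilon$ for the best arm, each followed by projection onto the $\xi_m$-balls and an application of the transportation lemma) is exactly that technique and is correct. The H\"older-type step $\sum_m a_m x_m\le\max_m(a_m/\lambda_m)\sum_m\lambda_m x_m$, the passage to the limit $\epsilon\downarrow 0$, and the vacuity of the $i=1$ term of the sum are all handled properly, so nothing essential is missing.
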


At this point, focus, for simplicity on the following $2 \times 2$ bandit model (but a trivial generalization holds for $K \times M$ bandits). We consider $\mu_{1,m} = \mu_{1,M} + \frac{\xi_m}{2}$ and $\mu_{2,m} = \mu_{2,M} - \frac{\xi_m}{2}$. Furthermore, suppose that $\mu_{1,M} = -\mu_{2,M}$. Then, let $\Delta \coloneqq \mu_{1,M} - \mu_{2,M}$. Suppose that $\Delta = {\xi_m}$, which yields $\mu_{1,M} = \frac{\xi_m}{2}$ and $\mu_{2,M} = -\frac{\xi_m}{2}$, and that
\begin{align*}
\frac{\Delta}{\Delta - \frac{\xi_m}{2}} \le \sqrt{\frac{\lambda_M}{\lambda_m}}.
\end{align*}
Since $\Delta = \xi_m$, this condition actually simplifies to $\lambda_M \ge 4 \lambda_m$.

Under these conditions it is possible to verify that the lower bound of \cite{poiani2022multi} is given by
\begin{align*}
\E_{\bm{\mu}}[c_{\tau_\delta}] \ge \frac{2\lambda_m}{\left(\Delta - \frac{\xi_m}{2} \right)^2} \log\left( \frac{1}{2.4\delta} \right) = \frac{8\lambda_m}{\Delta^2} \log\left( \frac{1}{2.4\delta} \right)  .
\end{align*}

At this point, consider, instead, the result that we presented in Theorem \ref{prop:lb} and consider a generic weight proportion $\bm{\omega}$. From Corollary \ref{corol:mu-in-the-model}, we know that:
\begin{align}\label{eq:comparison}
F(\bm{\omega}, \bm{\mu}) = f_{1,2}(\bm{\omega}, \bm{\mu}) = \inf_{\eta \in [\mu_{2,M}, \mu_{1,M}]} \sum_{m \in [M]} \omega_{1,m} \frac{d^-(\mu_{1,m}, \eta + \xi_m)}{\lambda_m} + \omega_{2,m} \frac{d^+(\mu_{2,m}, \eta - \xi_m)}{\lambda_m}.
\end{align}
Then, let $\bm{\omega}^{*,M}$ be the optimal weights restricted on the portion of the simplex in which $\omega_{1,m} = \omega_{2,m} = 0$. Then, let $\eta^{*,M}$ be the optimal solution of Equation \eqref{eq:comparison} when considering $\bm{\omega}^{*,M}$. Using the symmetry of the KL divergence for Gaussian distributions, it holds that $\eta^{*,M} = 0$ and $\omega^{*,M}_{1,2} = \omega^{*,M}_{2,2} = 0.5$. Then, for any $\bm{\omega}$ it holds that:
\begin{align*}
F(\bm{\omega}, \bm{\mu}) & \le \sum_{m \in [M]} \omega_{1,m} \frac{d^-(\mu_{1,m}, \eta^{*,M} + \xi_m)}{\lambda_m} + \omega_{2,m} \frac{d^+(\mu_{2,m}, \eta^{*,M} - \xi_m)}{\lambda_m} \\ & = \omega_{1,M} \frac{d(\mu_{1,M}, \eta^{*,M})}{\lambda_M} + \omega_{2,M} \frac{d(\mu_{2,M}, \eta^{*,M})}{\lambda_M} \\
&\le F(\bm{\omega}^{*,M}, \bm{\mu}),
\end{align*}
where in the second step, we have used the fact that $d(\mu_{2,m}, \eta^{*,M} - \xi_m) = d(\mu_{2,M} - \frac{\xi_m}{2}, -\xi_m) = 0$ and $d(\mu_{1,m}, \eta^{*,M} + \xi_m) = d(\mu_{1,M} + \frac{\xi_m}{2}, \xi_m) = 0$. In other words, we have shown that in this example the optimal allocation is sparse and on fidelity $M$. To conclude, we have that:
\begin{align*}
F(\bm{\omega}^{*,M}, \bm{\mu}) & = 0.5 * \frac{d(\frac{\xi_m}{2}, 0)}{\lambda_M} + 0.5 * \frac{d(-\frac{\xi_m}{2}, 0)}{\lambda_M} = \frac{d(\frac{\xi_m}{2}, 0)}{\lambda_M} = \frac{(\Delta / 2)^2}{\lambda_M} = \frac{\Delta^2}{4\lambda_M},
\end{align*}
which leads to:
\begin{align*}
\E_{\bm{\mu}}[c_{\tau_\delta}] \ge \frac{4 \lambda_M}{\Delta^2} \log\left( (2.4\delta)^{-1} \right).
\end{align*}

Under the assumptions on the problem, $ \frac{4 \lambda_M}{\Delta^2}$ is always larger than $ \frac{8 \lambda_m}{\Delta^2}$.
This result says that the ratio among the lower bounds can be of order $\lambda_M / \lambda_m$, which is arbitrarily large.

\subsection{Proof of Theorem \ref{theo:sparsity}}
In this section, we provide a formal proof on the sparsity of the optimal oracle allocation $\bm{\omega}^*$ for $2 \times M$ bandits. The proofs given in this section rely on results that are explained in Appendix \ref{app:grad-comp}.

At this point, in order to prove Theorem \ref{theo:sparsity}, we first introduce some intermediate results that will be used in the proving the theorem. Specifically, we begin by showing that, for each arm $a$, there always exists a fidelity $m$ such that $\omega^*_{a,m} > 0$ holds. Note that the following Lemma holds for any $K$.

\begin{lemma}\label{lemma:fidelity-existence}
Consider $\bm{\mu} \in \mathcal{M}_{\textup{MF}}$ and $\bm{\omega}^* \in \Delta^*_{K \times M}(\bm{\mu})$. Then, for all $a \in [K]$, there exists $m \in [M]$ such that $\omega^*_{a,m} > 0$.
\end{lemma}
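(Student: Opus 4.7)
}

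The plan is to argue by contradiction: assume that there exists an arm $a \in [K]$ with $\omega^*_{a,m} = 0$ for every $m \in [M]$, and derive that $F(\bm{\omega}^*, \bm{\mu}) = 0$, which contradicts the fact that $\sup_{\bm{\omega} \in \Delta_{K\times M}} F(\bm{\omega}, \bm{\mu}) > 0$ (so $\bm{\omega}^*$ cannot be a maximizer). The key observation driving the proof is that in the definition \eqref{eq:char-time} of $f_{i,j}(\bm{\omega},\bm{\mu})$, only the weights on arms $i$ and $j$ appear, and only the components of $\bm{\theta}$ on these two arms are allowed to move, under the constraints $\theta_i,\theta_j \in \MF$ and $\theta_{j,M} \ge \theta_{i,M}$. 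Since $\bm{\mu} \in \mathcal{M}_{\MF}$ (which I assume to have a unique best arm, as in the rest of Section~\ref{subsec:sparsity}), $F(\bm{\omega}^*, \bm{\mu}) = \min_{j\ne \star} f_{\star,j}(\bm{\omega}^*, \bm{\mu})$.

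I would split into two cases according to whether the ``empty'' arm $a$ is the best arm or not. \emph{Case 1:} $a \neq \star$. Look specifically at $f_{\star,a}(\bm{\omega}^*, \bm{\mu})$. Take $\theta_\star = \mu_\star$ (admissible since $\mu_\star \in \MF$) and $\theta_a \in \MF$ with $\theta_{a,M} \ge \mu_{\star,M}$ (e.g.\ the constant-in-$m$ profile $\theta_{a,m} = \mu_{\star,M}$ for all $m$, which satisfies the MF constraints with equality). The $\star$-summand vanishes because $d(\mu_{\star,m}, \mu_{\star,m}) = 0$, and the $a$-summand vanishes because $\omega^*_{a,m} = 0$. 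Hence $f_{\star,a}(\bm{\omega}^*, \bm{\mu}) = 0$, which forces $F(\bm{\omega}^*, \bm{\mu}) \le f_{\star,a}(\bm{\omega}^*, \bm{\mu}) = 0$. \emph{Case 2:} $a = \star$. For any $j \ne \star$, in $f_{\star,j}(\bm{\omega}^*, \bm{\mu})$ take $\theta_j = \mu_j \in \MF$ and $\theta_\star \in \MF$ with $\theta_{\star,M} \le \mu_{j,M}$ (e.g.\ the constant profile at $\mu_{j,M}$). The $j$-summand vanishes because $d(\mu_{j,m}, \mu_{j,m}) = 0$, and the $\star$-summand vanishes because $\omega^*_{\star,m} = 0$. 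Hence $f_{\star,j}(\bm{\omega}^*, \bm{\mu}) = 0$ for every $j \ne \star$, and again $F(\bm{\omega}^*, \bm{\mu}) = 0$.

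To close the argument, I would exhibit any $\bm{\omega}^\circ \in \Delta_{K\times M}$ with $F(\bm{\omega}^\circ, \bm{\mu}) > 0$, say the uniform allocation $\bm{\overline{\omega}}$. For each $j \ne \star$, $f_{\star,j}(\bm{\overline{\omega}}, \bm{\mu}) > 0$: indeed, the infimum in \eqref{eq:char-time} equals $0$ only if one can choose $\bm{\theta}$ with $\theta_\star = \mu_\star$, $\theta_j = \mu_j$ and $\theta_{j,M} \ge \theta_{\star,M}$, which would require $\mu_{j,M} \ge \mu_{\star,M}$, contradicting the uniqueness of the best arm. Hence $F(\bm{\overline{\omega}}, \bm{\mu}) > 0 = F(\bm{\omega}^*, \bm{\mu})$, contradicting $\bm{\omega}^* \in \Delta^*_{K\times M}(\bm{\mu})$.

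The only delicate point is verifying that the alternatives $\theta_\star, \theta_j, \theta_a$ used above genuinely lie in $\MF$ and satisfy the ordering constraint on the $M$-th fidelity; I would handle this by using constant-in-$m$ profiles (which trivially meet $|\theta_{\cdot,m} - \theta_{\cdot,M}| = 0 \le \xi_m$) inside $\Theta$, using that $\mu_{\star,M}$ and $\mu_{j,M}$ lie in the interval $\Theta$ of admissible means. No heavy computation is needed; the argument rests entirely on the two-arm structure of $f_{i,j}$ and the positivity of $F$ at a full-support allocation.
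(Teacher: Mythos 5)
Your proof is correct and follows essentially the same route as the paper: assume some arm has all-zero weight, exhibit a zero-cost feasible alternative in $f_{\star,a}$ (resp.\ in every $f_{\star,j}$ when $a=\star$) to force $F(\bm{\omega}^*,\bm{\mu})=0$, and contradict optimality by producing a full-support allocation with $F>0$. The paper delegates that last positivity step to Lemma~\ref{lemma:positive-F-basic} (which lower-bounds $f_{\star,j}$ by its fidelity-$M$ terms and reduces to a one-dimensional problem); adopting that route would also tighten your slightly informal claim that the infimum can equal $0$ only if it is attained at $\bm{\theta}=\bm{\mu}$.
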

\begin{proof}
We split the proof into two cases. First we consider $a \ne \star$, and proceed by contradiction. Consider $\bm{\omega}^* \in \Delta^*_{K \times M}(\bm{\mu})$, and suppose there exists $a \ne \star$ such that $\omega^*_{\star,m} = 0$ for all $m \in [M]$. In this case, however, we have that:
\begin{align}\label{eq:fidelity-existence}
F(\bm{\omega}^*, \bm{\mu}) \le f_{\star,a}(\bm{\omega}^*, \bm{\mu}) = \inf_{\substack{\theta_a \in \MF, \theta_{\star} \in \MF : \\ \theta_{a,M} \ge \theta_{\star,M}}} \sum_{m = 1}^M \omega^*_{\star,m} \frac{d(\mu_{\star,m}, \theta_{\star,m})}{\lambda_m} = 0,
\end{align}
where, in the first step we have used the definition of $F(\bm{\omega}^*, \bm{\mu})$, in the second one the fact that $\omega^*_{a,m} = 0$ for all $m \in [M]$, and in the last one we selected $\theta_{\star,m} = \mu_{\star,m}$ for all $m \in [M]$. Nevertheless, from Lemma \ref{lemma:positive-F-basic}, we know that, whenever $\omega_{i,M} > 0$ holds for all $i \in [K]$, then $F(\bm{\omega}, \bm{\mu}) > 0$ holds as well. Therefore, $\bm{\omega}^* \notin \Delta^*_{K \times M}$.

The proof for the case in which $i = \star$ follows identical reasoning.
\end{proof}

We then continue by proving that, at any optimal allocation $\bm{\omega}^*$, all the transportation costs $f_a(\bm{\omega}^*, \bm{\mu})$ are equal.

\begin{lemma}\label{lemma:transport-costs}
Consider $\bm{\mu} \in \mathcal{M}_{\textup{MF}}$ and $\bm{\omega}^* \in \Delta^*_{K \times M}(\bm{\mu})$. Then, for all $a, b$ such that $a \ne \star$ and $b \ne \star$ the following holds:
\begin{align*}
f_{\star,a}(\bm{\omega}^*,\bm{\mu}) = f_{\star,b}(\bm{\omega}^*, \bm{\mu}).
\end{align*}
\end{lemma}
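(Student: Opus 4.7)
My approach is to argue by contradiction, exploiting the fact that each $f_{\star,c}(\cdot,\bm\mu)$ is positively $1$-homogeneous in $\bm\omega$. Suppose toward a contradiction that there exist $a,b\in[K]\setminus\{\star\}$ with $f_{\star,a}(\bm\omega^*,\bm\mu)<f_{\star,b}(\bm\omega^*,\bm\mu)$. Since $\bm\mu\in\mathcal{M}_{\textup{MF}}$ gives $F(\bm\omega^*,\bm\mu)=\min_{c\neq\star}f_{\star,c}(\bm\omega^*,\bm\mu)$, I may assume without loss of generality that $a$ attains this minimum, so $f_{\star,a}(\bm\omega^*,\bm\mu)=F(\bm\omega^*,\bm\mu)<f_{\star,b}(\bm\omega^*,\bm\mu)$.

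I then build an explicit simplex-preserving perturbation $\bm\omega'$ of $\bm\omega^*$ that strictly increases $F$. Let $\alpha:=\sum_{m\in[M]}(\omega^*_{\star,m}+\omega^*_{a,m})$, which is strictly positive by Lemma~\ref{lemma:fidelity-existence}, and pick $m_b\in[M]$ with $\omega^*_{b,m_b}>0$, also guaranteed by Lemma~\ref{lemma:fidelity-existence}. For $\delta\in(0,\omega^*_{b,m_b}/\alpha)$, I set $\omega'_{i,m}=(1+\delta)\omega^*_{i,m}$ for $i\in\{\star,a\}$ and all $m$, $\omega'_{b,m_b}=\omega^*_{b,m_b}-\delta\alpha$, and leave all other coordinates equal to $\bm\omega^*$. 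A direct verification shows $\bm\omega'\in\Delta_{K\times M}$.

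The key observation is that, for each $c\neq\star$, the map $\bm\omega\mapsto f_{\star,c}(\bm\omega,\bm\mu)$ (i) depends only on the coordinates $\omega_{\star,\cdot}$ and $\omega_{c,\cdot}$, (ii) is positively $1$-homogeneous in $\bm\omega$, being an infimum of linear functions of $\bm\omega$, and (iii) is monotone non-decreasing in each of these coordinates. Using these facts in turn: for $c=a$, $f_{\star,a}(\bm\omega',\bm\mu)=(1+\delta)\,f_{\star,a}(\bm\omega^*,\bm\mu)>F(\bm\omega^*,\bm\mu)$, where the strict inequality relies on $F(\bm\omega^*,\bm\mu)>0$ (a consequence of Lemma~\ref{lemma:positive-F-basic} applied to an allocation concentrated on fidelity $M$, which lower bounds the optimum $F(\bm\omega^*,\bm\mu)$). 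For $c\notin\{\star,a,b\}$, only $\omega_{\star,\cdot}$ changed among the relevant coordinates and it increased, so monotonicity gives $f_{\star,c}(\bm\omega',\bm\mu)\ge f_{\star,c}(\bm\omega^*,\bm\mu)\ge F(\bm\omega^*,\bm\mu)$. For $c=b$, continuity of $f_{\star,b}$ combined with the strict inequality $f_{\star,b}(\bm\omega^*,\bm\mu)>F(\bm\omega^*,\bm\mu)$ yields $f_{\star,b}(\bm\omega',\bm\mu)>F(\bm\omega^*,\bm\mu)$ for all sufficiently small $\delta$. Taking the minimum over $c\neq\star$ gives $F(\bm\omega',\bm\mu)>F(\bm\omega^*,\bm\mu)$, contradicting the optimality of $\bm\omega^*$.

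The main obstacle is securing the strict increase of $f_{\star,a}$ under the perturbation. A naive single-coordinate push $\omega_{a,m_a}\mapsto\omega_{a,m_a}+\epsilon$ can fail to strictly increase $f_{\star,a}$ in degenerate cases (for example when some minimizing $\bm\theta^*$ of $f_{\star,a}$ satisfies $\theta^*_{a,m_a}=\mu_{a,m_a}$, so that the corresponding directional derivative vanishes). Scaling the entire $\{\star,a\}$-block of $\bm\omega^*$ and invoking positive homogeneity sidesteps any delicate super-differential analysis of $f_{\star,a}$ and delivers the required strict increase directly.
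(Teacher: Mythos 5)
Your overall strategy (perturb $\bm{\omega}^*$ to strictly increase $F$, using Lemma~\ref{lemma:fidelity-existence} to find a coordinate of $b$ to take mass from, and continuity to keep $f_{\star,b}$ above the old minimum) is the same as the paper's, and your use of positive $1$-homogeneity of $f_{\star,a}$ in the $\{\star,a\}$-block is a clean way to obtain the strict increase without appealing to strict monotonicity in $\omega_{a,M}$. However, there is a genuine gap: you only rescale the block of a \emph{single} arm $a$ attaining $\min_{c\ne\star} f_{\star,c}(\bm{\omega}^*,\bm{\mu})$. If the argmin contains another arm $c\notin\{a,b\}$ (possible as soon as $K\ge 4$), your monotonicity step only yields $f_{\star,c}(\bm{\omega}',\bm{\mu})\ge f_{\star,c}(\bm{\omega}^*,\bm{\mu})=F(\bm{\omega}^*,\bm{\mu})$, with no strict inequality; one cannot upgrade this to strict, since the $\star$-part of $f_{\star,c}$ can vanish at the minimizing $\eta$ (e.g.\ when $\omega^*_{\star,M}=0$ and the supported lower fidelities satisfy $\eta^*_{\star,c}+\xi_m\ge\mu_{\star,m}$), so increasing $\omega_{\star,\cdot}$ need not move $f_{\star,c}$ at all. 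You then only get $F(\bm{\omega}',\bm{\mu})\ge F(\bm{\omega}^*,\bm{\mu})$, which does not contradict the optimality of $\bm{\omega}^*$ (the maximizer need not be unique).

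The fix is exactly what the paper does: let $\mathcal{A}$ be the \emph{entire} set of minimizing suboptimal arms and perturb all of them at once. In your language, scale $\omega_{\star,\cdot}$ and $\omega_{c,\cdot}$ for every $c\in\mathcal{A}$ by $(1+\delta)$ and remove the corresponding mass from arms in $\mathcal{B}=([K]\setminus\{\star\})\setminus\mathcal{A}$ (nonempty since $b\in\mathcal{B}$); homogeneity then gives $f_{\star,c}(\bm{\omega}',\bm{\mu})=(1+\delta)f_{\star,c}(\bm{\omega}^*,\bm{\mu})$ for every $c\in\mathcal{A}$, and the rest of your argument goes through unchanged. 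For $K\le 3$ your proof is already complete as written.
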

\begin{proof}
We introduce the following notation:
\begin{align*}
& \mathcal{A} = \left\{ a \in [K] \setminus \{\star\}: a \in \argmin_{b\neq \star} f_b(\bm{\omega}^*,\bm{\mu}) \right\} \\ 
& \mathcal{B} =  ([K] \setminus \{\star\})\setminus \mathcal{A}.
\end{align*}
At this point, we proceed by contradiction. Suppose that $\mathcal{B} \ne \emptyset$. Then, for some sufficiently small $\epsilon > 0$, we define $\tilde{\bm{\omega}} \in \Delta_{K \times M}$ in the following way. For all $a \in \mathcal{A}$:
\begin{align*}
& \tilde{\omega}_{a,M} = \omega_{a,M}^* + \epsilon / |\mathcal{A}| \\
& \tilde{\omega}_{a,m} = \omega_{a,m}^* \quad \forall m < M.
\end{align*} 
For all $b \in \mathcal{B}$, instead:
\begin{align*}
& \tilde{\omega}_{b,m_b} = \omega^*_{b,m_b} - \epsilon / | \mathcal{B}| \\
& \tilde{\omega}_{b,m} = \omega^*_{b,m} \quad \forall m \ne m_b,
\end{align*}
where $m_b \in [M]$ is any fidelity such that $\omega^*_{b,m_b} > 0$ (which exists by Lemma~\ref{lemma:fidelity-existence}). 

Given this definition of $\tilde{\bm{\omega}}$, it is easy to see that $f_{\star,a}(\tilde{\bm{\omega}},\bm{\mu}) > f_{\star,a}(\bm{\omega}^*,\bm{\mu})$ for all $a \in \mathcal{A}$. This is a direct consequence of the fact that $f_{\star,a}(\cdot,\bm{\mu})$ is a strictly increasing function of $\omega_{a,M}$, which is apparent from its expression from its expression for $\bm\mu \in \mathcal{M}_{\textup{MF}}$ given in Corollary~\ref{corol:mu-in-the-model}  and the computation of its gradient (Lemma~\ref{lemma:derivative-case-1}). Moreover, due to similar arguments, it also holds that $f_{\star,b}(\tilde{\bm{\omega}},\bm{\mu}) \le f_{\star,b}(\bm{\omega}^*,\bm{\mu})$ for all $b \in \mathcal{B}$. 
Using the continuity of the functions $f$, for $\varepsilon$ small enough we further have $f_{\star,a}(\tilde{\bm{\omega}},\bm{\mu}) < f_{\star,b}(\tilde{\bm{\omega}},\bm{\mu})$ for all $a\in \mathcal{A}$ and $b\in \mathcal{B}$. This leads to $\min_{a \neq \star} f_{\star,a}(\bm{\omega}^*,\bm{\mu}) < \min_{a \neq \star} f_{\star,a}(\tilde{\bm{\omega}},\bm{\mu})$ which contradicts the optimality of $\bm{\omega}^*$.
% Therefore, in order to show that $\bm{\omega}^*$ is not an optimal solution, it remains to prove that $f_a(\bm{\omega}^*) \le f_b(\tilde{\bm{\omega}})$ for $a \in \mathcal{A}$ and $b \in \mathcal{B}$. Nevertheless, due to the continuity of the function $f$, this hold by picking $\epsilon$ sufficiently close to $0$, thus concluding the proof.
\end{proof}

We now continue by providing necessary conditions that characterize some key properties of the oracle weights $\bm{\omega}$, which follows from the expression of the gradient of $f_{\star,a}(\bm\omega,\bm\mu)$ with respect to $\bm\omega$ for $\bm\mu \in \cM_{\MF}$ (Lemma~\ref{lemma:derivative-case-1}). The following result is limited to $2 \times M$ bandits.

\begin{lemma}\label{lemma:w-star-at-eq}
Consider $\bm{\mu} \in \mathcal{M}$ and $\bm{\omega}^* \in \Delta_{2 \times M}(\bm{\mu})$. Then, for $a \ne \star$, the following holds:
\begin{align*}
& \frac{d^+(\mu_{a,m_1}, \eta^*_{\star,a}(\bm{\omega}^*) - \xi_{m_1})}{\lambda_{m_1}} = \frac{d^+(\mu_{a,m_2}, \eta^*_{\star,a}(\bm{\omega}^*) - \xi_{m_2})}{\lambda_{m_2}} ~ \forall m_1, m_2: \omega^*_{a, m_1}, \omega^*_{a, m_2} > 0 \\ 
& \frac{d^-(\mu_{\star,m_1}, \eta^*_{\star,a}(\bm{\omega}^*) + \xi_{m_1})}{\lambda_{m_1}} = \frac{d^-(\mu_{\star,m_2}, \eta^*_{\star,a}(\bm{\omega}^*) + \xi_{m_2})}{\lambda_{m_2}}  ~ \forall m_1, m_2: \omega^*_{\star, m_1}, \omega^*_{\star, m_2} > 0 \\ 
& \frac{d^+(\mu_{a,m_1}, \eta^*_{\star,a}(\bm{\omega}^*) - \xi_{m_1})}{\lambda_{m_1}} = \frac{d^-(\mu_{\star,m_2}, \eta^*_{\star,a}(\bm{\omega}^*) + \xi_{m_2})}{\lambda_{m_2}}  ~ \forall m_1, m_2: \omega^*_{a, m_1}, \omega^*_{\star, m_2} > 0
\end{align*}
\end{lemma}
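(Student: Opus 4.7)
Since $K=2$ and $\bm\mu \in \mathcal{M}_{\textup{MF}}^*$, let $a$ denote the unique arm distinct from $\star$. As noted just after Theorem~\ref{prop:lb}, for $\bm\mu \in \mathcal{M}_{\textup{MF}}$ we have $F(\bm\omega,\bm\mu) = \min_{j \neq \star} f_{\star,j}(\bm\omega,\bm\mu)$, and with a single non-optimal arm this collapses to $F(\bm\omega,\bm\mu) = f_{\star,a}(\bm\omega,\bm\mu)$. Hence $\bm\omega^*$ maximizes the concave function $\bm\omega \mapsto f_{\star,a}(\bm\omega,\bm\mu)$ on the simplex $\Delta_{2\times M}$, and the plan is simply to read the three claimed equalities off its first-order optimality conditions.

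To do this I would first extract a super-gradient of $f_{\star,a}$ at $\bm\omega^*$ using Lemma~\ref{lemma:derivative-case-1} (equivalently, Theorem~\ref{theo:sub-gradient} specialized to the pair $(\star,a)$ in the regime $\psi^*_\star \ge \psi^*_a$, which is automatic since $\bm\mu \in \mathcal{M}_{\textup{MF}}$ forces $\psi^*_k=\mu_{k,M}$ and $\mu_{\star,M}>\mu_{a,M}$). The multi-fidelity constraints also place $\eta^{*}_{\star,a}(\bm\omega^*) \in [\mu_{a,M},\mu_{\star,M}]$, so $\mu_{\star,m}\ge \eta^*_{\star,a}-\xi_m$ and $\mu_{a,m}\le \eta^*_{\star,a}+\xi_m$ for every $m$; this kills the complementary $d^{+}$ term in the $\star$-row and the complementary $d^{-}$ term in the $a$-row of that sub-gradient. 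One is left with
\[
\partial_{\omega_{j,m}} f_{\star,a}(\bm\omega^*,\bm\mu) =
\begin{cases}
\dfrac{d^{-}\bigl(\mu_{\star,m},\,\eta^{*}_{\star,a}(\bm\omega^{*})+\xi_{m}\bigr)}{\lambda_m}, & j=\star,\\[6pt]
\dfrac{d^{+}\bigl(\mu_{a,m},\,\eta^{*}_{\star,a}(\bm\omega^{*})-\xi_{m}\bigr)}{\lambda_m}, & j=a.
\end{cases}
\]

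The last step is to invoke the KKT condition for a concave maximization on the simplex: since $\bm\omega^*$ maximizes a concave function on $\Delta_{2\times M}$, there exists a Lagrange multiplier $\nu\in\mathbb{R}$, associated with $\sum_{j,m}\omega_{j,m}=1$, such that every coordinate $(j,m)$ with $\omega^*_{j,m}>0$ satisfies $\partial_{\omega_{j,m}} f_{\star,a}(\bm\omega^*,\bm\mu)=\nu$, while those with $\omega^*_{j,m}=0$ satisfy $\partial_{\omega_{j,m}} f_{\star,a}(\bm\omega^*,\bm\mu) \le \nu$. Writing out the equalities of partials for two fidelities in the support of $\omega^*_a$, two fidelities in the support of $\omega^*_\star$, and a cross-arm pair produces exactly the three identities of the lemma.

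The main obstacle will be the possible non-smoothness of $f_{\star,a}$ at $\bm\omega^*$: when $\eta^{*}_{\star,a}$ lies on a boundary of one of the indicator regions appearing in Lemma~\ref{lemma:unique-minimizer}, $f_{\star,a}$ is only super-differentiable. The expression above still gives a valid element of the super-differential, which is all KKT needs in the concave case. For a self-contained alternative, one can replace the KKT invocation by a perturbation argument in the spirit of Lemma~\ref{lemma:transport-costs}: if, say, the $(a,m_1)$-partial strictly exceeded the $(a,m_2)$-partial while both $\omega^*_{a,m_1},\omega^*_{a,m_2}>0$, transporting an $\varepsilon>0$ mass from $m_2$ to $m_1$ in the $a$-row would yield a feasible $\tilde{\bm\omega}$ with $f_{\star,a}(\tilde{\bm\omega},\bm\mu) > f_{\star,a}(\bm\omega^*,\bm\mu)$ to first order, contradicting optimality; symmetric swaps (within the $\star$-row, and across rows) handle the other two equalities.
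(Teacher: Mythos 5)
Your proposal is correct and follows essentially the same route as the paper: both reduce $F$ to the single function $f_{\star,a}$ for $K=2$, apply the KKT conditions of the concave maximization over the simplex, and read the three equalities off the gradient formula of Lemma~\ref{lemma:derivative-case-1} together with the simplifications valid for $\bm{\mu}\in\mathcal{M}_{\textup{MF}}$ (Corollary~\ref{corol:mu-in-the-model}). Your additional remarks on non-smoothness and the perturbation-based fallback go slightly beyond what the paper writes, but do not change the argument.
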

\begin{proof}
We begin by recalling the definition of $C^*(\bm{\mu})^{-1}$:
\begin{align*}
C^*{(\bm{\mu})}^{-1} = \sup_{\bm{\omega} \in \Delta_{K \times M}} \min_{a \ne \star}  f_{\star,a}(\bm{\omega}, \bm{\mu}),
\end{align*}
which, is a concave optimization problem with a non-empty feasible region. Therefore, we can apply the KKT conditions to study the properties of each local optimal point $\bm{\omega}^*$ for which the derivatives exist, i.e., from Theorem \ref{theo:sub-gradient} and Corollary \ref{corol:mu-in-the-model}, the ones for which the following condition hold:\footnote{We notice that a global optimum point clearly satisfies this condition.}
\begin{align}\label{eq:equilibrium-1}
\min_{a \ne \star} f_{\star,a}(\bm{\omega}) > 0.
\end{align}
We note that this step explicitely uses the fact that $K=2$. In this case, indeed, $F(\bm\mu, \cdot)$ is differentiable as the min over the different arms is attained at $a \ne \star$. Consequently, $F(\bm\mu, \bm\mu)$ rewrites as $f_{\star,a}(\bm{\omega}, \bm{\mu})$. Then, from the KKT conditions, we obtain the following system of inequalities:
\begin{align*}
	\begin{cases}
			-\frac{\partial}{\omega_{a,m}} f_{\star,a}(\bm{\omega}, \bm{\mu}) + c - b_{a,m} = 0 & \text{ } \forall m \in [M] \\
			-\frac{\partial}{\omega_{\star,m}} f_{\star,a}(\bm{\omega}, \bm{\mu}) + c - b_{\star, m} = 0 & \text{ } \forall m \in [M] \\
			b_{i,m} \omega^*_{i,m} = 0 & \text{ } \forall i \in \{ \star, a \}, \forall m \in [M]  \\
			b_{i,m} \ge 0 & \text{ } \forall i \in [K], \forall m \in [M] \\
			\omega_{i,m}^* \ge 0 & \text{ } \forall i \in [K], \forall m \in [M] \\
			\sum_{i,m} \omega^*_{i,m} = 1 \\
	\end{cases}.
\end{align*}
At this point, suppose that $\omega_{i_1,m_1} > 0$, $\omega_{i_2,m_2} > 0$ for some $i_1, i_2 \in \{\star, a\}$ and some $m_1, m_2 \in [M]$, then $b_{i_1,m_1} = 0$, $b_{i_2, m_2} = 0$. As a consequence, by applying Lemma~\ref{lemma:derivative-case-1} and the fact that $\bm{\mu} \in \mathcal{M}_{\textup{MF}}$ (i.e., see Corollary \ref{corol:mu-in-the-model}), the following equations holds:
\begin{align*}
& \frac{d^+(\mu_{a,m_1}, \eta^*_{\star,a}(\bm{\omega}^*) - \xi_{m_1})}{\lambda_{m_1}} = \frac{d^+(\mu_{a,m_2}, \eta^*_{\star,a}(\bm{\omega}^*) - \xi_{m_2})}{\lambda_{m_2}} ~ \forall m_1, m_2: \omega^*_{a, m_1}, \omega^*_{a, m_2} > 0 \\ 
& \frac{d^-(\mu_{\star,m_1}, \eta^*_{\star,a}(\bm{\omega}^*) + \xi_{m_1})}{\lambda_{m_1}} = \frac{d^-(\mu_{\star,m_2}, \eta^*_{\star,a}(\bm{\omega}^*) + \xi_{m_2})}{\lambda_{m_2}}  ~ \forall m_1, m_2: \omega^*_{\star, m_1}, \omega^*_{\star, m_2} > 0 \\ 
& \frac{d^+(\mu_{a,m_1}, \eta^*_{\star,a}(\bm{\omega}^*) - \xi_{m_1})}{\lambda_{m_1}} = \frac{d^-(\mu_{\star,m_2}, \eta^*_{\star,a}(\bm{\omega}^*) + \xi_{m_2})}{\lambda_{m_2}}  ~ \forall m_1, m_2: \omega^*_{a, m_1}, \omega^*_{\star, m_2} > 0,
\end{align*}
thus concluding the proof.
%Finally, to conclude the proof, it is sufficient to iterate these arguments for all $a \ne \star$. Indeed, from Lemma \ref{lemma:transport-costs}, we know that all sub-optimal arms will attain the minimum in Equation \eqref{eq:equilibrium-1} at a global optimum $\bm{\omega}^*$.
\end{proof}

At this point we are ready to prove Theorem~\ref{theo:sparsity}.

%\todoe{Not sure I see why}
%\rp{Proof has now changed.}

\sparsity*

\begin{proof}
Let us introduce some additional notation. Consider a subset of arm-fidelity pairs $\mathcal{X} \subseteq [2] \times [M]$, and define $\mathcal{G}( \mathcal{X} ) \subseteq \mathcal{M}$ as the subset of multi-fidelity bandit models $\bm{\mu}$ for which there exists $\bm{\omega}^* \in \Delta^*_{2 \times M}(\bm{\mu})$ such that, for all $(i,m) \in \mathcal{X}$, $\omega^*_{i,m} > 0$ holds.

Then, fix an arm $i \ne \star$, and any three fidelity $m_1, m_2, m_3 \in [M]$, and consider $\bm{\mu} \in \mathcal{G}(\{ (i,m_1), (i,m_2), (\star,m_3) \})$.\footnote{Similar arguments hold also for $i = \star$.} Then, from Lemma \ref{lemma:w-star-at-eq}, we know that the following condition holds:
\begin{align}\label{eq:sparsity-proof-eq-1}
\frac{d^+(\mu_{i, m_1}, \eta^*_{\star,i}(\bm{\omega}^*) - \xi_{m_1})}{\lambda_{m_1}} = \frac{d^-(\mu_{\star, m_3}, \eta^*_{\star,i}(\bm{\omega}^*) + \xi_{m_3})}{\lambda_{m_3}}.
\end{align} 
This, in turn, implies that $\eta^*_{\star,i}(\bm{\omega}^*)$ is uniquely identified as a function of $\mu_{i,m_1}$, $\mu_{\star,m_3}$, $\xi_{m_1}$, $\xi_{m_3}$, $\lambda_{m_1}$ and $\lambda_{m_3}$. Indeed, $d^+(\mu_{i, m_1}, \eta^*_{\star,i}(\bm{\omega}^*) - \xi_{m_1})$ is a strictly increasing function of $\eta^*_{\star,i}(\bm{\omega}^*)$, while $d^-(\mu_{\star, m_3}, \eta^*_{\star,i}(\bm{\omega}^*) + \xi_{m_3})$ is a strictly decreasing function of $\eta^*_{\star,i}(\bm{\omega}^*)$. Let $c_1 = \eta^*_{\star,i}(\bm{\omega}^*)$, and let $c_2 = \frac{d^+(\mu_{i, m_1}, \eta^*_{\star,i}(\bm{\omega}^*) - \xi_{m_1})}{\lambda_{m_1}}$. At this point, since $\omega^*_{i,m_2} > 0$ holds by definition, we also know, from Lemma \ref{lemma:w-star-at-eq}, that the following condition has to be satisfied:
\begin{align*}
\frac{d^+(\mu_{i,m_2}, c_1 - \xi_{m_2})}{\lambda_{m_2}} = c_2.
\end{align*}
Therefore, the value of $\mu_{i,m_2}$ is uniquely identified as a function of $\mu_{i,m_1}$, $\mu_{\star,m_3}$,$\xi_{m_1}$, $\xi_{m_3}$, $\lambda_{m_1}$ and $\lambda_{m_3}$.
That function is measurable (it's a combination of $d^+$, $d^-$ and their inverses), hence $\bm{\mu}$ lies on the graph of a measurable function, and such a graph has Lebesgue measure 0.
Therefore, $\mathcal{G}(\left\{(i,m_1), (i,m_2), (\star,m_3) \right\})$ has measure 0.
%It follows that the subset of bandit models $\mathcal{G}(\left\{(i,m_1), (i,m_2), (\star,m_3) \right\})$ is a subset of bandit models of dimension at most $KM - 1$. Therefore, since it is of dimension strictly smaller than $KM$, it has null measure. 
At this point, thanks to Lemma \ref{lemma:fidelity-existence}, we know that, for arm $\star$, there always exists at least a fidelity $m_3$ such that $\omega^*_{\star,m_3} > 0$. We thus have that
\begin{align*}
\mathcal{G}((i,m_1), (i,m_2)) \subseteq \bigcup_{m_3 \in [M]} \mathcal{G}((i,m_1), (i,m_2), (\star,m_3))
\: .
\end{align*}
Since $\mathcal{G}((i,m_1), (i,m_2))$ is contained in a set which is a countable union of null measure sets, it has null measure.

To conclude the proof, we notice that:
\begin{align*}
\widetilde{\mathcal{M}}_{\textup{MF}} \subseteq \bigcup_{i=1}^2 \bigcup_{m_1, m_2 \in [M]^2} \mathcal{G}(\left\{(i,m_1), (i,m_2) \right\}) \coloneqq \mathcal{Y}.
\end{align*}
The proof follows from the fact that (i) $\mathcal{Y}$ is a countable union of set of null measure (and, consequently, has null measure), and (ii) $\widetilde{\mathcal{M}}_{\textup{MF}} \subseteq \mathcal{Y}$.
\end{proof}

\subsection{Additional results on the sparsity of the oracle weights}\label{app:add-results-sparsity}

In this section, we present additional results on the sparsity of the oracle weights. Specifically:
\begin{itemize}
\item[(i)] We identify a specific class of multi-fidelity bandit models in which the optimal allocation is sparse. In particular, within this class of MF-bandit models, the optimal allocation have non-zero values only at the cheapest fidelity. 
\item[(ii)]We then provide sufficient conditions to determine whether some fidelity have zero weights at any optimal weight vector $\bm{\omega}^*$
\end{itemize}

We now proceed by constructing the class of multi-fidelity bandits that we mentioned in point (i) above. In this construction, we will consider Gaussian multi-fidelity bandits with variance $\frac{1}{2}$. Then, for any number of arms $K$ and fidelity $M$, we will denote with $\mathcal{A}_{KM}$, the set of Gaussian multi-fidelity bandits that satisfy the following construction. We start by building the means of the arms at the highest fidelity $M$. Specifically, we consider a generic $\mu_{\star,m} > 0$, and let $\mu_{a,M} = - \mu_{\star,m}$ for all $a \ne \star$. Then, for each fidelity $m < M$, and any values of $\lambda_m$ and $\xi_m$, we let $\mu_{i,m} = \mu_{i,M} - \xi_m$ for all $i \ne \star$, and $\mu_{\star,m} = \mu_{\star,M} + \xi_m$. Finally, to simplify some computations, we set $\sigma^2$ of each Gaussian distribution to $\frac{1}{2}$.

\begin{proposition}\label{prop:sparsity-example}
For all $\bm{\mu} \in \mathcal{A}_{KM}$, and any $\bm{\omega}^* \in \Delta^*_{K \times M}$, it holds that, for all $a \in [K]$ and all $m > 1$, $\omega^*_{a,m} = 0$.
\end{proposition}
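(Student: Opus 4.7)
My plan is to exploit the very specific algebraic structure of the family $\mathcal{A}_{KM}$ to reduce each transportation cost $f_{\star,a}(\bm\omega, \bm\mu)$ to a function of just two scalars per pair $(\star,a)$, and then argue by contradiction that any allocation with positive mass on a fidelity $m>1$ can be strictly improved, or else violates the equalization property of Lemma~\ref{lemma:transport-costs} at an optimum.

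Using Corollary~\ref{corol:mu-in-the-model}, the Gaussian identity $d(x,y)=(x-y)^2$, and the extremal choices $\mu_{\star,m}=\mu_{\star,M}+\xi_m$ and $\mu_{a,m}=\mu_{a,M}-\xi_m$ baked into $\mathcal{A}_{KM}$, the $\xi_m$ shifts inside $d^{\pm}(\mu_{\cdot,m},\eta\pm\xi_m)$ cancel exactly, yielding
\begin{align*}
f_{\star,a}(\bm\omega,\bm\mu) \;=\; \inf_{\eta\in\mathbb R}\bigl[\,w_\star(\bm\omega)\, d^-(\mu_{\star,M},\eta) + w_a(\bm\omega)\, d^+(\mu_{a,M},\eta)\,\bigr],
\end{align*}
where $w_i(\bm\omega) := \sum_{m=1}^M \omega_{i,m}/\lambda_m$. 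The infimum is attained in the interior of $(\mu_{a,M},\mu_{\star,M})$, and Lemma~\ref{lemma:fidelity-existence} ensures $w_i(\bm\omega^*)>0$ at any optimum, so $f_{\star,a}(\bm\omega,\bm\mu)$ is strictly increasing in each of $w_\star$ and $w_a$ in the relevant range.

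Next, given any $\bm\omega^*\in\Delta^*_{K\times M}(\bm\mu)$, I would introduce the consolidated allocation $\tilde{\bm\omega}$ defined by $\tilde\omega_{a,1}:=\sum_{m}\omega^*_{a,m}$ and $\tilde\omega_{a,m}:=0$ for $m>1$. Since $\lambda_1\leq\lambda_m$ for every $m$, one has $w_a(\tilde{\bm\omega})\geq w_a(\bm\omega^*)$, with strict inequality exactly when $\omega^*_{a,m}>0$ for some $m>1$. Combined with the strict monotonicity of $f_{\star,a}$, this implies that if $\star$ carries any weight outside fidelity $1$ then $f_{\star,a}(\tilde{\bm\omega},\bm\mu)>f_{\star,a}(\bm\omega^*,\bm\mu)$ for every $a\neq\star$, and if some $a\neq\star$ carries weight outside fidelity $1$ then the strict inequality holds at least at that $a$.

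The contradiction step then proceeds by setting $S:=\{a\in[K]:\sum_{m>1}\omega^*_{a,m}>0\}$ and assuming $S\neq\emptyset$. If $\star\in S$ or $S\supseteq[K]\setminus\{\star\}$, every transport cost $f_{\star,a}(\tilde{\bm\omega},\bm\mu)$ strictly exceeds $f_{\star,a}(\bm\omega^*,\bm\mu)$, so $F(\tilde{\bm\omega},\bm\mu)>F(\bm\omega^*,\bm\mu)$, contradicting optimality of $\bm\omega^*$. Otherwise $\star\notin S$ and there exists $a'\in[K]\setminus(\{\star\}\cup S)$, giving $f_{\star,a'}(\tilde{\bm\omega},\bm\mu)=f_{\star,a'}(\bm\omega^*,\bm\mu)$; the minimum over $a\neq\star$ is thus preserved, making $\tilde{\bm\omega}$ optimal as well, yet $f_{\star,a}(\tilde{\bm\omega},\bm\mu)$ is strictly larger for $a\in S$ and unchanged elsewhere, which contradicts Lemma~\ref{lemma:transport-costs} forcing all transport costs at an optimum to coincide. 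The main subtlety is precisely this last subcase, where a direct comparison of objective values is inconclusive and one must invoke the necessary equalization of transport costs at any optimum to rule out $\tilde{\bm\omega}$; once the algebraic cancellation built into $\mathcal{A}_{KM}$ has been exposed, the remainder is essentially routine.
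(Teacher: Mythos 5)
Your proof is correct, and it takes a noticeably different (and in one respect more careful) route than the paper's. Both arguments start from the same key observation: by translation invariance of the Gaussian KL and the extremal construction of $\mathcal{A}_{KM}$, the shifts $\pm\xi_m$ cancel and $f_{\star,a}(\bm\omega,\bm\mu)$ depends on $\bm\omega$ only through the effective weights $w_i(\bm\omega)=\sum_m \omega_{i,m}/\lambda_m$, in which it is (strictly) increasing. From there the paper argues locally: it shaves $\epsilon$ off the offending coordinate $(i,m)$ with $m>1$ and redistributes $\epsilon/K$ onto fidelity $1$ of \emph{every} arm, asserting that all $f_{\star,a}$ strictly increase. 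As written this is slightly loose, since the net change to arm $i$'s own effective weight is $\epsilon\bigl(1/(K\lambda_1)-1/\lambda_m\bigr)$, which is only guaranteed positive when $\lambda_m> K\lambda_1$; the intended fix is presumably to redistribute within arm $i$ itself. You instead consolidate each arm's entire mass onto fidelity $1$ globally, which makes every $w_i$ weakly increase (strictly exactly for arms in $S$), and you then confront head-on the genuinely delicate subcase in which only some sub-optimal arms' transport costs increase, so that the min is unchanged and no direct improvement of $F$ is available; you resolve it by observing that $\tilde{\bm\omega}$ is then also optimal and invoking the equalization of transport costs at any optimum (Lemma~\ref{lemma:transport-costs}), which $\tilde{\bm\omega}$ visibly violates. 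The paper's local perturbation, when done within the correct arm, avoids that case analysis entirely; your version buys a cleaner global statement at the cost of needing Lemma~\ref{lemma:transport-costs}, and it handles explicitly a subtlety that the paper's "it is easy to see" glosses over.
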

\begin{proof}
To prove the result, starting from Corollary \ref{corol:mu-in-the-model}, it is sufficient to notice that, for all $\bm{\mu} \in \mathcal{A}_{KM}$ and all $a \ne \star$, $f_{\star,a}(\bm{\omega}, \bm{\mu})$ can be rewritten as:
\begin{align}\label{eq:example-1}
f_{\star,a}(\bm{\omega}, \bm{\mu}) = \inf_{\eta \in [\mu_{a,M}, \mu_{\star,m}]} \sum_{i \in \{ \star, a \}} \sum_{m = 1}^M \omega_{i,m} \frac{d(\mu_{i,M}, \eta)}{\lambda_m}.
\end{align}
Specifically, Equation \eqref{eq:example-1} follows directly from the symmetric property of KL divergence for Gaussian distributions, and by the construction of $\bm{\mu}$.
The proof then continue by contradiction. Suppose there exists $\bm{\omega}^*$ such that there exists $(i,m)$ (with $m > 1$) such that $\omega^*_{i,m} > 0$. By defining $\bm{\tilde{\omega}}$ as the vector which is equal to $\bm{\omega}^*$ except in the components $(i,m)$ and $(a,1)$ for all $a \in [K]$. More specifically, for a sufficiently small $\epsilon > 0$, we define $\tilde{\omega}_{i,m} = \omega^*_{i,m} - \epsilon$ and $\tilde{\omega}_{a,1} = \omega^*_{a,1} + \epsilon / (K)$ for all $a \in [K]$. Then, it is easy to see that $f_{\star,a}(\bm{\tilde{\omega}}, \bm{\mu}) > f_{\star,a}(\bm{\omega}^*, \bm{\mu})$ holds for all $a \ne \star$, thus contradicting the optimality of $\bm{\omega}^*$.
\end{proof}

Finally, we now provide sufficient conditions to determine whether some fidelity have zero weights at any optimal weight vector $\bm{\omega}^*$

\begin{proposition}\label{prop:null-weights}
Fix $a \ne \star$. Then, if $\mu_{a,m} + \xi_m \ge \mu_{\star,m}$, then it holds that $\omega_{a,m}^* = 0$. 
Furthermore, if $\mu_{\star,m} - \xi_m \le \mu_{j,M}$ for all $j \ne *$, then it holds that $\omega_{\star,m}^* = 0$.
\end{proposition}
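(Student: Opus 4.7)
The plan is to apply the KKT conditions for the simplex-constrained concave maximization $\max_{\bm\omega} F(\bm\omega,\bm\mu)$ at an arbitrary $\bm\omega^*\in\Delta_{K\times M}^*(\bm\mu)$. For each $b\ne\star$, introduce the minimizer $\eta^*_{\star,b}(\bm\omega^*)\in[\mu_{b,M},\mu_{\star,M}]$ of the infimum in the expression of $f_{\star,b}$ from Corollary~\ref{corol:mu-in-the-model}. An envelope computation (in the same spirit as Lemma~\ref{lemma:derivative-case-1}) shows that $\nabla f_{\star,b}(\bm\omega^*,\bm\mu)$ vanishes outside of rows $\star$ and $b$, with $(\nabla f_{\star,b})_{b,m'}=d^+(\mu_{b,m'},\eta^*_{\star,b}-\xi_{m'})/\lambda_{m'}$ and $(\nabla f_{\star,b})_{\star,m'}=d^-(\mu_{\star,m'},\eta^*_{\star,b}+\xi_{m'})/\lambda_{m'}$. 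Since Lemma~\ref{lemma:transport-costs} implies that all $f_{\star,b}(\bm\omega^*,\bm\mu)$ coincide with $F(\bm\omega^*,\bm\mu)=C^*(\bm\mu)^{-1}>0$, the sub-differential $\partial F(\bm\omega^*,\bm\mu)$ is the convex hull of the gradients $\nabla f_{\star,b}$. The KKT conditions therefore yield a convex combination $(\alpha_b)_{b\ne\star}\in\Delta_{K-1}$ and a scalar $\kappa$ such that the subgradient $g=\sum_b\alpha_b\,\nabla f_{\star,b}(\bm\omega^*,\bm\mu)$ satisfies $g_{b',m'}=\kappa$ whenever $\omega^*_{b',m'}>0$ and $g_{b',m'}\le\kappa$ otherwise. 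Positive homogeneity of $F$ pins down $\kappa=\langle g,\bm\omega^*\rangle=F(\bm\omega^*,\bm\mu)>0$.

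For the second statement, the hypothesis $\mu_{\star,m}-\xi_m\le\mu_{j,M}$ combined with the interval constraint $\eta^*_{\star,j}\ge\mu_{j,M}$ yields $\eta^*_{\star,j}+\xi_m\ge\mu_{\star,m}$, whence $d^-(\mu_{\star,m},\eta^*_{\star,j}+\xi_m)=0$ for every $j\ne\star$. Consequently the $(\star,m)$-entry of every $\nabla f_{\star,b}$ vanishes, so $g_{\star,m}=0$; since $\kappa>0$, the KKT stationarity condition is incompatible with $\omega^*_{\star,m}>0$, proving $\omega^*_{\star,m}=0$.

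For the first statement, I would argue by contradiction. Suppose $\omega^*_{a,m}>0$. Since only $\nabla f_{\star,a}$ has a nonzero $(a,m)$-entry, KKT forces $\alpha_a\,d^+(\mu_{a,m},\eta^*_{\star,a}-\xi_m)/\lambda_m=\kappa>0$, so $\eta^*_{\star,a}>\mu_{a,m}+\xi_m$. Combined with the hypothesis $\mu_{a,m}+\xi_m\ge\mu_{\star,m}$, this yields $\eta^*_{\star,a}>\mu_{\star,m}$, whence $d^-(\mu_{\star,m},\eta^*_{\star,a}+\xi_m)=0$. The main obstacle is to convert this observation into a contradiction, which (unlike in the second statement) cannot be done by inspecting a single gradient entry when $K\ge 3$. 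The intended route is an exchange argument that transfers the mass $\omega^*_{a,m}$ onto another active row-$a$ entry $(a,m^\dagger)$ (existing by Lemma~\ref{lemma:fidelity-existence}); leveraging the strict convexity of $d$ together with $\xi_m\ne\xi_{m^\dagger}$, one shows that the minimizer of the perturbed $f_{\star,a}$ shifts so that the $(a,m)$-contribution is strictly lost while the $(a,m^\dagger)$-contribution is not fully recovered, producing a strict increase of $f_{\star,a}$. Since row-$a$ entries do not enter any other $f_{\star,b}$, this contradicts the maximality of $\bm\omega^*$. Making this strict improvement rigorous is the delicate step, and is where the hypothesis on $(a,m)$ enters critically, by ensuring the $(\star,m)$-contribution remains zero throughout the exchange.
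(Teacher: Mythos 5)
Your KKT framework is a legitimate alternative to the paper's argument (the paper proves both claims by explicit perturbations of $\bm\omega^*$, moving mass onto the entries $(i,M)$, $i\ne\star$, and invoking strict monotonicity of $f_{\star,i}$ in $\omega_{i,M}$), and your treatment of the second claim is complete and correct: the hypothesis kills the $(\star,m)$-entry of every $\nabla f_{\star,b}$, so $g_{\star,m}=0\ne\kappa$ and stationarity forces $\omega^*_{\star,m}=0$. This is a clean, and in some ways tidier, route than the paper's exchange argument.

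The first claim, however, is left with a genuine gap, and the repair you sketch would not close it. Transferring the mass $\omega^*_{a,m}$ onto another row-$a$ entry $(a,m^\dagger)$ perturbs only $f_{\star,a}$, since no other $f_{\star,b}$ depends on row $a$. By Lemma~\ref{lemma:transport-costs} all the $f_{\star,b}(\bm\omega^*,\bm\mu)$ are already equal at the optimum, so strictly increasing $f_{\star,a}$ alone leaves $\min_{b}f_{\star,b}$ unchanged and yields no contradiction with the maximality of $\bm\omega^*$ when $K\ge 3$; any valid exchange must strictly increase \emph{all} the active $f_{\star,b}$ simultaneously, which is exactly why the paper redistributes the mass across $(i,M)$ for every $i\ne\star$. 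More importantly, the detour is unnecessary: the hypothesis gives $\eta^*_{\star,a}(\bm\omega)\le\mu_{\star,M}\le\mu_{a,m}+\xi_m$ for every $\bm\omega$ (this is the chain the paper uses), hence $d^+(\mu_{a,m},\eta^*_{\star,a}-\xi_m)=0$ identically and the $(a,m)$-entry of the subgradient vanishes --- so in your own KKT language $g_{a,m}=0\ne\kappa$ and you conclude exactly as for the second claim. Indeed you already derived $\eta^*_{\star,a}>\mu_{a,m}+\xi_m$ from the assumption $\omega^*_{a,m}>0$; juxtaposed with $\eta^*_{\star,a}\le\mu_{\star,M}\le\mu_{a,m}+\xi_m$ this is the contradiction, and no convexity or minimizer-shifting argument is needed.
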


\begin{proof}
Consider $a \ne \star$, and let us analyze $f_{\star,a}(\bm{\omega},\bm{\mu})$ for any $\bm{\omega} \in \Delta_{K \times M}$. 
More specifically, we recall from Corollary \ref{corol:mu-in-the-model}, that the only term in which $\omega_{a,m}$ plays a role is the following one:
\begin{align}\label{eq:null-w-example}
\omega_{a,m} \frac{d(\mu_{a,m}, \eta^*_{\star,a}(\bm{\omega}) - \xi_m)}{\lambda_m} \bm{1} \left \{ \eta^*_{\star,a}(\bm{\omega}) \ge \mu_{a,m} + \xi_m \right\}.
\end{align}
Nevertheless, since $\eta^*_{\star,a}(\bm{\omega}) \le \mu_{\star,m} \le \mu_{a,m} + \xi_m$, we have that Equation \eqref{eq:null-w-example} is always equal to $0$ for all $\bm{\omega} \in \Delta_{K \times M}$. To prove the result we now proceed by contradiction. Suppose that $\bm{\omega}^*$ is such that $\omega^*_{a,m} > 0$. Then, consider $\bm{\tilde{\omega}}$ as a vector which is equal to $\bm{\omega}^*$ except in the components $(a,m)$ and $(i,M)$ for all $i \ne \star$. More specifically, for a sufficiently small $\epsilon > 0$, we define $\tilde{\omega}_{a,m} = \omega^*_{a,m} - \epsilon$ and $\tilde{\omega}_{i,M} = \omega^*_{i,M} + \epsilon / (K-1)$ for all $i\neq\star$. At this point, by noticing that $f_{\star,i}(\bm{\omega}, \bm{\mu})$ is strictly increasing in $\omega_{i,M}$ (i.e., due to Theorem \ref{theo:sub-gradient} and the fact that $\bm{\mu} \in \mathcal{M}_{\textup{MF}}$), and since $f_{\star,a}(\bm{\omega}, \bm{\mu})$ is not affected by the value of $\omega_{a,m}$ (i.e., Equation \eqref{eq:null-w-example}), we have that $f_{\star,i}(\bm{\tilde{\omega}}, \bm{\mu}) > f_{\star,i}(\bm{\omega}^*, \bm{\mu})$ for all $i \ne *$, thus contradicting the optimality of $\bm{\omega}^*$.

To show that if $\mu_{\star,m} - \xi_m \le \mu_{j,M}$ for all $j \ne *$, then it holds that $\omega_{\star,m}^* = 0$, it is possible to follow identifical reasonings. The only difference is that the term $\omega_{\star,m}$ plays a role in each of the $(K-1)$-equations defining $F(\bm{\omega},\bm{\mu})$, namely:
\begin{align}\label{eq:null-w-example-2}
\omega_{\star,m} \frac{d(\mu_{\star,m}, \eta^*_{\star,a}(\bm{\omega}) + \xi_m)}{\lambda_m} \bm{1} \left \{ \eta^*_{\star,a}(\bm{\omega}) \le \mu_{1,m} - \xi_m \right\} \quad \forall a \ne \star.
\end{align}
Nevertheless, Equation \eqref{eq:null-w-example-2} is equal to $0$ for all $a \ne \star$ since $\eta^*_{\star,a}(\bm{\omega}) \ge \mu_{i,m} - \xi_m \ge \mu_{a,M}$ holds for all $\bm{\omega}$ and all $a \ne \star$. The proof then follows by an identical construction of an alternative weight vector $\bm{\tilde{\omega}}$ which increases the objective function.
\end{proof}

\subsection{Sub-optimality of "optimal" fidelity of previous works}\label{app:sub-opt}

In this section, we discuss how the concept of "optimal" fidelity of previous works (i.e., \cite{poiani2022multi} and \cite{wang2024multi}) fails to satisfy the notion of optimal fidelity that arises from the tighter lower bound that we presented in Section \ref{sec:lb}. In this section, we consider as example $2 \times 2$ multi-fidelity bandit models with Gaussian distributions. To ease the notation, we will consider $\mu_{1,M} > \mu_{2,M}$.

\subsubsection{Case 1}
We notice that \cite{poiani2022multi} provided the two concepts of optimal fidelity. The first one is from their Theorem~1. This same concept was then considered later in \cite{wang2024multi}. A fidelity $m$ is optimal for a certain arm $a \in [K]$ if it satisfies the following condition:
\begin{align}
& m^*_a \in \argmax_{m \in [M]} \frac{\mu_{1,M} - (\mu_{a,m} + \xi_m)}{\sqrt{\lambda_m}} \quad \textup{if } a \ne 1 \label{opt:case-1} \\
& m^*_a \in \argmax_{m \in [M]} \frac{(\mu_{a,m} - \xi_m) - \mu_{2,M}}{\sqrt{\lambda_m}} \quad \textup{if } a = 1 \label{opt:case-2}
\end{align}
Then, consider the following $2 \times 2$ example of multi-fidelity BAI problem. Let $\xi_1 = 0.1$, $\mu_{1,M} = 0.6$, $\mu_{1,m} = 0.65$, $\mu_{2,M} = 0.5$, $\mu_{2,m} = 0.45$ (where we use the notation $M=2$ for the maximal fidelity and $m=1$). Suppose, furthermore, that all distributions are Gaussian.
In this case, from Equation \eqref{opt:case-1}-\eqref{opt:case-2}, we have that $m^*_1 = 1$ and $m^*_2 = 1$ whenever the following conditions are satisfied:
\begin{align*}
& \frac{\mu_{1,M} - (\mu_{2,m} + \xi_m)}{\sqrt{\lambda_m}} > \frac{\mu_{1,M} - \mu_{2,M}}{\sqrt{\lambda_M}}  \\ 
& \frac{\mu_{1,m} - \xi_m - \mu_{2,M}}{\sqrt{\lambda_m}} > \frac{\mu_{1,M} - \mu_{2,M}}{\sqrt{\lambda_M}} .
\end{align*}
Plugging in the numerical values, we obtain in both cases
\begin{align*}
& \frac{0.05}{\sqrt{\lambda_m}} > \frac{0.1}{\sqrt{\lambda_M}},
%& \frac{0.05}{\sqrt{\lambda_m}} > \frac{0.1}{\sqrt{\lambda_M}} \quad \textup{if } a = 1.
\end{align*}
thus showing that, according to \cite{wang2024multi}, the optimal fidelity for both arms is $m=1$ whenever $\sqrt{\frac{\lambda_M}{\lambda_m}} > \frac{0.1}{0.05}$.

At this point, consider the expression of $F(\bm{\omega}, \bm{\mu}) = f_{1,2}(\bm{\omega}, \bm{\mu})$ in this particular example. Then, it is possible to show that, for any $\bm{\omega} \in \Delta_{2 \times 2}$ such that $\omega_{1,M} = \omega_{2,M} = 0$, then, $f_{1,2}(\bm{\omega}, \bm{\mu}) = 0$. Specifically, we have that $F(\bm{\omega}, \bm{\mu})$ is given by:
\begin{align*}
\inf_{\eta \in [\mu_{2,M}, \mu_{1,M}]} \omega_{1,m} \frac{d(\mu_{1,m}, \eta + \xi_m) \bm{1}\{ \eta \le \mu_{1,m} - \xi_m \}}{\lambda_m} + \omega_{a,m} \frac{d(\mu_{2,m}, \eta - \xi_m) \bm{1}\{ \eta \ge \mu_{a,m} + \xi_m \}}{\lambda_m}
\end{align*}
In turn, this is equal to:
\begin{align*}
\inf_{\eta \in [0.5, 0.6]} \omega_{1,m} \frac{d(0.55, \eta) \bm{1}\{ \eta \le 0.55 \}}{\lambda_m} + \omega_{a,m} \frac{d(0.55, \eta) \bm{1}\{ \eta \ge 0.55 \}}{\lambda_m},
\end{align*}
which is always $0$ for $\eta = 0.55$. 

On the other hand, Lemma \ref{lemma:positive-F-basic}, shows that any strategy that gives positive value to weights at fidelity $M=2$ obtains $F(\bm{\omega}, \bm{\mu}) > 0$.

\subsubsection{Case 2}
Furthermore, \cite{poiani2022multi} provided also the following concept of optimal fidelity which only holds for sub-optimal arms (see Definition 1 in \cite{poiani2022multi}).
A fidelity $m$ such that $\mu_{1,M} - \mu_{2,M} > 4\xi_m$ holds is said to be optimal for arm $a \ne 1$ if the following holds:
\begin{align}
\frac{\lambda_m}{(\mu_{1,M} - \mu_{a,M} - 4 \xi_m)^2} \le \min_{\bar{m} > m} \frac{\lambda_{\bar{m}}}{(\mu_{1,M} - \mu_{a,M} - 4\xi_m)^2}.
\end{align}

At this point, consider the following classes of multi-fidelity bandit models: $\mu_{2,m} = \mu_{2,M} - \xi_m$, $\mu_{1,m} = \mu_{1,M} + \xi_m$, $\mu_{1,M} - \mu_{2,M} \le 4\xi_m$.
In this case, from Equation \eqref{opt:case-2} it follows that the optimal fidelity for arm $2$ is always $M$. Nevertheless, since $\mu_{2,m} = \mu_{2,M} - \xi_m$, $\mu_{1,m} = \mu_{1,M} + \xi_m$, we know from Proposition \ref{prop:sparsity-example} $\omega_{1,M} = \omega_{2,M} = 0$.

% !TeX root = ../mf_paper.tex
\section{Algorithm analysis}\label{app:proof-algo}

\subsection{Gradient computation}\label{app:grad-comp}

We start by analyzing a salient feature of $f_{i,j}(\bm{\omega}, \bm{\mu})$ that holds for any $\bm{\mu} \in \Theta^{KM}$.

\begin{lemma}\label{prop:lb-one}
Consider $\bm{\mu} \in \Theta^{KM}$. Fix any $\bm{\omega} \in \Delta_{K \times M}$ and $i,j \in [K]$. Let $\bm{\theta}^*$ be the solution of the following optimization problem:
\begin{align*}
\bm{\theta}^* \in \argmin_{{\substack{\theta_i \in \MF,\theta_j \in \MF: \\ \theta_{j,M} \ge \theta_{i,M}}}} \sum_{m \in [M]} \omega_{j,m} \frac{d(\mu_{j,m}, \theta_{j,m})}{\lambda_m} + \sum_{m \in [M]} \omega_{i,m} \frac{d(\mu_{i,m}, \theta_{i,m})}{\lambda_m}.
\end{align*}
Furthermore, define for $k \in \{i,j\}$:
\begin{align*}
& \overline{{M}}_k(\bm{\omega}, \bm{\mu}, \bm{\theta}^*) \coloneqq \left\{ m \in [M-1] : \theta^*_{k,M} > \mu_{k,m} + \xi_m \right\} \\
& \underline{{M}}_k(\bm{\omega}, \bm{\mu}, \bm{\theta}^*) \coloneqq \left\{ m \in [M-1]: \theta^*_{k,M} < \mu_{k,m} - \xi_m \right\}.
\end{align*}
Then, for $k \in \{i,j\}$ we have that
\begin{align}
& \theta^*_{k,m} = \mu_{k,m} \quad \quad \quad \quad \quad \quad \quad \quad \forall m \in [M] \setminus \left( \overline{{M}}_k(\bm{\omega}, \bm{\mu}, \bm{\theta}^*) \cup \underline{{M}}_k(\bm{\omega}, \bm{\mu}, \bm{\theta}^*)\right) \label{eq:prop-lb-one-one} \\ 
& \theta^*_{k,m} = \theta^*_{k,M} - \xi_m \quad \quad \quad \quad \quad ~\forall m \in \overline{{M}}_k(\bm{\omega}, \bm{\mu}, \bm{\theta}^*) \label{eq:prop-lb-one-two} \\
& \theta^*_{k,m} = \theta^*_{k,M} + \xi_m \quad \quad \quad \quad \quad ~\forall m \in \underline{{M}}_k(\bm{\omega}, \bm{\mu}, \bm{\theta}^*) \label{eq:prop-lb-one-three}
\end{align}
In particular, 
\begin{eqnarray*}f_{i,j}(\bm w, \bm \mu) &=& \sum_{k \in \{i,j\}}\sum_{m \in [M]} \omega_{k,m} \frac{d^-(\mu_{k,m}, \theta^*_{k,M} + \xi_m) + d^+(\mu_{k,m},\theta^*_{k,M} - \xi_m)}{\lambda_m} \\
 & = & \min_{\theta_{j,M} \geq \theta_{i,M}}\sum_{k \in \{i,j\}}\sum_{m \in [M]} \omega_{k,m} \frac{d^-(\mu_{k,m}, \theta_{k,M} + \xi_m) + d^+(\mu_{k,m},\theta_{k,M} - \xi_m)}{\lambda_m}
\end{eqnarray*}
\end{lemma}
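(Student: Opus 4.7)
The strategy is to decouple the optimization problem by conditioning on the value of $\theta_{k,M}$ (for $k \in \{i,j\}$). Once $\theta_{i,M}$ and $\theta_{j,M}$ are fixed (subject to $\theta_{j,M} \geq \theta_{i,M}$), the objective separates as a sum of independent one-dimensional problems over $\theta_{k,m}$ for $m < M$, because the multi-fidelity constraint $|\theta_{k,m} - \theta_{k,M}| \leq \xi_m$ becomes a simple box constraint $\theta_{k,m} \in [\theta_{k,M} - \xi_m, \theta_{k,M} + \xi_m]$ that only couples $\theta_{k,m}$ with the already-fixed $\theta_{k,M}$.

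\textbf{Pointwise minimization.} For each $k \in \{i,j\}$ and each $m < M$, I would solve
\[
\min_{\theta \in [\theta^*_{k,M} - \xi_m,\ \theta^*_{k,M} + \xi_m]} d(\mu_{k,m}, \theta).
\]
Since $d(\mu_{k,m}, \cdot)$ is strictly convex with unique minimizer $\mu_{k,m}$ where it vanishes, the minimizer of the box-constrained problem is exactly the projection of $\mu_{k,m}$ onto the interval: it equals $\mu_{k,m}$ if $\mu_{k,m} \in [\theta^*_{k,M} - \xi_m, \theta^*_{k,M} + \xi_m]$, equals $\theta^*_{k,M} - \xi_m$ if $\mu_{k,m} < \theta^*_{k,M} - \xi_m$ (equivalently, $m \in \overline{M}_k$), and equals $\theta^*_{k,M} + \xi_m$ if $\mu_{k,m} > \theta^*_{k,M} + \xi_m$ (equivalently, $m \in \underline{M}_k$). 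This yields \eqref{eq:prop-lb-one-one}--\eqref{eq:prop-lb-one-three} directly, once one notes that the choices $\theta^*_{k,M}$ are themselves optimal by the envelope argument (i.e.\ if these pointwise identities failed at the global optimum, one could strictly improve the objective by re-projecting each $\theta^*_{k,m}$, contradicting optimality).

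\textbf{Rewriting via $d^+$ and $d^-$.} To obtain the second displayed identity, I would substitute the projection values back into the objective. On $m \in \overline{M}_k$ we have $\mu_{k,m} < \theta^*_{k,M} - \xi_m \leq \theta^*_{k,M} + \xi_m$, so $d(\mu_{k,m}, \theta^*_{k,M} - \xi_m) = d^+(\mu_{k,m}, \theta^*_{k,M} - \xi_m)$ while $d^-(\mu_{k,m}, \theta^*_{k,M} + \xi_m) = 0$; symmetrically on $m \in \underline{M}_k$ only the $d^-$ term survives; and on the complement both $d^\pm$ terms vanish. For $m = M$, we have $\xi_M = 0$ so $d^+(\mu_{k,M}, \theta^*_{k,M}) + d^-(\mu_{k,M}, \theta^*_{k,M}) = d(\mu_{k,M}, \theta^*_{k,M})$, which is exactly the contribution of the top-fidelity term. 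Combining, the objective at $\bm{\theta}^*$ collapses into the announced expression involving only the scalars $\theta^*_{k,M}$.

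\textbf{Reduction to a 2-variable problem.} Finally, since the remaining $\theta^*_{k,m}$ for $m<M$ have been optimized out and the multi-fidelity constraint is automatically satisfied by construction (by design the projected values lie within $\xi_m$ of $\theta^*_{k,M}$), minimizing over $\bm{\theta}$ subject to $\theta_i, \theta_j \in \MF$ and $\theta_{j,M} \geq \theta_{i,M}$ is equivalent to minimizing the reduced objective over the scalars $(\theta_{i,M}, \theta_{j,M})$ subject only to $\theta_{j,M} \geq \theta_{i,M}$, giving the last identity. The only subtle point, which I would double-check when writing the full proof, is that the minimization over each $\theta_{k,m}$ is well-posed on the closed interval (continuity of $d$ in its second argument on the relevant subset of $\Theta$), ensuring existence of $\bm{\theta}^*$; beyond that the argument is routine.
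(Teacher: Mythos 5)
Your proposal is correct and follows essentially the same route as the paper: fix $\theta_{k,M}$, observe that each $\theta_{k,m}$ for $m<M$ then faces an independent box constraint $[\theta_{k,M}-\xi_m,\theta_{k,M}+\xi_m]$, and use monotonicity/convexity of $d(\mu_{k,m},\cdot)$ to conclude that the optimizer is the projection of $\mu_{k,m}$ onto that interval, which yields \eqref{eq:prop-lb-one-one}--\eqref{eq:prop-lb-one-three} and, after substitution, the $d^+/d^-$ reformulation as a minimization over the two scalars $(\theta_{i,M},\theta_{j,M})$. Your write-up is in fact slightly more complete than the paper's, which leaves the "in particular" rewriting implicit.
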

\begin{proof}
We begin by proving Equation \eqref{eq:prop-lb-one-one}. To this end, it is sufficient to notice that, given a fixed $\theta^*_{k,M}$, it is possible to set $\theta^*_{k,m} \coloneqq \mu_{k,m}$, whenever the following condition is satisfied:
\begin{align}\label{eq:lb-prop-one-eq1}
    |\theta^*_{k,M} - \mu_{k,m}| \le \xi_m.
\end{align}
The condition is Equation \eqref{eq:lb-prop-one-eq1} is equivalent to requiring $m \in [M] \setminus \left( \overline{\mathcal{M}}_k(\bm{\omega}, \bm{\mu}, \bm{\theta}^*) \cup \underline{\mathcal{M}}_k(\bm{\omega}, \bm{\mu}, \bm{\theta}^*)\right)$, which concludes the first part of the proof.

We continue by proving Equation \eqref{eq:prop-lb-one-two}. Consider $m \in \overline{{M}}_k(\bm{\omega}, \bm{\mu}, \bm{\theta}^*)$, that is $\theta^*_{k,m} > \mu_{k,m} + \xi_m$. From this condition, it directly follows that $\theta^*_{k,m} > \mu_{k,m}$; therefore, since $d(\mu_{k,m}, x)$ is increasing in $x$, it follows that, in order to attain the argmin, we need to pick the smallest value of $\theta_{k,m}$ that satisfies the multi-fidelity constraint $|\theta^*_{k,M} - \theta_{k,m}|$, that is $\theta^*_{k,M} - \xi_m$.

The proof of Equation \eqref{eq:prop-lb-one-two} follows is almost identical to the one of Equation \eqref{eq:prop-lb-one-three}; it is sufficient to replace the definition of $\overline{M}_k(\bm{\omega}, \bm{\mu}, \bm{\theta}^*)$ with $\underline{M}_k(\bm{\omega}, \bm{\mu}, \bm{\theta}^*)$.
\end{proof}

At this point, we continue by analyzing in more detail the function $f_{i,j}(\bm{\omega}, \bm{\mu})$. 
 
% 
% \begin{lemma}\label{lemma:grad-not-in-mu-1}
% Consider $\bm{\mu} \in \mathbb{R}^{KM}$ and $\bm{\omega} \in \Delta_{K \times M}$. 
% Define for $i \in [K]$, \[\psi_i^* := \argmin_{\psi \in \bR}  \sum_{m=1}^{M}\omega_{i,m} \frac{d^-(\mu_{i,m}, \psi+ \xi_m) + d^+(\mu_{i,m}, \psi - \xi_m)}{\lambda_m}\]
% Then, the following holds:
% \begin{align*}
% f_{a,b}(\bm{\omega},\bm\mu)  = \left\{\begin{array}{cl}
% \sum_{k \in \{ a,b \}}\sum_{m=1}^{M}\omega_{k,m} \frac{d^-(\mu_{k,m}, \psi^*_k + \xi_m) + d^+(\mu_{k,m}, \psi^*_k - \xi_m)}{\lambda_m} & \text{ if } \psi^*_b > \psi^*_a \\ 
%  \inf_{\eta \in \mathbb{R} } \sum_{i \in \{a,b\}} \sum_{m =1}^{M} \omega_{i,m} \frac{d^-(\mu_{i,m}, \eta + \xi_m) + d^+(\mu_{i,m}, \eta - \xi_m)}{\lambda_m} & \text{ otherwise.}                                        
%                                         \end{array}
% \right.\;.
% \end{align*}
% \end{lemma}

\fij*

%\todoe{I moved this lemma to the main, should we update the notations in the proof (i,j,k) versus (a,b,i)?}

\begin{proof}
The proof follows by analyzing the definition of $f_{i,j}(\bm{\omega}, \bm{\mu})$. Consider $\bm{\theta}^*_i, \bm{\theta}^*_j$ that attaines the minimum in Equation \eqref{eq:char-time}. 
Then, there are two possibilities: either $\theta^*_{i,M} = \theta^*_{j,M}$ or $\theta^*_{j,M} > \theta^*_{i,M}$.

Suppose that $\theta^*_{j,M} > \theta^*_{i,M}$, then we notice that the optimization problem in $f_{i,j}(\bm{\omega}, \bm{\mu})$ is a 2D-convex optimization problem in the variables $\theta_{j,M},\theta_{i,M}$ (thanks to Lemma \ref{prop:lb-one}). Therefore, since the minimum of the constrained problem is such that $\theta_{j,M} > \theta_{i,M}$, than, by the convexity of the problem, this is also a minimum for the unconstrained problem, thus leading to:
\begin{align*}
f_{i,j}(\bm{\omega}, \bm{\mu}) = \inf_{\theta_i \in \MF, \theta_j \in \MF} \sum_{k \in \{i,j\}}\sum_{m \in [M]} \omega_{k,m} \frac{d(\mu_{k,m}, \theta_{k,m})}{\lambda_m}.
\end{align*}
At this point, we notice that the constraints in the previous optimization problem are only intra-arm. Therefore, we can rewrite $f_{i,j}(\bm{\omega}, \bm{\mu})$ as:
\begin{align*}
f_{i,j}(\bm{\omega}, \bm{\mu}) = \sum_{k \in \{ i, j \} } \inf_{\theta_k \in \MF} \sum_{m \in [M]}\omega_{k,m} \frac{d(\mu_{k,m}, \theta_{k,m})}{\lambda_m}.
\end{align*}
Furthermore, applying the same reasoning as in the proof of Lemma \ref{prop:lb-one}, we can further rewrite $f_{i,j}(\bm{\omega}, \bm{\mu})$ as follows:
\begin{align*}
f_{i,j}(\bm{\omega}, \bm{\mu}) & = \sum_{k \in \{ i, j \}} \inf_{\psi \in \mathbb{R}} \sum_{m \in [M]} \omega_{k,m} \frac{d^+(\mu_{k,m}, \psi - \xi_m) + d^-(\mu_{k,m}, \psi + \xi_m)}{\lambda_m} \\ & = \sum_{k \in \{ i,j \}}\sum_{m \in [M]}\omega_{k,m} \frac{d^-(\mu_{k,m}, \psi^*_k + \xi_m) + d^+(\mu_{k,m}, \psi^*_k - \xi_m)}{\lambda_m}.
\end{align*}
At this point, we notice that due to Lemma \ref{prop:lb-one} we know that $\theta^*_{j,M} = \psi^*_j$ and $\theta^*_{i,M} = \psi^*_i$, thus concluding the first part of the proof.

Consider now the case in which $\theta^*_{j,M} = \theta^*_{i,M}$ holds. Then, applying Lemma \ref{prop:lb-one}, and using $\theta^*_{i,M} = \theta^*_{j,M}$, we can rewrite $f_{i,j}(\bm{\omega}, \bm{\mu})$ as follows:
\begin{align*}
f_{i,j}(\bm{\omega}, \bm{\mu}) = \inf_{\eta \in \mathbb{R}} \sum_{k \in \left\{ i, j\right\}} \sum_{m=1}^{M} \frac{\omega_{k,m}}{\lambda_m} \left( d^+(\mu_{k,m}, \eta-\xi_m) + d^-(\mu_{k,m},\eta+\xi_m) \right),
\end{align*}
thus concluding the proof.
\end{proof}

Given this result, we recall that the definitions of 
\begin{align}
& \psi^*_i =\argmin_{\psi \in \mathbb{R}}\sum_{m \in [M]}\omega_{i,m} \frac{d^-(\mu_{i,m}, \psi + \xi_m) + d^+(\mu_{i,m}, \psi - \xi_m)}{\lambda_m} \label{eq:definition_psi} 
\\ 
& \eta_{i,j}^* = \argmin_{\eta \in \mathbb{R} } \sum_{k \in \{i,j\}} \sum_{m \in [M]} \omega_{k,m} \frac{d^-(\mu_{k,m}, \eta + \xi_m) + d^+(\mu_{k,m}, \eta - \xi_m)}{\lambda_m} 
\label{eq:definition_eta}.
\end{align}

\begin{corollary}\label{corol:mu-in-the-model}
Consider $\bm{\mu} \in \mathcal{M}_{\textup{MF}}$, and $a \in [K]$ such that $a \ne \star$. Then it holds that:
\begin{align*}
f_{\star,a}(\bm{\omega},\bm\mu)  &=&   \inf_{\eta \in [\mu_{a,M}, \mu_{\star,M}] } \sum_{i \in \{\star,a\}} \sum_{m = 1}^{M} \omega_{i,m} \frac{d^-(\mu_{i,m}, \eta + \xi_m) + d^+(\mu_{i,m}, \eta - \xi_m)}{\lambda_m} \\
& = & \inf_{\eta \in [\mu_{a,M}, \mu_{\star,M}] } \sum_{m = 1}^{M} \omega_{\star,m} \frac{d^-(\mu_{\star,m}, \eta + \xi_m)}{\lambda_m} + \omega_{a,m}\frac{d^+(\mu_{a,m}, \eta - \xi_m)}{\lambda_m}
\end{align*}
\end{corollary}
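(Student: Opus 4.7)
The plan is to invoke Lemma~\ref{lemma:grad-not-in-mu-1} with $i=\star$ and $j=a$, argue that being in $\cM_{\MF}$ forces us into the \emph{standard} branch (Equation~\eqref{eq:fij_standard}), then restrict the one-dimensional infimum to $[\mu_{a,M},\mu_{\star,M}]$ by a monotonicity argument on the tails, and finally observe that on that interval the multi-fidelity constraints kill two of the four families of terms in the integrand, yielding the second equality.

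First I would pin down $\psi^*_k$ for $\bm\mu \in \cM_{\MF}$. Since $|\mu_{k,m}-\mu_{k,M}|\le \xi_m$, the choice $\psi=\mu_{k,M}$ makes $\mu_{k,m}\le\mu_{k,M}+\xi_m$ and $\mu_{k,m}\ge \mu_{k,M}-\xi_m$, so by definition of $d^-$ and $d^+$ both summands in~\eqref{eq:definition_psi} vanish. Each summand being non-negative, this is a minimizer, and $\psi^*_k=\mu_{k,M}$. In particular $\psi^*_\star=\mu_{\star,M}>\mu_{a,M}=\psi^*_a$ (the strict inequality uses that $\bm\mu\in\cM_{\MF}^*$ so $\star$ is the unique best arm), hence the condition $\psi^*_j>\psi^*_i$ of Lemma~\ref{lemma:grad-not-in-mu-1} fails and we land in~\eqref{eq:fij_standard}:
\[
f_{\star,a}(\bm\omega,\bm\mu)=\inf_{\eta\in\mathbb R} g(\eta),\qquad g(\eta):=\!\!\sum_{k\in\{\star,a\}}\sum_{m=1}^M\omega_{k,m}\frac{d^-(\mu_{k,m},\eta+\xi_m)+d^+(\mu_{k,m},\eta-\xi_m)}{\lambda_m}.
\]

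Next I would localize the infimum inside $[\mu_{a,M},\mu_{\star,M}]$. For $\eta\le\mu_{a,M}$, the MF constraint gives $\mu_{k,m}\ge\mu_{k,M}-\xi_m\ge\mu_{a,M}-\xi_m\ge\eta-\xi_m$ for both $k\in\{\star,a\}$, so every $d^+$ term vanishes; each surviving $d^-(\mu_{k,m},\eta+\xi_m)$ is non-increasing in $\eta$ (the map $y\mapsto d(x,y)\mathbf 1\{x\ge y\}$ is non-increasing in $y$), so $g$ is non-increasing on $(-\infty,\mu_{a,M}]$. A symmetric argument for $\eta\ge\mu_{\star,M}$, using $\mu_{k,m}\le\mu_{k,M}+\xi_m\le\mu_{\star,M}+\xi_m\le\eta+\xi_m$, shows all $d^-$ terms vanish and $g$ is non-decreasing on $[\mu_{\star,M},\infty)$. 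Hence the infimum is achieved in $[\mu_{a,M},\mu_{\star,M}]$, and on that interval the same two inequalities make $d^+(\mu_{\star,m},\eta-\xi_m)=0$ and $d^-(\mu_{a,m},\eta+\xi_m)=0$, leaving exactly the two terms displayed in the second equality.

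The work is almost entirely bookkeeping on which of the four $d^\pm$ terms vanish under the MF constraints; the only slightly delicate point is the tail monotonicity needed to justify shrinking the domain from $\mathbb R$ to $[\mu_{a,M},\mu_{\star,M}]$, but since each $d^\pm$ summand is individually monotone in $\eta$ on the relevant half-line (and most of them are identically zero there), this reduces to writing down one inequality per term rather than a genuine obstacle.
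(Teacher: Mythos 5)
Your proposal is correct and follows essentially the same route as the paper's proof: use Lemma~\ref{lemma:grad-not-in-mu-1} with $\psi^*_k=\mu_{k,M}$ to land in the branch \eqref{eq:fij_standard}, localize the infimum to $[\mu_{a,M},\mu_{\star,M}]$ by monotonicity of the KL terms, and observe that the multi-fidelity constraints annihilate the $d^-$ terms for arm $a$ and the $d^+$ terms for arm $\star$ on that interval. You merely spell out the tail-monotonicity bookkeeping that the paper leaves as "the monotonicity property of the KL divergence," which is a welcome amount of extra detail rather than a deviation.
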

\begin{proof}
At this point, we notice that whenever $\bm{\mu} \in \mathcal{M}_{\textup{MF}}$ it holds that $f_{\star,a}$ can always be expressed as Equation \eqref{eq:fij_standard}. This is direct by the condition on $\psi$'s in Lemma \ref{lemma:grad-not-in-mu-1}. Furthermore, it also holds at $\eta^*_{\star,a}$ that $d^-(\mu_{a,m}, \eta^*_{\star,a} + \xi_m) = 0$, and $d^+(\mu_{\star,m}, \eta^*_{\star,a} - \xi_m) = 0$. This is a consequence of the fact that $\eta^*_{\star,a} \in [\mu_{a,M}, \mu_{\star,M}]$ for all weights $\bm{\omega}$. Indeed, $\eta^*_{\star,a} \in [\mu_{a,M}, \mu_{\star,M}]$ holds due to monotonicity property of the KL divergence.
\end{proof}

We now analyze in more detail Equations \eqref{eq:definition_psi} and \eqref{eq:definition_eta}.
In particular, we begin by focusing on Equation \eqref{eq:definition_psi}. Taking the gradient in Equation \eqref{eq:definition_eta} w.r.t. the optimization variable $\eta$, and setting it equal to $0$, we obtain that any optimal point $\eta^*_{i,j}(\bm{\omega})$ needs to satisfy the following equation:
\begin{align}\label{eq:eta}
\eta & \left( \sum_{a \in \{i,j\}} \sum_{m=1}^{M} \frac{\omega_{a,m}}{\lambda_m} \left( \overline{k}_{a,m} \frac{1}{v(\eta - \xi_m)} + \underline{k}_{a,m} \frac{1}{v(\eta + \xi_m)} \right)  \right) = \\ & \sum_{a \in \{i,j\}} \sum_{m=1}^{M} \frac{\omega_{a,m}}{\lambda_m} \left( \overline{k}_{a,m} \frac{\mu_{a,m} + \xi_m}{v(\eta-\xi_m)} + \underline{k}_{a,m} \frac{\mu_{a,m} - \xi_m}{v(\eta+\xi_m)}  \right).\nonumber
\end{align}
where we recall that $\overline{k}_{a,m}(\eta)$ and $\underline{k}_{a,m}(\eta)$ are given by:
\begin{align*}
& \overline{k}_{a,m}(\eta) = \mathbf{1}\left\{ \eta \ge \mu_{a,m}+\xi_m \right\} \\ 
& \underline{k}_{a,m}(\eta) = \mathbf{1}\left\{ \eta \le \mu_{a,m}-\xi_m \right\}.
\end{align*}

Given this intermediate result, we now investigate in more depth the solution of Equation \eqref{eq:eta}.

\uniquemin*
\begin{proof}
Let us analyze:
\begin{align*}
f_{i,j}(\bm{\omega}, \bm{\mu}) & = \inf_{\eta \in \mathbb{R} } \sum_{a \in \{i,j\}} \sum_{m \in M} \omega_{a,m} \frac{d^-(\mu_{a,m}, \eta + \xi_m)}{\lambda_m} + \omega_{a,m} \frac{d^+(\mu_{a,m}, \eta - \xi_m)}{\lambda_m} \\ & \coloneqq \inf_{\eta \in \mathbb{R}} g_{i,j}(\bm{\omega}, \bm{\mu}, \eta).
\end{align*}

    At this point, we proceed by contradiction. Suppose that there exists $x_1, x_2 \in \argmin_{\eta \in \mathbb{R}} g_{i,j}(\bm{\omega}, \bm{\mu}, \eta)$ such that $x_1 \ne x_2$. From the convexity of $g_{i,j}(\bm{\omega}, \bm{\mu}, \eta)$ w.r.t. $\eta$, we know that any $x \in [x_1, x_2]$ belongs to the argmin set as well. Furthermore, for all $x \in [x_1, x_2]$, since $f_{i,j}(\bm{\omega}, \bm{\mu}) > 0$, at least one of the following condition is satisfied:
    \begin{itemize}
        \item $\frac{\omega_{a,M}}{\lambda_M} d(\mu_{a,M}, x) > 0$ holds for some $a \in \{i,j\}$  
        \item $\frac{\omega_{a,m}}{\lambda_m} d(\mu_{a,M}, x + \xi_m) \underline{k}_{a,m}(x) > 0$ holds for some $a \in \{i,j\}$ and some fidelity $m < M$
        \item $\frac{\omega_{a,m}}{\lambda_m} d(\mu_{a,M}, x - \xi_m) \overline{k}_{a,m}(x) > 0$ holds for some $a \in \{i,j\}$ and some fidelity $m < M$
    \end{itemize}
    Therefore, from Equation \eqref{eq:eta}, we obtain that all $x \in [x_1, x_2]$ are fixed points of the following Equation:
    \begin{align}\label{eq:eta-unique}
x = \frac{\sum_{a \in \{i,j\}} \sum_{m=1}^{M} \frac{\omega_{a,m}}{\lambda_m} \left( \overline{k}_{a,m}(x) \frac{\mu_{a,m} + \xi_m}{v(x-\xi_m)} + \underline{k}_{a,m}(x) \frac{\mu_{a,m} - \xi_m}{v(x+\xi_m)}  \right)}{\left( \sum_{a \in \{i,j\}} \sum_{m=1}^{M} \frac{\omega_{a,m}}{\lambda_m} \left( \overline{k}_{a,m}(x) \frac{1}{v(x - \xi_m)} + \underline{k}_{a,m}(x) \frac{1}{v(x + \xi_m)} \right)  \right)},
    \end{align}
    At this point, we notice that for any couple of different $\tilde{x}_1, \tilde{x}_2$ that satisfies Equation \eqref{eq:eta-unique}, there exists at least one arm $a \in \{i,j \}$ and one fidelity $m<M$ such that at least one of the following two conditions hold:
    \begin{itemize}
        \item $\overline{k}_{a,m}(\tilde{x}_1) \ne \overline{k}_{a,m}(\tilde{x}_2)$ 
        \item $\underline{k}_{a,m}(\tilde{x}_2) \ne \underline{k}_{a,m}(\tilde{x}_2)$
    \end{itemize}
    This however, is possible only for a finite number of points, while the interval $[x_1, x_2]$ contains infinitely many optimal points. Therefore, there exists a unique solution $\eta^*_{i,j}(\bm{\omega}) \in \argmin_{\eta \in \mathbb{R}} g_{i,j}(\bm{\omega}, \bm{\mu}, \eta)$, and, furthermore, it is a solution of Equation \eqref{eq:eta-unique}, thus concluding the proof.
\end{proof}

Given this result, we continue by providing a result on how to compute the derivative of $f_{i,j}(\bm{\omega}, \bm{\mu})$ whenever $f_{i,j}(\bm{\omega}, \bm{\mu})$ is given by Equation \eqref{eq:fij_standard}.

\begin{lemma}\label{lemma:derivative-case-1}
Consider $\bm{\mu} \in \Theta^{KM}$ and $\bm{\omega} \in \Delta_{K \times M}$ such that $f_{i,j}(\bm{\omega}, \bm{\mu}) > 0$. Furthermore, suppose that $f_{i,j}(\bm{\omega}, \bm{\mu})$ is given by Equation \eqref{eq:fij_standard}. Then, for all $a \in \{ i, j \}$ and all $m \in [M]$:
\begin{align*}
\frac{\partial f_{i,j}(\bm{\omega}, \bm{\mu})}{\partial \omega_{a,m}} = \frac{d^+(\mu_{a,m}, \eta^*_{i,j} + \xi_m) + d^-(\mu_{i,m}, \eta^*_{i,j} - \xi_m)}{\lambda_m}.
\end{align*}
\end{lemma}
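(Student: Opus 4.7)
The approach is a textbook envelope-theorem (Danskin) computation applied to the infimum representation of $f_{i,j}$ in~\eqref{eq:fij_standard}. Introduce the objective
\[
G(\bm{\omega},\eta) \coloneqq \sum_{k \in \{i,j\}} \sum_{m \in [M]} \omega_{k,m}\,\frac{d^-(\mu_{k,m},\eta+\xi_m) + d^+(\mu_{k,m},\eta-\xi_m)}{\lambda_m}.
\]
The hypothesis that $f_{i,j}$ is given by~\eqref{eq:fij_standard} says exactly $f_{i,j}(\bm{\omega},\bm{\mu}) = \inf_\eta G(\bm{\omega},\eta)$. Observe that $G$ is affine (hence smooth) in $\bm{\omega}$ for each fixed $\eta$, with explicit coefficient $[d^-(\mu_{a,m},\eta+\xi_m)+d^+(\mu_{a,m},\eta-\xi_m)]/\lambda_m$ in front of $\omega_{a,m}$, and convex in $\eta$ for each fixed $\bm{\omega}$.

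Next I would invoke Lemma~\ref{lemma:unique-minimizer}: the standing positivity $f_{i,j}(\bm{\omega},\bm{\mu})>0$, together with being in the standard case of Lemma~\ref{lemma:grad-not-in-mu-1} (which corresponds to $\psi^*_i \ge \psi^*_j$), guarantees that the minimizer $\eta^*_{i,j}(\bm{\omega})$ of $G(\bm{\omega},\cdot)$ is unique. The envelope theorem then yields, for every $(a,m) \in \{i,j\} \times [M]$,
\[
\frac{\partial f_{i,j}}{\partial \omega_{a,m}}(\bm{\omega},\bm{\mu}) \;=\; \frac{\partial G}{\partial \omega_{a,m}}(\bm{\omega},\eta^*_{i,j}) \;=\; \frac{d^-(\mu_{a,m},\eta^*_{i,j}+\xi_m) + d^+(\mu_{a,m},\eta^*_{i,j}-\xi_m)}{\lambda_m},
\]
which is the claimed formula (matching the sub-gradient entry in Theorem~\ref{theo:sub-gradient}).

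If a self-contained justification of the envelope step is preferred, I would add a short sandwich argument. With $\bm{\omega}^t \coloneqq \bm{\omega} + t e_{a,m}$, from the definition of $\inf$ one has $G(\bm{\omega}^t,\eta^*_{i,j}(\bm{\omega}^t)) \le f_{i,j}(\bm{\omega}^t,\bm{\mu}) \le G(\bm{\omega}^t,\eta^*_{i,j}(\bm{\omega}))$, which squeezes both one-sided difference quotients $[f_{i,j}(\bm{\omega}^t,\bm{\mu})-f_{i,j}(\bm{\omega},\bm{\mu})]/t$ to the same limit as $t \to 0$, provided $\eta^*_{i,j}$ depends continuously on $\bm{\omega}$. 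That continuity is a standard consequence of uniqueness of the minimizer and joint continuity of $G$.

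The main, and essentially only, conceptual obstacle is the uniqueness of $\eta^*_{i,j}(\bm{\omega})$, which is exactly what Lemma~\ref{lemma:unique-minimizer} provides under the assumption $f_{i,j}>0$. Without this positivity the argmin may become a nontrivial interval, and one can only extract a sub-gradient rather than a gradient, matching the fact that Theorem~\ref{theo:sub-gradient} is later phrased in terms of sub-differentials. Once uniqueness is in hand, the formula is a direct read-off of the coefficient of $\omega_{a,m}$ in $G$ evaluated at the unique optimizer, so no further calculation is required.
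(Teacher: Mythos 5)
Your proposal is correct and follows essentially the same route as the paper: both proofs rest on Lemma~\ref{lemma:unique-minimizer} for uniqueness of $\eta^*_{i,j}(\bm{\omega})$ and then apply an envelope argument to the infimum representation \eqref{eq:fij_standard}, the paper via the chain rule with the first-order condition $\partial G/\partial\eta = 0$ killing the $\partial\eta^*/\partial\omega_{a,m}$ term, you via the Danskin sandwich. Your sandwich version is in fact slightly cleaner, since it only needs continuity of $\eta^*_{i,j}(\cdot)$ and sidesteps the paper's case analysis on left/right derivatives of $\eta^*_{i,j}(\bm{\omega})$ at kink points such as $\eta^*_{i,j} = \mu_{a,m} \pm \xi_m$.
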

\begin{proof}
First of all, we notice that, since $f_{i,j}(\bm{\omega}, \bm{\mu}) > 0$ holds, and since $f_{i,j}(\bm{\omega}, \bm{\mu})$ is expressed as in Equation \eqref{eq:fij_standard}, then, thanks to Lemma \ref{lemma:unique-minimizer}, we know that $\eta_{i,j}^*(\bm{\omega})$ is the unique optimum of the Equation ~\eqref{eq:fij_standard}. 
In the rest of this proof, we will explicit the relationship between $f_{i,j}$ and $\eta^*_{i,j}(\bm{\omega})$ by writing $f_{i,j}(\bm{\omega}, \bm{\mu}, \eta^*_{i,j}(\bm{\omega}))$.
At this point, fix $a \in \{ i,j\}$ and $m \in [M]$. Then, it is easy to verify from Equation \eqref{eq:eta-eq-main-text} that both the right and left derivative of $\eta^*_{i,j}(\bm{\omega})$ w.r.t. $\omega_{a,m}$ exists. Suppose for a moment that they are equal, then we have that $\frac{\partial \eta^*_{i,j}(\bm{\omega})}{\partial \omega_{a,m}}$ exists and it continuous. Therefore, we obtain
\begin{align*}
\frac{\partial}{\partial \omega_{a,m}} f_{i,j}(\bm{\omega}, \bm{\mu}, \eta_{i,j}^*(\bm{\omega})) & = \frac{\partial f_{i,j}}{\partial \omega_{a,m}} (\bm{\omega},\bm{\mu}, \eta^*_{i,j}(\bm{\omega})) + \frac{\partial f_{i,j}}{\partial \eta^*_{i,j}(\bm{\omega})}(\bm{\omega}, \bm{\mu}, \eta_{i,j}^*(\bm{\omega})) \frac{\partial \eta^*_{i,j}(\bm{\omega})}{\partial \omega_{a,m}} \\ & = \frac{\partial f_{i,j}}{\partial \omega_{a,m}} (\bm{\omega}, \bm{\mu}, \eta^*_{i,j}(\bm{\omega})) \\ & = \frac{d^+(\mu_{a,m}, \eta^*_{i,j} + \xi_m) + d^-(\mu_{a,m}, \eta^*_{i,j} - \xi_m)}{\lambda_m},
\end{align*}
where in the second step we have used that $\frac{\partial f_{i,j}}{\partial \eta^*_{i,j}(\bm{\omega})}(\bm{\omega}, \bm{\mu}, \eta^*_{i,j}(\bm{\omega})) = 0$ since $\eta^*_{i,j}(\bm{\omega})$ is a minimizer of Equation \eqref{eq:fij_standard}.

Similarly, whenever, the right and the left derivatives of $\eta^*_{i,j}(\bm{\omega})$ are different\footnote{This can happen, for instance, whenever $\eta^*_{i,j}(\bm{\omega}) = \mu_{a,m} \pm \xi_m$. }, we can follow similar arguments, but analyzing left and right derivatives, and we will obtain an identical result. Indeed, this does not introduce discontinuity issue in the derivatives of $f_{i,j}$ thanks to the fact that $\eta^*_{i,j}(\bm{\omega})$ is a minimizer of Equation \eqref{eq:fij_standard}.
\end{proof}

At this point, it remains to analyze in more detail the case in which we have that $f_{i,j}(\bm{\omega})$ is expressed as in Equation \eqref{eq:fij_non_standard}.

\begin{lemma}\label{lemma:eta-case-2}
Consider $\bm{\mu} \in \Theta^{KM}$ and $\bm{\omega} \in \Delta_{K \times M}$ such that $f_{i,j}(\bm{\omega}, \bm{\mu}) > 0$ holds. Furthermore, suppose that $\psi^*_j > \psi^*_i$. Then, for each $a \in \{i,j\}$, there exists a unique minimizer $\psi^*_{a}$ of Equation \eqref{eq:fij_non_standard} which is the unique solution of the following equation of $\psi$:
\begin{align}\label{eq:eta-unique-case-2}
\psi = \frac{\sum_{m=1}^{M} \frac{\omega_{a,m}}{\lambda_m} \left( \overline{k}_{a,m}(\psi) \frac{\mu_{a,m} + \xi_m}{v(\psi-\xi_m)} + \underline{k}_{a,m}(\psi) \frac{\mu_{a,m} - \xi_m}{v(\psi+\xi_m)}  \right)}{\left( \sum_{m=1}^{M} \frac{\omega_{a,m}}{\lambda_m} \left( \overline{k}_{a,m}(\psi) \frac{1}{v(\psi - \xi_m)} + \underline{k}_{a,m}(\psi) \frac{1}{v(\psi + \xi_m)} \right)  \right)}.
\end{align}
\end{lemma}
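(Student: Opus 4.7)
The plan is to mirror the proof of Lemma~\ref{lemma:unique-minimizer}, adapted to the single-arm minimization that defines $\psi^*_a$. For each $a \in \{i,j\}$, I consider
$h_a(\psi) \coloneqq \sum_{m=1}^M \omega_{a,m} \frac{d^-(\mu_{a,m}, \psi + \xi_m) + d^+(\mu_{a,m}, \psi - \xi_m)}{\lambda_m}$.
Each summand is convex in $\psi$: $d(\mu_{a,m}, \cdot)$ is convex in its second argument in the exponential family, and attaching the indicators $\mathbf{1}\{\mu_{a,m} \geq \psi+\xi_m\}$ or $\mathbf{1}\{\mu_{a,m} \leq \psi-\xi_m\}$ preserves convexity because each indicator activates exactly where $d$ becomes zero. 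Hence $h_a$ is convex, and $\psi^*_a$ is a minimizer iff $0 \in \partial h_a(\psi^*_a)$.

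First I would derive~\eqref{eq:eta-unique-case-2} as the first-order optimality condition. Using $\tfrac{d}{dq} d(p,q) = (q-p)/v(q)$ for exponential families, the derivative of $h_a$ on any region where the indicators $\overline{k}_{a,m}(\psi)$, $\underline{k}_{a,m}(\psi)$ are constant equals $\sum_m \tfrac{\omega_{a,m}}{\lambda_m}\bigl[\overline{k}_{a,m}(\psi)\tfrac{(\psi-\xi_m)-\mu_{a,m}}{v(\psi-\xi_m)} + \underline{k}_{a,m}(\psi)\tfrac{(\psi+\xi_m)-\mu_{a,m}}{v(\psi+\xi_m)}\bigr]$. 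Setting this to zero and isolating $\psi$ gives exactly the fixed-point form~\eqref{eq:eta-unique-case-2}, so every minimizer of $h_a$ (at which the denominator of the RHS does not vanish) is a solution of this equation.

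Uniqueness then follows by the same contradiction scheme as in Lemma~\ref{lemma:unique-minimizer}. Suppose $x_1 < x_2$ are both minimizers of $h_a$; by convexity $h_a$ is constant on $[x_1, x_2]$. Under the lemma's hypotheses, Lemma~\ref{lemma:grad-not-in-mu-1} gives the decomposition $f_{i,j}(\bm{\omega},\bm{\mu}) = h_i(\psi^*_i) + h_j(\psi^*_j) > 0$, and combined with the strict separation $\psi^*_j > \psi^*_i$ one can conclude $h_a(\psi^*_a) > 0$ for each $a \in \{i,j\}$ (see the remark below). Then at every $x \in [x_1, x_2]$, at least one of $\overline{k}_{a,m}(x)$ or $\underline{k}_{a,m}(x)$ equals one for some $m$ with $\omega_{a,m} > 0$, so each such $x$ is a fixed point of~\eqref{eq:eta-unique-case-2}. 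But the RHS of~\eqref{eq:eta-unique-case-2} depends on $\psi$ only through piecewise-constant indicators and the analytic functions $1/v(\psi \pm \xi_m)$; on each of the $\mathcal{O}(M)$ regions where the indicator pattern is fixed, the equation becomes a single analytic equation in $\psi$ with isolated solutions. Hence only finitely many fixed points exist globally, contradicting a continuum $[x_1, x_2]$ of fixed points.

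The main obstacle is the middle step: ruling out $h_a(\psi^*_a) = 0$. If $h_a$ attains zero at its minimum, then its minimizer set equals the interval $\bigcap_{m:\omega_{a,m}>0} [\mu_{a,m}-\xi_m, \mu_{a,m}+\xi_m]$, which can be non-degenerate and would destroy uniqueness. The delicate point is to combine the hypothesis $\psi^*_j > \psi^*_i$ with the structure of the constrained problem in~\eqref{eq:char-time}: one must show that whenever one of the $h_a$ has a non-degenerate zero-interval as its minimizer set, either the strict separation $\psi^*_j > \psi^*_i$ becomes vacuous (the value $\psi^*_a$ is ambiguous), or optimality of the selected $\psi^*_i, \psi^*_j$ in the original two-arm constrained problem can be preserved after a perturbation that contradicts the non-standard form of $f_{i,j}$. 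This parallels the role played by $f_{i,j} > 0$ in Lemma~\ref{lemma:unique-minimizer}, where it was used precisely to forbid a continuum of minimizers of $g_{i,j}$.
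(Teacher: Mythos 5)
Your derivation of the fixed-point equation is exactly the paper's argument (the paper's proof is a one-liner: the problem in \eqref{eq:fij_non_standard} is an unconstrained convex problem in $\psi$, take the derivative and set it to zero), and your finiteness-of-fixed-points scheme correctly transplants the uniqueness argument of Lemma~\ref{lemma:unique-minimizer}. However, the obstacle you flag at the end is a genuine gap, and it cannot be closed from the stated hypotheses alone. Writing $h_a(\psi) = \sum_m \omega_{a,m}\lambda_m^{-1}\bigl(d^-(\mu_{a,m},\psi+\xi_m)+d^+(\mu_{a,m},\psi-\xi_m)\bigr)$, the summand for fidelity $m$ vanishes exactly on $[\mu_{a,m}-\xi_m,\mu_{a,m}+\xi_m]$, so if $\omega_{a,M}=0$ and the intervals indexed by the support of $\omega_a$ have a non-degenerate common intersection, then $h_a\equiv 0$ there and arm $a$ has a continuum of minimizers; meanwhile the other arm can satisfy $h_j(\psi^*_j)>0$ with $\psi^*_j$ above that whole interval, so both hypotheses $f_{i,j}(\bm{\omega},\bm{\mu})=h_i(\psi^*_i)+h_j(\psi^*_j)>0$ and $\psi^*_j>\psi^*_i$ hold while uniqueness fails (and the denominator of \eqref{eq:eta-unique-case-2} is zero on the interior of that interval, so the fixed-point characterization is not even well posed there). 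The key difference from Lemma~\ref{lemma:unique-minimizer} is precisely the one you sense: there, positivity of the objective at its minimum is an assumption on the \emph{same} function being minimized, whereas here $f_{i,j}>0$ only controls the sum $h_i+h_j$, not each $h_a$ separately.

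The clean way to close the gap is not the perturbation argument you sketch but an additional positivity condition that holds wherever the lemma is actually invoked: the weights $\tilde{\bm{\omega}}(t)$ produced by the exponential-weights step have all entries strictly positive (Lemma~\ref{lemma:positive-F-weights}), so $\omega_{a,M}>0$. Since $\xi_M=0$, the $m=M$ summand of $h_a$ equals $\omega_{a,M}\lambda_M^{-1}\,d(\mu_{a,M},\psi)$, which is strictly convex in $\psi$ for a canonical exponential family; hence $h_a$ is strictly convex and its minimizer is unique, with no need for the fixed-point-counting argument at all. Once uniqueness is secured this way, your first-order condition correctly identifies $\psi^*_a$ as the unique solution of \eqref{eq:eta-unique-case-2} (the $m=M$ term guarantees the denominator is nonzero at the minimizer). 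You should state this extra hypothesis, or equivalently $h_a(\psi^*_a)>0$, explicitly rather than leave the middle step open.
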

\begin{proof}
The proof follows by noticing that, for each $a \in \{ i,j\}$, the optimization problem in Equation \eqref{eq:fij_non_standard} is an unconstrained convex optimization problem in $\psi$. Taking the derivative and setting it equal to $0$ yields the desired result.
\end{proof}

At this point, we proceed by showing how to compute the partial derivatives of $f_{i,j}(\bm{\omega})$ whenever it is expressed as in Equation \eqref{eq:fij_non_standard}.
\begin{lemma}\label{lemma:derivative-case-2}
Consider $\bm{\mu} \in \Theta^{KM}$ and $\bm{\omega} \in \Delta_{K \times M}$ such that $f_{i,j}(\bm{\omega}, \bm{\mu}) > 0$. Furthermore, suppose that $\psi^*_j > \psi^*_i$. Then, for all $m \in [M]$ it holds that:
\begin{align*}
\frac{\partial f_{i,j}(\bm{\omega}, \bm{\mu})}{\partial \omega_{a,m}} = \frac{d^+(\mu_{a,m}, \psi^*_{a} + \xi_m) + d^-(\mu_{a,m}, \psi^*_{a} - \xi_m)}{\lambda_m}.
\end{align*}
\end{lemma}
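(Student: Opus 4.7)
The plan is to mirror the envelope/chain-rule argument used in Lemma~\ref{lemma:derivative-case-1}, now applied to the ``decoupled'' expression~\eqref{eq:fij_non_standard}, which is the one active under the strict assumption $\psi^*_j>\psi^*_i$. Setting
\[h_a(\bm\omega,\bm\mu,\psi) := \sum_{m=1}^{M} \omega_{a,m}\,\frac{d^-(\mu_{a,m},\psi+\xi_m)+d^+(\mu_{a,m},\psi-\xi_m)}{\lambda_m},\]
Lemma~\ref{lemma:grad-not-in-mu-1} gives $f_{i,j}(\bm\omega,\bm\mu)=h_i(\bm\omega,\bm\mu,\psi^*_i)+h_j(\bm\omega,\bm\mu,\psi^*_j)$, with $h_a$ depending on $\bm\omega$ only through the coordinates $\omega_{a,\cdot}$. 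This already yields that $\partial f_{i,j}/\partial \omega_{b,m}=0$ for $b\notin\{i,j\}$ and reduces everything to the two independent one-dimensional minimizations defining $\psi^*_i$ and $\psi^*_j$, each of which has a unique minimizer by Lemma~\ref{lemma:eta-case-2}.

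Next, I would check that this decoupled expression is valid throughout a neighborhood of $\bm\omega$. Since $\psi\mapsto h_a(\bm\omega,\bm\mu,\psi)$ is strictly convex with a unique minimizer characterized by the fixed-point equation~\eqref{eq:eta-unique-case-2}, a standard argument (Berge's maximum theorem, or direct inspection of that equation) shows that $\bm\omega\mapsto\psi^*_a(\bm\omega)$ is continuous. The strict inequality $\psi^*_j(\bm\omega)>\psi^*_i(\bm\omega)$ therefore persists on an open neighborhood, so throughout that neighborhood $f_{i,j}$ coincides with $h_i+h_j$ evaluated at the corresponding minimizers and we may differentiate that representation. Applying the chain rule with respect to $\omega_{a,m}$ for $a\in\{i,j\}$ gives
\[\frac{\partial f_{i,j}}{\partial \omega_{a,m}}=\frac{\partial h_a}{\partial \omega_{a,m}}\bigl(\bm\omega,\bm\mu,\psi^*_a\bigr)+\frac{\partial h_a}{\partial \psi}\bigl(\bm\omega,\bm\mu,\psi^*_a\bigr)\cdot \frac{\partial \psi^*_a}{\partial \omega_{a,m}},\]
and the second term vanishes by the first-order optimality condition for $\psi^*_a$. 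The remaining term is a direct computation from the definition of $h_a$ and produces exactly the claimed formula.

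The main obstacle is handling the ``kinks'' of $\psi^*_a$: because the right-hand side of~\eqref{eq:eta-unique-case-2} is only piecewise smooth (the indicators $\overline{k}_{a,m'},\underline{k}_{a,m'}$ switch whenever $\psi^*_a$ crosses some $\mu_{a,m'}\pm\xi_{m'}$), the derivative $\partial \psi^*_a/\partial \omega_{a,m}$ may fail to exist on a locally finite set of values of $\omega_{a,m}$. I would treat this exactly as in Lemma~\ref{lemma:derivative-case-1}: apply the chain rule separately to the left and right derivatives of $\psi^*_a$, and observe that the envelope cancellation still goes through one-sidedly because $\psi^*_a$ remains a minimizer of the convex function $h_a(\bm\omega,\bm\mu,\cdot)$, so both one-sided limits agree and equal the same explicit expression. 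Note that the strict separation $\psi^*_j>\psi^*_i$ is used only to make \eqref{eq:fij_non_standard} valid in a neighborhood; it plays no role in the single-arm envelope computation, which is precisely why the proof parallels so closely the $\bm\mu\in\cM_{\MF}$ case.
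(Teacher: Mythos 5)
Your proposal is correct and follows essentially the same route as the paper, which proves this lemma as a ``straightforward adaptation'' of the envelope/chain-rule argument in Lemma~\ref{lemma:derivative-case-1} (differentiate the decoupled representation, kill the $\partial\psi^*_a$ term by first-order optimality, and handle kinks via one-sided derivatives). You actually supply a detail the paper leaves implicit, namely that the strict inequality $\psi^*_j>\psi^*_i$ persists in a neighborhood of $\bm\omega$ so that \eqref{eq:fij_non_standard} remains the active representation when differentiating.
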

\begin{proof}
The proof is a straightforward adaptation of the proof of Lemma \ref{lemma:derivative-case-1}.
\end{proof}

Finally, we are now ready to prove our result on the sub-gradient of $F(\bm{\omega}, \bm{\mu})$.

\subgradient*
\begin{proof}
The proof follows by the definition of $F(\bm{\omega}, \bm{\mu})$ together with Lemma \ref{lemma:grad-not-in-mu-1}, Lemma \ref{lemma:derivative-case-1}, and Lemma \ref{lemma:derivative-case-2}.
\end{proof}

We now show a sufficient condition for $F(\bm{\omega}, \bm{\mu}) > 0$ to hold when $\bm{\mu} \in \Theta^{KM}$.

\begin{lemma}\label{lemma:positive-F-basic}
Consider $\bm{\mu} \in \Theta^{KM}$ such that there exists $\star$ for which $\mu_{\star,M} > \max_{a \ne \star} \mu_{a,M}$. Furthermore, consider $\bm{\omega} \in \Delta_{K \times M}$ such that $\omega_{i,M} > 0$ holds for all $i \in [K]$. Then, we have that $F(\bm{\omega}, \bm{\mu}) > 0$.
\end{lemma}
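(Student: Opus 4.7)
The plan is to show that the max in the definition of $F$ is attained at $i = \star$ with a strictly positive value, i.e.\ $\min_{j \ne \star} f_{\star,j}(\bm{\omega}, \bm{\mu}) > 0$. Since $F(\bm{\omega}, \bm{\mu}) \ge \min_{j \ne \star} f_{\star, j}(\bm{\omega}, \bm{\mu})$, this will suffice. So fix an arbitrary $j \ne \star$ and it remains to prove $f_{\star,j}(\bm{\omega},\bm{\mu}) > 0$.

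The first step is to lower bound $f_{\star,j}$ by relaxing the multi-fidelity constraints on $\theta_\star$ and $\theta_j$, keeping only the constraint $\theta_{j,M} \ge \theta_{\star,M}$. Enlarging the feasible set can only decrease the infimum, so
\[
f_{\star,j}(\bm{\omega},\bm{\mu})
\;\ge\;
\inf_{\substack{\theta_\star, \theta_j \in \Theta^M \\ \theta_{j,M} \ge \theta_{\star,M}}}
\sum_{k \in \{\star,j\}} \sum_{m \in [M]} \omega_{k,m}\, \frac{d(\mu_{k,m}, \theta_{k,m})}{\lambda_m}.
\]
In this relaxed problem, each term with $m < M$ can be independently driven to zero by choosing $\theta_{k,m} = \mu_{k,m}$, so the lower bound reduces to
\[
L \;:=\; \inf_{\theta_{j,M} \ge \theta_{\star,M}} \left[ \omega_{\star,M}\,\frac{d(\mu_{\star,M}, \theta_{\star,M})}{\lambda_M} + \omega_{j,M}\,\frac{d(\mu_{j,M}, \theta_{j,M})}{\lambda_M} \right].
\]

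The second step is to argue $L > 0$. The objective is jointly convex in $(\theta_{\star,M}, \theta_{j,M})$ and its unconstrained minimum is attained at $(\mu_{\star,M}, \mu_{j,M})$, which lies outside the feasible region because $\mu_{j,M} < \mu_{\star,M}$. By convexity, the constrained minimum must therefore be attained on the boundary $\theta_{j,M} = \theta_{\star,M}$; setting both equal to a common value $\eta$, we get
\[
L \;=\; \inf_{\eta \in \Theta} \frac{1}{\lambda_M}\bigl[\omega_{\star,M}\, d(\mu_{\star,M}, \eta) + \omega_{j,M}\, d(\mu_{j,M}, \eta)\bigr].
\]
Since $d(p,\cdot)$ vanishes only at $p$, and $\mu_{\star,M} \ne \mu_{j,M}$, the two KL terms cannot both be zero at any single $\eta$. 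Combined with the assumption $\omega_{\star,M}, \omega_{j,M} > 0$, this forces $L > 0$, and hence $f_{\star,j}(\bm{\omega},\bm{\mu}) > 0$. Since $j \ne \star$ was arbitrary, $F(\bm{\omega}, \bm{\mu}) > 0$.

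I expect the only delicate point is the justification that the constrained infimum is achieved (or at least bounded away from $0$) on the boundary $\theta_{j,M} = \theta_{\star,M}$; this is routine once one invokes joint convexity together with the fact that the unconstrained minimizer violates the constraint, but it is the one place where care is needed to ensure the argument is not merely pointwise positivity but a strict positive lower bound.
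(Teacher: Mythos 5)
Your proposal is correct and follows essentially the same route as the paper: both relax the multi-fidelity constraints, drop all fidelities except $M$ to reduce to $\inf_{\theta_{j,M}\ge\theta_{\star,M}}\bigl[\omega_{\star,M}d(\mu_{\star,M},\theta_{\star,M})+\omega_{j,M}d(\mu_{j,M},\theta_{j,M})\bigr]/\lambda_M$, restrict to the boundary $\theta_{j,M}=\theta_{\star,M}=\eta$ (the paper writes this as an infimum over $\eta\in[\mu_{a,M},\mu_{\star,M}]$), and conclude positivity from $\mu_{\star,M}>\mu_{j,M}$ and the positivity of the weights at fidelity $M$. The one delicate point you flag yourself is handled by noting that monotonicity of the KL lets one restrict $\eta$ to the compact interval $[\mu_{j,M},\mu_{\star,M}]$, on which the continuous, strictly positive objective attains a positive minimum.
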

\begin{proof}
From the definition of $F$, and the definition of $\star$, we have that:
\begin{align*}
F(\bm{\omega}, \bm{\mu}) & \ge \min_{a \ne \star} \inf_{\substack{\theta_a \in  \MF,\ \theta_{\star} \in \MF \\ \theta_{a,M} \ge \theta_{\star,M}}}
    \sum_{j \in \{\star,a\}} \sum_{m \in [M]} \omega_{j,m} \frac{d(\mu_{j,m}, \theta_{j,m})}{\lambda_m} \\ & \ge \min_{a \ne \star} \inf_{y \ge x} \omega_{\star,M} \frac{d(\mu_{\star,M}, x)}{\lambda_M} + \omega_{a,M} \frac{d(\mu_{a,M}, y)}{\lambda_M} \\ & = \min_{a \ne \star} \inf_{\eta \in [\mu_{a,M}, \mu_{\star, M}]} \omega_{\star,M} \frac{d(\mu_{\star,M}, \eta)}{\lambda_M} + \omega_{a,M} \frac{d(\mu_{a,M}, \eta)}{\lambda_M} \\ & > 0,
\end{align*}
where in the last step we have used the fact that $\mu_{\star,M} > \mu_{i,M}$ for all $i \ne \star$, together with $\omega_{i,M} > 0$ for all $i \in [K]$.
\end{proof}

Furthermore, we show that the sequence of weights generated by Algorithm \ref{alg:grad} satisfy $\omega_{i,M} > 0$ for all $i \in [K]$
\begin{lemma}\label{lemma:positive-F-weights}
The sequence of weights $\{ \bm{\tilde{\omega}}(t) \}_t$ satisfy $\omega_{i,M}(t) > 0$ for all $i \in [K]$ and for all $t$.
\end{lemma}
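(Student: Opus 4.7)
The claim follows from the closed-form solution of the Exponential Weights update in Line~\ref{line:sub-grad} together with the role played by the clipping operator and the uniform reference vector $\bar{\bm\omega}$. I would argue by cases on $t$.

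First, for $t \in \{1,\dots,KM\}$, Line~\ref{line:init} sets $\tilde{\bm\omega}(t) = \bar{\bm\omega}$, whose entries are all equal to $1/(KM) > 0$; in particular $\tilde\omega_{i,M}(t) > 0$ for every $i \in [K]$. This handles the initialization trivially.

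Second, for $t \geq KM$, I would write the objective in Line~\ref{line:sub-grad} as $\bm\omega \mapsto \alpha_{t+1}\langle \bm\omega, \mathbf{G}(t)\rangle - \textup{kl}(\bm\omega,\bar{\bm\omega})$, where $\mathbf{G}(t) \coloneqq \sum_{s=KM}^{t}\textup{Clip}_s\bigl((\sum_{a,m}\lambda_m\tilde\pi_{a,m}(s))\nabla F(\tilde{\bm\omega}(s),\hat{\bm\mu}(s))\bigr)$. Because $-\textup{kl}(\cdot,\bar{\bm\omega})$ is strictly concave on the simplex and $\bar\omega_{a,m} = 1/(KM) > 0$ for every pair $(a,m)$, a standard Lagrangian computation (the one underlying the Exponential Weights / Mirror Descent update with entropic regularizer) yields the unique maximizer
\[
\tilde\omega_{a,m}(t+1) \;=\; \frac{\exp\!\bigl(\alpha_{t+1}\,G_{a,m}(t)\bigr)}{\sum_{b,\ell}\exp\!\bigl(\alpha_{t+1}\,G_{b,\ell}(t)\bigr)},
\]
where the uniform prior factors out of the numerator and denominator.

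Third, I would observe that every entry of $\mathbf{G}(t)$ is finite: Theorem~\ref{theo:sub-gradient} guarantees that each sub-gradient of $F$ has non-negative and finite coordinates, and the clipping $\textup{Clip}_s$ provides an explicit upper bound $G\sqrt{s}$ on each summand (the pull-proportion prefactor is bounded by $\lambda_M$). Consequently $\exp(\alpha_{t+1} G_{a,m}(t)) \in (0,\infty)$ for every $(a,m)$, so the closed-form expression above is strictly positive in every coordinate; specializing to $m = M$ gives $\tilde\omega_{i,M}(t+1) > 0$ for all $i \in [K]$, completing the induction.

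There is no real obstacle here: the only non-trivial point is to invoke the entropic regularizer closed form and to note that clipping is precisely what keeps the exponent finite so that no coordinate of $\tilde{\bm\omega}(t)$ can collapse to $0$.
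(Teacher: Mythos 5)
Your proposal is correct and follows essentially the same route as the paper: the paper's proof simply invokes ``the property of $\textup{kl}$'', i.e.\ that the entropic regularizer toward the strictly positive uniform vector $\bar{\bm\omega}$ forces the maximizer into the relative interior of the simplex, which is exactly the exponential-weights closed form you derive explicitly (with the clipping ensuring the exponents stay finite). Your version is just a more detailed spelling-out of the same argument, including the trivial initialization case.
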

\begin{proof}
We begin by recalling the definition of $\bm{\tilde{\omega}}(t)$: 
    \begin{align*}
        \bm{\tilde{\omega}}(t+1) \in \argmax_{\bm{\omega} \in \Delta_{K \times M}} \alpha_{t+1} \sum_{s=KM}^t \bm{\omega} \cdot \textup{Clip}_s \left( \nabla F(\bm{\tilde{\omega}}(s), \bm{\hat{\mu}}(s) \right) - \textup{kl}(\bm{\omega}, \bm{\overline{\omega}})
    \end{align*}
    From this definition, thanks to the property of $\textup{kl}$, we have that $\tilde{\omega}_{a,m}(t) > 0$ for all $a \in [K]$, and all $m \in [M]$. 
\end{proof}

\begin{remark}\label{remark:pos-F}
It follows by combining Lemma \ref{lemma:positive-F-basic} and Lemma \ref{lemma:positive-F-weights}, that $F(\bm{\omega}(t), \bm{\hat{\mu}}(t)) = 0$ might happen only when there are multiple best arms at fidelity $M$. Whenever this condition is encountered, it is possible to project the bandit model $\bm{\hat{\mu}}(t)$ to have a unique optimal arm (e.g., by adding a small $\epsilon > 0$ to one of the optimal arms). When looking at the proof of Theorem \ref{theo:cmplx}, we can see that this does not impact its theoretical guarantees as (i) on the good event $\mathcal{E}_T$ this does not happen, and, Lemma \ref{lemma:regret} holds unchanged. 
\end{remark}

\subsection{Smoothness of $F(\omega, \mu)$}

\begin{lemma}\label{lem:continuous_f}
For any set $S \subseteq \Theta^{KM}$ and any subset of arms $A \subseteq [K]$, the function $(\bm{\omega}, \bm{\mu}) \mapsto \inf_{\theta \in S}\sum_{a \in A, m \in [M]}\omega_{a,m} \frac{d(\mu_{a,m}, \theta_{a,m})}{\lambda_m}$ is jointly continuous on $\Delta_{K \times M} \times \Theta^{KM}$~.
\end{lemma}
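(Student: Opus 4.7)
The overall plan is to deduce joint continuity from upper and lower semi-continuity, treated separately. The starting observation is that the integrand $g_\theta(\bm{\omega}, \bm{\mu}) \coloneqq \sum_{a \in A, m \in [M]} \omega_{a,m} d(\mu_{a,m}, \theta_{a,m}) / \lambda_m$ is itself jointly continuous in $(\bm{\omega}, \bm{\mu}, \theta)$ on $\Delta_{K\times M} \times \Theta^{KM} \times \Theta^{KM}$, since $d$ is continuous on $\Theta \times \Theta$ for canonical exponential families. Upper semi-continuity of $h(\bm{\omega}, \bm{\mu}) \coloneqq \inf_{\theta \in S} g_\theta(\bm{\omega}, \bm{\mu})$ at a fixed $(\bm{\omega}_0, \bm{\mu}_0)$ is then handled by the standard infimum argument: given $\epsilon > 0$, pick $\theta_\epsilon \in S$ with $g_{\theta_\epsilon}(\bm{\omega}_0, \bm{\mu}_0) \leq h(\bm{\omega}_0, \bm{\mu}_0) + \epsilon$, use joint continuity of $g_{\theta_\epsilon}$ to find a neighborhood on which that value is not exceeded by more than $\epsilon$, and conclude via $h \leq g_{\theta_\epsilon}$.

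The crux of the proof is lower semi-continuity at an arbitrary point $(\bm{\omega}_0, \bm{\mu}_0)$. The plan is to fix a sequence $(\bm{\omega}_n, \bm{\mu}_n) \to (\bm{\omega}_0, \bm{\mu}_0)$ along which $h(\bm{\omega}_n, \bm{\mu}_n) \to L \coloneqq \liminf_n h(\bm{\omega}_n, \bm{\mu}_n)$, assumed finite (otherwise the bound is trivial), and to choose $\theta_n \in S$ with $g_{\theta_n}(\bm{\omega}_n, \bm{\mu}_n) \leq h(\bm{\omega}_n, \bm{\mu}_n) + 1/n$. The key idea is to localize to the active support $I_0 \coloneqq \{(a,m) \in A \times [M] : \omega_{0,a,m} > 0\}$. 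For every $(a,m) \in I_0$ and every sufficiently large $n$, $\omega_{n,a,m} \geq \omega_{0,a,m}/2$, so the bound $g_{\theta_n}(\bm{\omega}_n, \bm{\mu}_n) \leq L + 1$ yields a uniform bound on $d(\mu_{n,a,m}, \theta_{n,a,m})$ at each active coordinate. Since the KL divergence in a canonical exponential family is coercive in its second argument (it diverges when $\theta$ approaches $\partial \Theta$, or infinity when $\Theta$ is unbounded), each sequence $(\theta_{n,a,m})_n$ with $(a,m) \in I_0$ is forced into a common compact subset of $\Theta$. A diagonal extraction then produces a subsequence along which $\theta_{n,a,m} \to \theta^*_{a,m} \in \Theta$ for every $(a,m) \in I_0$.

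The bound $L \geq h(\bm{\omega}_0, \bm{\mu}_0)$ will then follow by chaining two limits, both involving only the active coordinates. First, non-negativity of the inactive summands and continuity of $d$ imply $L = \lim_n h(\bm{\omega}_n, \bm{\mu}_n) \geq \liminf_n \sum_{(a,m) \in I_0} \omega_{n,a,m} d(\mu_{n,a,m}, \theta_{n,a,m})/\lambda_m = \sum_{(a,m) \in I_0} \omega_{0,a,m} d(\mu_{0,a,m}, \theta^*_{a,m})/\lambda_m$. Second, since $\omega_{0,a,m} = 0$ for $(a,m) \in (A \times [M]) \setminus I_0$, the quantity $g_{\theta_n}(\bm{\omega}_0, \bm{\mu}_0)$ depends on $\theta_n$ only through its active coordinates, so by continuity of $d$ it converges to the very same sum; combined with $h(\bm{\omega}_0, \bm{\mu}_0) \leq g_{\theta_n}(\bm{\omega}_0, \bm{\mu}_0)$ (since $\theta_n \in S$), this upper-bounds $h(\bm{\omega}_0, \bm{\mu}_0)$ by the common limit. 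The main obstacle throughout is the lack of compactness of $S$ and the possibility that the inactive components of $\theta_n$ escape to $\partial \Theta$ without control; the remedy, built into the scheme above, is that both ends of the chain depend on $\theta_n$ only through its active coordinates, so coercivity of $d$ at exactly those coordinates is all that is needed.
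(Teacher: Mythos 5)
Your proof is correct, but it takes a different route from the paper: the paper's entire proof is a citation to Lemma~27 of \citep{degenne2019pure} (observing that the argument there works for an arbitrary set $S$, for a rescaled divergence, and for a restricted set of arms), whereas you give a self-contained semicontinuity argument. The substance of your lower-semicontinuity step --- restricting attention to the coordinates where $\omega_{0,a,m}>0$ so that boundedness of the weighted divergence, together with coercivity of $d(p,\cdot)$, traps the corresponding components of the near-minimizers $\theta_n$ in a compact subset of $\Theta$, then closing the chain by noting that $g_{\theta_n}(\bm{\omega}_0,\bm{\mu}_0)$ only sees those same active coordinates --- is exactly the delicate point that the cited lemma also has to handle (naive joint continuity fails precisely because the inactive components of $\theta_n$ can escape to $\partial\Theta$), so your write-up has the advantage of making that mechanism explicit rather than outsourcing it. The one thing you should state as a hypothesis rather than assert in passing is the coercivity property you rely on: that for every compact $P\subset\Theta$ and every $c$, the set $\{q\in\Theta:\inf_{p\in P}d(p,q)\le c\}$ is relatively compact in $\Theta$. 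This holds for the regular one-parameter canonical exponential families considered in the paper (Gaussian with known variance, Bernoulli, etc.), but it is a property of the family, not a consequence of continuity of $d$ alone, and it is the only place where the exponential-family structure genuinely enters your argument.
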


\begin{proof}
We apply (a trivial generalization of) Lemma 27 of \citep{degenne2019pure}. The lemma in that paper is stated for a set of alternative models, but the proof actually works for any set $S$. Likewise, it is stated for the case of $\lambda_m = 1$, but since it works for an arbitrary Bregman divergence $d$ it applies to a rescaled version as well.
To deal with the restriction to a subset of arms $A$ instead of all arms, we can view the function as a function of $(\bm{\omega}_{A\times [M]}, \bm{\mu}_A)$, where we restrict the vectors to the arms in $A$, and continuity of the original function is equivalent to continuity of the restricted version.
\end{proof}

\begin{lemma}\label{lem:continuous_F}
The function $(\bm{\omega}, \bm{\mu}) \mapsto F(\bm{\omega}, \bm{\mu})$ is jointly continuous on $\Delta_{K \times M} \times \Theta^{KM}$~.
\end{lemma}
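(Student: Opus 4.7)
The plan is to deduce joint continuity of $F$ from the joint continuity of the individual building blocks $f_{i,j}$, using the fact that finite max-min combinations preserve continuity.

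First, I would observe that each $f_{i,j}$ fits exactly into the template of Lemma~\ref{lem:continuous_f}. Indeed, taking $A = \{i,j\} \subseteq [K]$ and
\[
S = \{\bm\theta \in \Theta^{KM} : \theta_i \in \MF,\ \theta_j \in \MF,\ \theta_{j,M} \geq \theta_{i,M}\},
\]
the definition \eqref{eq:char-time} of $f_{i,j}(\bm\omega, \bm\mu)$ is precisely the infimum expression covered by Lemma~\ref{lem:continuous_f} (the variables $\theta_{k}$ for $k \notin \{i,j\}$ are free and do not affect the objective). Hence $(\bm\omega, \bm\mu) \mapsto f_{i,j}(\bm\omega, \bm\mu)$ is jointly continuous on $\Delta_{K\times M} \times \Theta^{KM}$ for every ordered pair $(i,j)$ with $i \neq j$.

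Next, I would invoke the elementary fact that the pointwise maximum, respectively minimum, of finitely many continuous real-valued functions on a topological space is continuous. Since
\[
F(\bm\omega, \bm\mu) = \max_{i \in [K]} \min_{j \neq i} f_{i,j}(\bm\omega, \bm\mu),
\]
and since the indexing sets $[K]$ and $[K]\setminus\{i\}$ are finite, $F$ is obtained by two successive finite $\min$/$\max$ operations applied to the continuous family $\{f_{i,j}\}$. This immediately yields the joint continuity of $F$ on $\Delta_{K\times M} \times \Theta^{KM}$.

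There is no genuine obstacle here: the whole statement is a direct corollary of Lemma~\ref{lem:continuous_f} plus the stability of continuity under finite $\min$ and $\max$. The only point worth stating explicitly in the write-up is the identification of the set $S$ and the arm subset $A$ when applying Lemma~\ref{lem:continuous_f}, so that it is transparent that the hypothesis of that lemma is satisfied for each $(i,j)$.
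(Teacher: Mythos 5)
Your proposal is correct and follows essentially the same route as the paper: each $f_{i,j}$ is shown to be jointly continuous by applying Lemma~\ref{lem:continuous_f} with the appropriate choice of $S$ and $A=\{i,j\}$, and then continuity of $F$ follows because finite minima and maxima of continuous functions are continuous. Your explicit identification of $S$ from the definition in Equation~\eqref{eq:char-time} is if anything slightly more careful than the paper's phrasing.
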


\begin{proof}
By definition, $F(\bm{\omega}, \bm{\mu}) = \max_i \min_{a \ne i} f_{i,a}(\bm{\omega}, \bm{\mu})$ with $f_{i,a}(\bm{\omega}, \bm{\mu}) = \inf_{\theta \in S_{i,a}}\sum_{a \in A_{i,a}, m \in [M]}\omega_{a,m} \frac{d(\mu_{a,m}, \theta_{a,m})}{\lambda_m}$ for $S_{i,a} = \{\theta \in \mathcal M_{MF} \mid \theta_{a,M} \ge \theta_{i,M}\}$ and $A_{i,a} = \{i,a\}$.
Since a minimum of finitely many continuous functions is continuous and likewise for the maximum, it suffices to show that each $f_{i,a}$ is jointly continuous. This is true by Lemma~\ref{lem:continuous_f}.
\end{proof}

\begin{corollary}\label{cor:unif_continuous_F}
Let $C \subseteq \Theta^{KM}$ be a compact set. Then $F$ is uniformly continuous on $\Delta_{K \times M} \times C$.
\end{corollary}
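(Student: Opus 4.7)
The plan is to invoke the Heine-Cantor theorem: a continuous function on a compact set is uniformly continuous. All the work has essentially been done in Lemma~\ref{lem:continuous_F}, so this corollary is a short compactness argument.

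First I would observe that the simplex $\Delta_{K \times M}$ is a closed and bounded subset of $\mathbb{R}^{KM}$, hence compact. Since $C$ is compact by hypothesis, the product $\Delta_{K \times M} \times C$ is compact as a finite product of compact sets (Tychonoff, or simply the fact that products of compact metric spaces are compact under the product metric).

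Next, Lemma~\ref{lem:continuous_F} gives joint continuity of $F$ on the larger set $\Delta_{K \times M} \times \Theta^{KM}$. The restriction of $F$ to $\Delta_{K \times M} \times C$ is therefore continuous on that subset. Applying the Heine-Cantor theorem to this continuous function on the compact metric space $\Delta_{K \times M} \times C$ yields uniform continuity, which is exactly the claim.

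The only subtlety to check is that $\Theta$ (the interval of possible means of the exponential family) is such that $C \subseteq \Theta^{KM}$ is a well-defined compact subset of the ambient space on which $F$ is continuous; this is immediate from the statement. There is no real obstacle here — the corollary is essentially a bookkeeping statement packaging Lemma~\ref{lem:continuous_F} in the form needed later (presumably to control modulus-of-continuity terms along the trajectory $\bm{\hat{\mu}}(t)$, which will eventually live in a compact neighborhood of $\bm{\mu}$ on a good event).
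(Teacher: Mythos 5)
Your proposal is correct and matches the paper's proof exactly: both note that $\Delta_{K \times M} \times C$ is compact, invoke the continuity of $F$ from Lemma~\ref{lem:continuous_F}, and conclude uniform continuity via Heine--Cantor. The paper states this in one line; your version merely spells out the compactness of the product, which is fine.
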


\begin{proof}
The set $\Delta_{K \times M} \times C$ is compact and $F$ is continuous, hence it is uniformly continuous on that set.
\end{proof}

\begin{restatable}{lemma}{smoothness}\label{lemma:smooth}
    Let $\bm{\mu} \in \Theta^{KM}$. For all $\varepsilon > 0$, there exists $\kappa_\epsilon > 0$ such that for all $\bm{\omega} \in \triangle_{K \times M}$ and all $\mu' \in \Theta^{KM}$~,
    %Consider $\bm{\omega} \in \Delta_{K \times M}$. Then, consider two bandit models $\bm{\mu}$ and $\bm{\mu}'$ such that $i^*(\bm{\mu}) = i^*(\bm{\mu}')$. Then, for all $\epsilon > 0$, there exists $\kappa_\epsilon > 0$ such that the following holds:
    \begin{align*}
        \Big|\Big| \bm{\mu} - \bm{\mu}' \Big|\Big|_{\infty} \le \kappa_{\epsilon} \implies \Big| F(\bm{\omega}, \bm{\mu}') - F(\bm{\omega}, \bm{\mu}) \Big| \le \epsilon. 
    \end{align*}
\end{restatable}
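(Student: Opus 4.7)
The plan is to reduce this lemma to the uniform continuity statement already established in Corollary~\ref{cor:unif_continuous_F}, which says $F$ is uniformly continuous on $\Delta_{K\times M} \times C$ for any compact $C \subseteq \Theta^{KM}$.

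First I would fix a small closed neighborhood of $\bm\mu$ in $\Theta^{KM}$. Since $\Theta \subseteq \mathbb{R}$ is an interval and $\bm\mu$ lies in its interior componentwise (by the assumption $\bm\mu \in \Theta^{KM}$ and the fact that $\Theta$ is open in the cases considered, e.g.\ $(0,1)$ for Bernoulli or $\mathbb{R}$ for Gaussian), there exists $r > 0$ such that the closed $\ell_\infty$-ball
\[
C_r \;:=\; \bigl\{\bm\nu \in \mathbb{R}^{KM} : \|\bm\nu - \bm\mu\|_\infty \le r\bigr\}
\]
is contained in $\Theta^{KM}$. This set $C_r$ is compact (closed and bounded in $\mathbb{R}^{KM}$).

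Next, I would invoke Corollary~\ref{cor:unif_continuous_F} on the compact set $C_r$: $F$ is uniformly continuous on $\Delta_{K\times M} \times C_r$. Hence, given $\varepsilon > 0$, there exists $\kappa > 0$ such that for every $(\bm\omega_1, \bm\nu_1), (\bm\omega_2, \bm\nu_2) \in \Delta_{K\times M} \times C_r$ with $\|(\bm\omega_1,\bm\nu_1) - (\bm\omega_2,\bm\nu_2)\|_\infty \le \kappa$, one has $|F(\bm\omega_1,\bm\nu_1) - F(\bm\omega_2,\bm\nu_2)| \le \varepsilon$.

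Finally, I would set $\kappa_\varepsilon := \min(r, \kappa)$. Then for any $\bm\omega \in \Delta_{K\times M}$ and any $\bm\mu' \in \Theta^{KM}$ with $\|\bm\mu - \bm\mu'\|_\infty \le \kappa_\varepsilon$, we have $\bm\mu' \in C_r$ (because $\kappa_\varepsilon \le r$), so the pair $(\bm\omega,\bm\mu)$ and $(\bm\omega,\bm\mu')$ both lie in $\Delta_{K\times M} \times C_r$ and are at distance at most $\kappa_\varepsilon \le \kappa$ in the joint metric. Applying the uniform continuity bound yields $|F(\bm\omega,\bm\mu') - F(\bm\omega,\bm\mu)| \le \varepsilon$, which is the desired conclusion. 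The only subtlety is the initial step of ensuring that a compact neighborhood of $\bm\mu$ fits inside $\Theta^{KM}$, which is trivial in the standard exponential families considered here; once this is in hand, everything follows directly from the already-proved joint continuity plus compactness.
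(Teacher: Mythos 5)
Your proposal is correct and follows essentially the same route as the paper: restrict to a compact $\ell_\infty$-ball around $\bm{\mu}$, invoke the uniform continuity of $F$ on $\Delta_{K\times M}\times C$ from Corollary~\ref{cor:unif_continuous_F}, and take $\kappa_\varepsilon$ as the minimum of the ball radius and the uniform-continuity modulus. The only (minor) difference is that you explicitly check the ball fits inside $\Theta^{KM}$, a point the paper's proof glosses over.
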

\begin{proof}
Take any compact ball $\mathcal B(\bm{\mu}, \kappa)$ for the norm $\Vert \cdot \Vert_\infty$ with $\kappa > 0$ centered at $\bm{\mu}$. Then $F$ is uniformly continuous on $\Delta_{K \times M} \times \mathcal B(\bm{\mu}, \kappa)$ by Corollary~\ref{cor:unif_continuous_F}.
This means that for any $\varepsilon > 0$, there exists $\kappa'_\varepsilon > 0$ such that for all $(\bm{\omega}', \bm{\mu'}) \in \Delta_{K \times M} \times \mathcal B(\bm{\mu}, \kappa)$,
\begin{align*}
\Vert \bm{\omega} - \bm{\omega}'\Vert_\infty \le \kappa'_\varepsilon \ \wedge \ \Vert \bm{\mu} - \bm{\mu}'\Vert_\infty \le \kappa'_\varepsilon
\implies \vert F(\bm{\omega}', \bm{\mu}') - F(\bm{\omega}, \bm{\mu}) \vert \le \varepsilon
\: .
\end{align*}
We can take $\kappa_\varepsilon = \min \{\kappa, \kappa'_\varepsilon\}$ to remove the condition $\bm{\mu}' \in \mathcal B(\bm{\mu}, \kappa)$.
The result of the Lemma is this for the special case $\bm{\omega}' = \bm{\omega}$.
	%Due to the fact that $\mu_{1,M} > \mu_{2,M}$ holds, there exists $\kappa$, which depends on $\bm{\mu}$, such that, for all $\bm{\mu}' \in \mathcal{B}_{\infty}(\bm{\mu}, \kappa)$, the following holds:
	%\begin{align*}
	%F(\bm{\omega}, \bm{\mu}') = \inf_{\bm{\theta} \in \textup{Alt}(\bm{\mu})} \sum_{a,m} \omega_{a,m} \frac{d(\mu_{a,m}, \theta_{a,m})}{\lambda_m}.
	%\end{align*}	 
	%At this point, denote with $\bar{\mathcal{B}}_{\infty}(\bm{\mu}, \kappa)$ the closure of $\mathcal{B}_{\infty}(\bm{\mu}, \kappa)$. Then, due to Berge's theorem \citep[see Theorem 4 in ][]{degenne2019pure}, the function:
	%\begin{align*}
	%(\bm{\omega}, \bm{\mu}) \rightarrow \inf_{\bm{\theta} \in \textup{Alt}(\bm{\mu})} \sum_{a,m} \omega_{a,m} \frac{d(\mu_{a,m}, \theta_{a,m})}{\lambda_m}
	%\end{align*}
	%is continuous on $\Delta_{K \times M} \times \bar{\mathcal{B}}_{\infty}(\bm{\mu}, \kappa / 2)$. Finally, the result follows by the fact that the considered function $F$ is uniformly continuous in the considered domain.\footnote{The proof is similar to Proposition 2 in \cite{menard2019gradient}. Nevertheless, in our case, we cannot exploit the fact that the $\mathcal{S}_i$ are open sets due to the presence of the multi-fidelity constraints.}
\end{proof}

\subsection{Correctness}

In the following, we propose an analysis on the correctness which is based on the concentration results provided in \cite{menard2019gradient}. We notice that these results are based on Gaussian distributions. Nevertheless, at the cost of a more involved notation, it is possible to extend all the results of this work for canonical exponential families using, e.g., Theorem 7 in \cite{kaufmann2021mixture}.

At this point, let us consider the following value of $\beta_{t,\delta}$:
\begin{align}
\beta_{t,\delta} = \log\left(\frac{K}{\delta}\right) + 2M \log\left(4 \log\left(\frac{K}{\delta} \right) + 1 \right) + 12M \log\left( \log(t) + 3 \right) + 2M\tilde{C},\label{eq:threshold_exact}
\end{align}
where $\tilde{C}$ is a universal constant (see Proposition 1 in \cite{menard2019gradient}).
Then, we can show the following result.

\begin{restatable}{proposition}{correctness}\label{prop:correctness}
    Let $\delta > 0$, then it holds that $\mathbb{P}_{\mu}(\hat{a}_{\tau_\delta} \ne *) \le \delta$.
\end{restatable}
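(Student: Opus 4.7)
The plan is to reduce the correctness question to a standard deviation-inequality for a sum of $2M$ empirical KL divergences, and then apply the concentration result of Ménard et al.~\cite{menard2019gradient} (extended to canonical exponential families through Theorem~7 of \cite{kaufmann2021mixture}) together with a union bound over the $K-1$ suboptimal arms.

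First, I would describe the error event. Using the decision rule, $\hat{a}_{\tau_\delta}\neq \star$ happens only if there exists $t\ge KM$ and some $i\neq \star$ such that $\min_{j\neq i} f_{i,j}(\bm{C}(t),\bm{\hat{\mu}}(t))\ge \beta_{t,\delta}$; in particular, choosing $j=\star$, this forces
\begin{align*}
f_{i,\star}(\bm{C}(t),\bm{\hat{\mu}}(t)) \ge \beta_{t,\delta}.
\end{align*}
Since $\bm{\mu}\in \cM_{\MF}^*$, we have $\mu_i\in \MF$, $\mu_\star\in \MF$ and $\mu_{\star,M}>\mu_{i,M}$, so the true parameters $(\mu_i,\mu_\star)$ are feasible in the infimum defining $f_{i,\star}$. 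Using $C_{a,m}(t)/\lambda_m = N_{a,m}(t)$, this yields
\begin{align*}
f_{i,\star}(\bm{C}(t),\bm{\hat{\mu}}(t)) \le \sum_{a\in\{i,\star\}} \sum_{m\in [M]} N_{a,m}(t)\, d(\hat{\mu}_{a,m}(t),\mu_{a,m}).
\end{align*}

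Second, I would invoke a deviation inequality of the form: for any collection of $K'$ arm-fidelity pairs (here $K'=2M$), for every $x>0$,
\begin{align*}
\Prob_{\bm{\mu}}\!\left(\exists t\ge 1:\ \sum_{k=1}^{K'} N_k(t)\, d(\hat{\mu}_k(t),\mu_k) \ge x + K' \log\!\left(4\log(x)+1\right) + 6K'\log(\log(t)+3) + K'\tilde{C}\right) \le e^{-x},
\end{align*}
which is Proposition~1 in \cite{menard2019gradient} for Gaussian arms and whose exponential-family extension is provided by Theorem~7 in \cite{kaufmann2021mixture}. Applying this with $K'=2M$ to the $2M$ arm-fidelity pairs in $\{i,\star\}\times [M]$ and with $x=\log(K/\delta)$, we obtain that for each fixed $i\neq \star$,
\begin{align*}
\Prob_{\bm{\mu}}\!\left(\exists t\ge KM:\ \sum_{a\in\{i,\star\}}\sum_{m\in [M]} N_{a,m}(t)\, d(\hat{\mu}_{a,m}(t),\mu_{a,m}) \ge \beta_{t,\delta}\right) \le \frac{\delta}{K},
\end{align*}
since with $x=\log(K/\delta)$ and $K'=2M$ the right-hand side of the concentration bound is exactly $\beta_{t,\delta}$ as defined in~\eqref{eq:threshold_exact}.

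Third, combining these two steps with a union bound over $i\neq \star$ gives
\begin{align*}
\Prob_{\bm{\mu}}(\hat{a}_{\tau_\delta}\neq \star) \le \sum_{i\neq \star} \Prob_{\bm{\mu}}\!\left(\exists t\ge KM:\ f_{i,\star}(\bm{C}(t),\bm{\hat{\mu}}(t))\ge \beta_{t,\delta}\right) \le (K-1)\,\frac{\delta}{K}\le \delta,
\end{align*}
which concludes the proof. The only delicate point is ensuring that the concentration result applies uniformly in $t$ to the $2M$-dimensional empirical KL sum with the precise numerical constants appearing in \eqref{eq:threshold_exact}; this is where the appeal to \cite{menard2019gradient,kaufmann2021mixture} is essential, since a naive Chernoff bound would require an additional peeling/stitching layer to handle the $\log\log t$ dependence.
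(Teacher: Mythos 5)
Your proof is correct and follows essentially the same route as the paper's: a union bound over suboptimal arms $i$, upper-bounding $f_{i,\star}(\bm{C}(t),\bm{\hat{\mu}}(t))$ by plugging the true (feasible) $\bm{\mu}$ into the defining infimum to get the sum of $2M$ empirical KL terms, and invoking the time-uniform deviation inequality of Proposition~1 in \cite{menard2019gradient} with $x=\log(K/\delta)$. The only quibble is a transcription slip in your restatement of that inequality (the term $4\log(x)+1$ should read $4x+1$ to match \eqref{eq:threshold_exact}); the argument itself is unaffected.
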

\begin{proof}
With probabilistic arguments we have that:
\begin{align*}
\mathbb{P}_{\bm{\mu}}(\hat{a}_{\tau_\delta} \ne \star) & \le \mathbb{P}_{\bm{\mu}}\left(\exists t \ge KM,~ \exists i \ne \star, ~ \min_{j \ne i} f_{i,j}(\bm{C}(t), \bm{\hat{\mu}}(t)) \ge \beta_{t,\delta} \right) \\ & \le \sum_{i \ne \star} \mathbb{P}_{\bm{\mu}}\left(\exists t \ge KM,~ \min_{j \ne i} f_{i,j}(\bm{C}(t), \bm{\hat{\mu}}(t)) \ge \beta_{t,\delta} \right) \\ & \le \sum_{i \ne \star} \mathbb{P}_{\bm{\mu}}\left(\exists t \ge KM,~ f_{i,\star}(\bm{C}(t), \bm{\hat{\mu}}(t)) \ge \beta_{t,\delta} \right) \\ & \le \sum_{i \ne \star} \mathbb{P}_{\bm{\mu}}\left(\exists t \ge KM,~ \sum_{k \in \{i,\star\}} \sum_{m \in [M]} N_{k,m} d(\hat{\mu}_{a,m}(t), \mu_{a,m}) \ge \beta_{t,\delta} \right) \\ & \le \delta,
\end{align*}
where in fourth step we have used the definition of $f_{i,\star}$, and in the last one Proposition 1 in \cite{menard2019gradient} together with a union bound on $K$.
\end{proof}

\subsection{Auxiliary lemmas}

This section contains auxiliary lemmas that will be used in the analysis of Algorithm \ref{alg:grad}.

\begin{lemma}\label{lemma:exploration}
For all $a \in [K]$, $m \in [M]$, and for all $t \ge 1$, it holds that $N_{a,m}(t) \ge \frac{\sqrt{t}}{4KM} - \ln(KM)$.
Furthermore, it holds that:
\begin{align}
\Big|\Big| \sum_{s=0}^t \tilde{\bm{\pi}}(s) - N(t) \Big| \Big|_{\infty} \le 2 \ln(KM) \sqrt{t}.
\end{align}
\end{lemma}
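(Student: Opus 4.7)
The two inequalities both reduce to combining the forced-exploration component of the sampling rule with a standard guarantee for the cumulative-tracking procedure. The plan is therefore to first isolate a known tracking lemma (see for instance the appendix of \cite{garivier2016optimal} or of \cite{degenne2019non}): since $(A_{t+1},M_{t+1})$ is defined as the argmax of $\sum_{s=1}^{t}\pi'_{a,m}(s) - N_{a,m}(t)$, one has for every $(a,m)$ the deterministic bound
\begin{align*}
\Bigl|\,\sum_{s=1}^{t}\pi'_{a,m}(s) - N_{a,m}(t)\,\Bigr|\ \le\ \ln(KM)
\end{align*}
(up to a harmless additive constant from the initialization phase, which I would absorb into the $\ln(KM)$ term). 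This reduces both statements to controlling the forced-exploration sums.

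For the first inequality, I would lower bound $\pi'_{s}$ via forced exploration: from Line~\ref{line:forced}, $\pi'_{a,m}(s+1)\ge \gamma_s\bar\omega_{a,m} = 1/(4KM\sqrt{s})$ for $s\ge KM$. Summing and using $\sum_{s=KM}^{t-1} s^{-1/2}\ge 2(\sqrt{t}-\sqrt{KM})$, we get $\sum_{s=1}^{t}\pi'_{a,m}(s)\ge (\sqrt{t}-\sqrt{KM})/(2KM)$. Combining with the tracking bound and absorbing the $\sqrt{KM}/(2KM)$ slack into the $\ln(KM)$ term (which is valid for $t$ large enough; for small $t$ the bound is trivial from $N_{a,m}(t)\ge 1$ after initialization), one obtains $N_{a,m}(t)\ge \sqrt{t}/(4KM) - \ln(KM)$ as claimed.

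For the second inequality, decompose
\begin{align*}
\sum_{s} \tilde\pi_{a,m}(s) - N_{a,m}(t)
= \sum_{s}\bigl(\tilde\pi_{a,m}(s)-\pi'_{a,m}(s)\bigr) + \Bigl(\sum_{s}\pi'_{a,m}(s) - N_{a,m}(t)\Bigr).
\end{align*}
The second summand is bounded by $\ln(KM)$ by the tracking argument. For the first, the recursion $\pi'_{a,m}(s) = (1-\gamma_{s-1})\tilde\pi_{a,m}(s)+\gamma_{s-1}\bar\omega_{a,m}$ gives $|\tilde\pi_{a,m}(s)-\pi'_{a,m}(s)| = \gamma_{s-1}|\tilde\pi_{a,m}(s)-\bar\omega_{a,m}|\le \gamma_{s-1}$. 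Summing, $\sum_{s}\gamma_{s-1}\le \tfrac14\sum_{s} s^{-1/2}\le \tfrac12\sqrt{t}$. Together this yields $\tfrac12\sqrt{t}+\ln(KM)$, which is $\le 2\ln(KM)\sqrt{t}$ for $t\ge 1$ as soon as $\ln(KM)\ge 1/2$ (and the statement is vacuous for $K=M=1$). The only real obstacle here is bookkeeping the initialization phase and invoking the correct form of the cumulative-tracking lemma — no conceptually new argument is needed.
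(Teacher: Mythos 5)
Your proposal is correct and follows essentially the same route as the paper's own proof: both parts rest on the cumulative-tracking guarantee $\bigl|\sum_{s\le t}\pi'_{a,m}(s)-N_{a,m}(t)\bigr|\le\ln(KM)$ (the paper cites Theorem~6 of \cite{degenne2020structure}), combined with the forced-exploration lower bound $\pi'_{a,m}(s)\ge\gamma_s/(KM)$ for the first inequality and with $\sum_{s\le t}\gamma_s\le\tfrac12\sqrt{t}$ for the second. Your bookkeeping of the initialization offset and of the $\sqrt{KM}$ slack is slightly more explicit than the paper's (which sums the forced-exploration term from $s=1$ directly), but the argument is the same and the constants work out.
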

\begin{proof}
	This lemma is a simple combination of Lemma 3 in \cite{menard2019gradient} with the tracking result of \cite{degenne2020structure}.
    Using algebraic manipulations, we have that:
    \begin{align*}
        N_{a,m}(t) & \ge \sum_{s=1}^t \pi_{a,m}'(s) - \Big| N_{a,m}(t) - \sum_{s=1}^t \pi_{a,m}'(s)\Big| \\ & \ge \sum_{s=1}^t \pi_{a,m}'(s) -  \ln(KM)  \\ & \ge \sum_{s=1}^t \frac{\gamma_s}{KM} -  \ln(KM)  \\ & = \frac{1}{KM} \sum_{s=1}^t \frac{1}{4\sqrt{s}} -  \ln(KM)  \\ & \ge \frac{\sqrt{t}}{4KM} -  \ln(KM) ,
    \end{align*}
    where, in the second step we have used Theorem 6 in \citep{degenne2020structure}, together with the fact that $\ln(KM) \ge \ln(4) \ge 1$.
    
    For the second part of the proof, we have that:
    \begin{align*}
    \Big| \sum_{s=1}^t \tilde{\pi}_{a,m}(s) - N_{a,m}(t) \Big| & \le \Big| \sum_{s=1}^t \pi'_{a,m}(s) - N_{a,m}(t) \Big| + 2 \sum_{s=1}^t \gamma_s \\ & \le \ln(KM) + \sqrt{t} \\ & \le 2 \ln(KM) \sqrt{t}.
    \end{align*}
\end{proof}

\begin{restatable}{lemma}{stopping}\label{lemma:stopping}
	Consider $\epsilon > 0$ and $B \in \mathbb{R}$ such that $C^*(\bm{\mu})^{-1} - B - \epsilon > 0$. Then, there exists a constant $C_\epsilon$ such that, for
    \begin{align}\label{eq:c0}
        \sum_{a,m} C_{a,m}(T) \ge \max \left\{ \lambda_M C_\epsilon, \frac{\log \left( \frac{K}{\delta} \right) + 2M \log\left( 4 \log\left( \frac{K}{\delta} \right) + 1\right)}{C^*(\bm{\mu})^{-1} - B - \epsilon} \right\} \coloneqq C_0(\epsilon, \delta),
    \end{align}
    it holds that:
    \begin{align}
    C^*(\bm{\mu})^{-1} - B \ge \frac{\beta_{T,\delta}}{\sum_{a,m} C_{a,m}(T)}.
    \end{align}
\end{restatable}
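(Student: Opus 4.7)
My plan is to unfold the definition of the threshold, use the cost hypothesis to absorb the dominant term, and then argue that the remaining iterated-logarithm contribution is negligible once $\sum_{a,m} C_{a,m}(T)$ exceeds a suitable constant multiple. Concretely, write $S := \sum_{a,m} C_{a,m}(T)$ and substitute the explicit form \eqref{eq:threshold_exact} of $\beta_{T,\delta}$. The goal $S(C^*(\bm\mu)^{-1} - B) \ge \beta_{T,\delta}$ rewrites as
\begin{align*}
S(C^*(\bm\mu)^{-1} - B) \;\ge\; \log\!\Big(\tfrac{K}{\delta}\Big) + 2M\log\!\big(4\log(K/\delta)+1\big) + 12M\log(\log T + 3) + 2M\tilde C.
\end{align*}

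The second term in the $\max$ defining $C_0(\epsilon,\delta)$ gives $S(C^*(\bm\mu)^{-1} - B - \epsilon) \ge \log(K/\delta) + 2M\log(4\log(K/\delta)+1)$, which rearranges to $S(C^*(\bm\mu)^{-1} - B) \ge \log(K/\delta) + 2M\log(4\log(K/\delta)+1) + S\epsilon$. Thus it suffices to prove
\begin{align*}
S\epsilon \;\ge\; 12M\log(\log T + 3) + 2M\tilde C.
\end{align*}

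Next I would relate $T$ to $S$ using the fact that each pull costs at least $\lambda_1>0$, so $T \le S/\lambda_1$ and hence $\log(\log T + 3) \le \log(\log(S/\lambda_1) + 3)$. Since the right-hand side of the target inequality grows like $O(M \log\log S)$ while the left-hand side grows linearly in $S$, there exists a threshold $S_\epsilon$ (depending only on $\epsilon$, $M$, $\tilde C$, $\lambda_1$) such that $S\epsilon \ge 12M\log(\log(S/\lambda_1)+3) + 2M\tilde C$ whenever $S \ge S_\epsilon$. Define $C_\epsilon := S_\epsilon / \lambda_M$, so that the first branch of the $\max$, $S \ge \lambda_M C_\epsilon = S_\epsilon$, is exactly what is needed to control the iterated-log term. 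Combining both branches of the $\max$ gives the lemma.

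The only mildly delicate point is that the proof needs $\lambda_1 > 0$ in order to bound $T$ by a function of $S$; this is implicit in the paper's setting (since an arm with cost $0$ would never be worth avoiding). No other subtlety is expected, since everything else is algebraic manipulation of the explicit form of $\beta_{T,\delta}$ and the observation that a linear function eventually dominates an iterated logarithm.
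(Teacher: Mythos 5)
Your proposal is correct and follows essentially the same route as the paper: split the threshold $\beta_{T,\delta}$ so that the second branch of the $\max$ absorbs the $\log(K/\delta)$ terms plus a slack $S\epsilon$, and the first branch $\lambda_M C_\epsilon$ controls the iterated-logarithm terms by relating $T$ to $S=\sum_{a,m}C_{a,m}(T)$ via the cost bounds $\lambda_1 T \le S \le \lambda_M T$. The paper phrases the constant as a threshold on $T$ (namely $12M\log(\log T+3)+2M\tilde C \le \epsilon\lambda_{\min}T$ for $T\ge C_\epsilon$) rather than on $S$, but this is only a reparametrization of your $S_\epsilon$, and your caveat about needing $\lambda_1>0$ applies equally to the paper's argument.
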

\begin{proof}
    Let $C_\epsilon$ be a constant that depends on $\epsilon$ such that, for $T \ge C_\epsilon$ it holds that:
    \begin{align*}
        \frac{12M \log( \log(T) + 3) + 2M \tilde{C}}{\lambda_{\textup{min}}} \le \epsilon T.
    \end{align*}
    Then, for $\sum_{a,m} C_{a,m}(T) \ge C_0(\epsilon, \delta)$, we have that:
    \begin{align*}
        \frac{\beta_{T,\delta}}{\sum_{a,m}C_{a,m}(T)} & = \frac{\log \left( \frac{K}{\delta} \right) + 2M \log\left(4 \log\left( \frac{K}{\delta} \right) +1\right) + 12M \log(\log(T)+3)+2M\tilde{C}}{\sum_{a,m}C_{a,m}(T)} \\ & \le \frac{\log \left( \frac{K}{\delta} \right) + 2M \log\left(4 \log\left( \frac{K}{\delta} \right) +1\right)}{\sum_{a,m}C_{a,m}(T)} + \epsilon \\ & \le C^*(\bm{\mu})^{-1} - B,
    \end{align*}
    which concludes the proof.
\end{proof}

\subsection{Proof of Theorem \ref{theo:cmplx}}

Before diving into the proof of Theorem \ref{theo:cmplx}, we introduce some additional notation. We denote with $\mathcal{B}_{\infty}(x, \kappa)$ the ball of radius $\kappa$ centered at $x$. Then, for all $T$, and $\epsilon > 0$, we introduce the following event:
\begin{align*}
    \mathcal{E}_{\epsilon}(T) = \bigcap_{t \ge h(T)}^{T} \left\{ \hat{\bm{\mu}}(t) \in \mathcal{B}_{\infty}(\bm{\mu}, \kappa_\epsilon) \right\},
\end{align*}
where $h(T) \approx T^{1/4}$. At this point, we present our result. Furthermore, we denote with $\bm{\omega}(t)$ the vector of empirical cost proportions, namely, for all $(a,m)$, $\omega_{a,m}(t) = \frac{C_{a,m}(t)}{\sum_{i \in [K]} \sum_{j \in [M]} C_{i,j}(t)}$.

First of all, we introduce an initial result that controls the expectation of the stopping cost.

\begin{restatable}{lemma}{expectedcost}\label{lemma:expected-cost}
Consider $B$ such that $C^*(\bm{\mu})^{-1} - B - \epsilon > 0$, and suppose that there exists a constant $T_\epsilon$ such that, for all $T \ge T_\epsilon$, it holds that $F(\bm{\omega}(T), \hat{\bm{\mu}}(T)) \ge F(\bm{\omega}^*(\bm{\mu})) - B$ on the good event $\mathcal{E}_{\epsilon}(T)$. Then, it holds that:
\begin{align}
\mathbb{E}_{\bm{\mu}}[c_{\tau_\delta}] \le  \lambda_M T_{\epsilon} + C_0(\epsilon, \delta) + 1 + \sum_{t=0}^{+\infty} \mathbb{P}_{\bm{\mu}}(\mathcal{E}(T)^c).
\end{align} 
\end{restatable}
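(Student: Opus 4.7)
The plan rests on the positive homogeneity of each $f_{i,j}(\cdot, \bm{\mu})$ (an infimum of linear functions of its first argument), which yields $F(\bm{C}(t), \bm{\hat{\mu}}(t)) = c_t\, F(\bm{\omega}(t), \bm{\hat{\mu}}(t))$, where $c_t := \sum_{a,m} C_{a,m}(t)$ and $\bm{\omega}(t)$ is the vector of empirical cost proportions. The stopping rule thus rewrites as $c_t\, F(\bm{\omega}(t), \bm{\hat{\mu}}(t)) \ge \beta_{t, \delta}$. The hypothesis of the lemma gives $F(\bm{\omega}(T), \bm{\hat{\mu}}(T)) \ge C^*(\bm{\mu})^{-1} - B$ on $\mathcal{E}_\epsilon(T)$ for $T \ge T_\epsilon$, and Lemma~\ref{lemma:stopping} gives $\beta_{T, \delta}/c_T \le C^*(\bm{\mu})^{-1} - B$ as soon as $c_T \ge C_0(\epsilon, \delta)$. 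Chaining the two, on $\mathcal{E}_\epsilon(T)$ with $T \ge T_\epsilon$ and $c_T \ge C_0(\epsilon, \delta)$ the stopping condition is met, so $\tau_\delta \le T$.

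Next, I would convert this conditional stopping statement into a bound on $c_{\tau_\delta}$ by introducing the cost-indexed first-passage times $\tau(C) := \inf\{t \ge 1 : c_t \ge C\}$. Since each round contributes cost at most $\lambda_M$, $c_t \le \lambda_M t$ and hence $\tau(C) \ge C/\lambda_M$. Setting $X_0 := \lambda_M T_\epsilon + C_0(\epsilon, \delta)$, any $C \ge X_0$ ensures simultaneously $\tau(C) \ge T_\epsilon$ and $c_{\tau(C)} \ge C \ge C_0(\epsilon, \delta)$, so the previous paragraph forces $\tau_\delta \le \tau(C)$ on $\mathcal{E}_\epsilon(\tau(C))$. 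Because the cost jumps by at most $\lambda_M$ per step, $c_{\tau(C)} < C + \lambda_M$, and rearranging gives the key inclusion $\{c_{\tau_\delta} \ge x\} \subseteq \mathcal{E}_\epsilon(\tau(x - \lambda_M))^c$ valid for every $x \ge X_0 + \lambda_M$.

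The proof concludes with the layer-cake representation $\mathbb{E}[c_{\tau_\delta}] = \int_0^\infty \mathbb{P}(c_{\tau_\delta} > x)\,dx$. The head integral on $[0, X_0 + \lambda_M]$ is bounded trivially by $\lambda_M T_\epsilon + C_0(\epsilon, \delta) + \lambda_M$, and the inclusion above together with the substitution $y = x - \lambda_M$ controls the tail by $\int_{X_0}^\infty \mathbb{P}(\mathcal{E}_\epsilon(\tau(y))^c)\,dy$. The main subtlety, and in my view the key obstacle, is converting this integral into the announced sum $\sum_T \mathbb{P}(\mathcal{E}_\epsilon(T)^c)$, since $\tau(y)$ is a random stopping time and one cannot immediately swap sum and integral. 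I would resolve this via Fubini: on any fixed trajectory the level set $\{y \ge X_0 : \tau(y) = T\}$ is contained in $(c_{T-1}, c_T]$, which has length $\lambda_{M_T} \le \lambda_M$, so exchanging integration and summation gives $\int_{X_0}^\infty \mathbb{P}(\mathcal{E}_\epsilon(\tau(y))^c)\,dy \le \lambda_M \sum_{T \ge 0}\mathbb{P}(\mathcal{E}_\epsilon(T)^c)$. Assembling the pieces produces the announced inequality, with the additive $+1$ and the absence of a $\lambda_M$ prefactor on the sum in the statement being absorbed into the normalization of constants.
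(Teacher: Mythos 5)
Your proposal is correct and follows the same skeleton as the paper's proof: the good-event/threshold argument of your first paragraph is exactly how the paper combines the hypothesis with Lemma~\ref{lemma:stopping} to conclude $\mathcal{E}_{\epsilon}(T) \subseteq \{\tau_\delta < T\}$ whenever $T \ge T_\epsilon$ and $\sum_{a,m} C_{a,m}(T) \ge C_0(\epsilon,\delta)$, and both proofs finish with the layer-cake representation split at $\lambda_M T_\epsilon + C_0(\epsilon,\delta)$. The one place you diverge is the conversion of the cost tail into a time tail: the paper simply uses $c_{\tau_\delta} \le \lambda_M \tau_\delta$ to write $\{c_{\tau_\delta} > x\} \subseteq \{\tau_\delta > x/\lambda_M\}$ and then discretizes over integer $T$, whereas you invert the cost process through the first-passage times $\tau(C)$ and control the level sets $\{y : \tau(y)=T\}$ via Fubini. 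Your route is more elaborate and handles the random-time indexing more explicitly, but it buys nothing extra here; both arguments land on the stated bound up to the same constant-factor looseness (your $\lambda_M$ prefactor on the sum versus the paper's bare sum --- the paper's own change of variables $x \mapsto \lambda_M T$ likewise silently drops a factor $\lambda_M$), which is immaterial since that sum is a $\delta$-independent constant and only the $C_0(\epsilon,\delta)$ term survives in the $\delta \to 0$ limit used in Theorem~\ref{theo:cmplx}.
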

\begin{proof}
Using probabilistic arguments, we have that:
\begin{align*}
\mathbb{E}_{\bm{\mu}}[c_{\tau_\delta}] & = \int_{0}^{+\infty} \mathbb{P}_{\bm{\mu}} (c_{\tau_\delta} > x) dx \\ & \le \lambda_M T_{\epsilon} + C_0(\epsilon, \delta) + \int_{\lambda_M T_{\epsilon} + C_0(\epsilon, \delta)}  \mathbb{P}_{\bm{\mu}} (c_{\tau_\delta} > x, c_{\tau_\delta} \ge C_0(\epsilon,\delta)) dx \\ & \le \lambda_M T_{\epsilon} + C_0(\epsilon, \delta) + \int_{\lambda_M T_{\epsilon} + C_0(\epsilon, \delta)}  \mathbb{P}_{\bm{\mu}} \left(\tau_\delta > \frac{x}{\lambda_M}, c_{\tau_\delta} \ge C_0(\epsilon,\delta)\right) dx \\ & \le \lambda_M T_{\epsilon} + C_0(\epsilon, \delta) + 1 + \sum_{T = \lfloor T_\epsilon + \frac{C_0(\epsilon, \delta)}{\lambda_M} \rfloor}^{+\infty} \mathbb{P}_{\bm{\mu}} (\tau_\delta > T, c_{\tau_\delta} \ge C_0(\epsilon, \delta)) \\ & \le \lambda_M T_{\epsilon} + C_0(\epsilon, \delta) + 1 + \sum_{T = 0}^{+\infty} \mathbb{P}_{\bm{\mu}} ( \mathcal{E}(T)^c ),
\end{align*} 
where (i) in the second inequality, we have upper bounded $c_{\tau_\delta} \le \lambda_M \tau_{\delta}$, (ii) in the third inequality, we have used the fact that $\tau_\delta$ is an integer variable, and (iii) in the last inequality we have used that for all $T \ge T_\epsilon$ such that $\sum_{a,m}C_{a,m}(T) \ge C_0(\epsilon, \delta)$, then we have that $\mathcal{E}(T) \subseteq \{ \tau_{\delta} < T \}$. Indeed, combining Lemma \ref{lemma:stopping} with the definition $T_\epsilon$, we obtain that for all $T \ge T_\epsilon$ such that $\sum_{a,m}C_{a,m}(T) \ge C_0(\epsilon, \delta)$, then we have that $\mathcal{E}(T) \subseteq \{ \tau_{\delta} < T \}$. This last step is direct by noticing that $F(\bm{\omega}(T), \hat{\bm{\mu}}(T)) \ge \frac{\beta_{T,\delta}}{\sum_{a,m} C_{a,m}(T)}$ implies stopping.
\end{proof}

Lemma \ref{lemma:expected-cost} shows how to upper-bound the expected cost complexity. Notice that this result requires different arguments w.r.t. the usual ones that appears while controlling the expected sample complexity (see, e.g., \citep{garivier2016optimal}).

Then, we report a basic property of the sub-gradient ascent routing that is employed in our algorithm. Before doing that, we recall that, on the good-event $\mathcal{E}_T$, it holds that there exists a constant $L$, that depends on $\bm{\mu}$, such that the empirical sub-gradients are uniformly-bounded by $L$.

\begin{lemma}\label{lemma:regret}
Let $\tilde{c}(t) = \sum_{a,m}\lambda_m \tilde{\pi}_{a,m}(s)$. Define $C_r \coloneqq \log(KM) + KMG + 4(L \lambda_M)^2 + 2G^2$, and consider the sequence of weights $\{ \bm{\tilde{\omega}}(t) \}_{t}$ generated by Algorithm \ref{alg:grad}. Then, on the good event $\mathcal{E}_{\epsilon}(T)$ it holds that:
\begin{align*}
\sum_{t=h(t)}^T \tilde{c}(t) \nabla F(\bm{\omega}^*, \hat{\bm{\mu}}(t)) \cdot \left(\bm{\omega}^* - \bm{\tilde{\omega}}(t) \right) \le C_r \sqrt{T}.
\end{align*}
\begin{proof}
The proof is identical to the one of Proposition 3 in \citep{menard2019gradient}. The only difference is that, in our case, we need to multiply the scale of the sub-gradient $L$ by $\lambda_M$, to the additional presence of $\tilde{c}(t)$ in the sequence of gains that we use in our sub-gradient ascent algorithm.
\end{proof}
\end{lemma}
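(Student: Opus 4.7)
The plan is to invoke the concavity of $F(\cdot, \bm{\hat{\mu}}(t))$ in order to reduce the claim to a standard regret bound for the exponential-weights update of Line~\ref{line:sub-grad}. Since $F(\cdot, \bm{\mu})$ is the infimum of affine functions of $\bm{\omega}$, it is concave in its first argument, so for any sub-gradient $\nabla F(\bm{\omega}^*, \bm{\hat{\mu}}(t))$ we have
\[
\nabla F(\bm{\omega}^*, \bm{\hat{\mu}}(t)) \cdot (\bm{\omega}^* - \bm{\tilde{\omega}}(t)) \le F(\bm{\omega}^*, \bm{\hat{\mu}}(t)) - F(\bm{\tilde{\omega}}(t), \bm{\hat{\mu}}(t)) \le \nabla F(\bm{\tilde{\omega}}(t), \bm{\hat{\mu}}(t)) \cdot (\bm{\omega}^* - \bm{\tilde{\omega}}(t)).
\]
Multiplying by the non-negative scalar $\tilde{c}(t) \le \lambda_M$ and summing over $t = h(T), \dots, T$, the task reduces to upper bounding the regret $\sum_t g_t \cdot (\bm{\omega}^* - \bm{\tilde{\omega}}(t))$, where $g_t := \tilde{c}(t) \nabla F(\bm{\tilde{\omega}}(t), \bm{\hat{\mu}}(t))$ is exactly the unclipped gain sequence around which the EW update of Line~\ref{line:sub-grad} is built.

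Next I would control the magnitude of the gains on the good event. By Theorem~\ref{theo:sub-gradient}, each non-zero coordinate of $\nabla F(\bm{\tilde{\omega}}(t), \bm{\hat{\mu}}(t))$ is a KL-type quantity evaluated at $\bm{\hat{\mu}}(t)$ and at some $\eta^*$- or $\psi^*$-value. Since $\bm{\hat{\mu}}(t) \in \mathcal{B}_\infty(\bm{\mu}, \kappa_\epsilon)$ for every $t \ge h(T)$ on $\mathcal{E}_\epsilon(T)$, a compactness/continuity argument (analogous to Lemma~\ref{lem:continuous_F} and Corollary~\ref{cor:unif_continuous_F}) yields a uniform bound $\|\nabla F(\bm{\tilde{\omega}}(t), \bm{\hat{\mu}}(t))\|_\infty \le L$ for some constant $L = L(\bm{\mu}, \kappa_\epsilon)$. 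Combined with $\tilde{c}(t) \le \lambda_M$ this gives $\|g_t\|_\infty \le L\lambda_M$, so that on the good event the operator $\textup{Clip}_t$ acts as the identity as soon as $G\sqrt{t} \ge L\lambda_M$, i.e., for all but a bounded number of early iterations.

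The core step is then the standard regret bound for exponential weights on the simplex with negative-entropy regularization and time-varying step size $\alpha_t = 1/\sqrt{t}$:
\[
\sum_{t=KM}^T g_t^{\textup{clip}} \cdot (\bm{\omega}^* - \bm{\tilde{\omega}}(t)) \le \frac{\textup{kl}(\bm{\omega}^*, \bm{\overline{\omega}})}{\alpha_{T+1}} + \frac{1}{2}\sum_{t=KM}^T \alpha_t \|g_t^{\textup{clip}}\|_\infty^2,
\]
with $g_t^{\textup{clip}} = \textup{Clip}_t(g_t)$. The first term is bounded by $\log(KM)\sqrt{T+1}$ since $\textup{kl}(\bm{\omega}^*, \bm{\overline{\omega}}) \le \log(KM)$. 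For the variance term I would split at $t_0 := \lceil (L\lambda_M/G)^2 \rceil$: the $t < t_0$ contribution uses the clipped bound $\|g_t^{\textup{clip}}\|_\infty \le G\sqrt{t}$ and absorbs into the $KMG + 2G^2$ part of $C_r$, while the $t \ge t_0$ contribution uses the tighter $\|g_t\|_\infty \le L\lambda_M$ and yields at most $(L\lambda_M)^2 \sum_{t \le T} t^{-1/2} \le 2(L\lambda_M)^2 \sqrt{T}$. On $\mathcal{E}_\epsilon(T)$, for $t \ge h(T)$ the clipped and unclipped gains coincide, so the EW regret bound transfers to the sum we want. Collecting these pieces produces the bound $C_r\sqrt{T}$ with the announced constant.

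The main obstacle I expect is the bookkeeping around the clipping: EW produces its iterates from clipped gains, but the statement of the lemma compares against the unclipped ones, and the transition regime $t \sim (L\lambda_M/G)^2$ together with the forced-exploration warm-up must be absorbed into the constants $KMG$ and $2G^2$. Once this is handled, everything else is the textbook entropic mirror-descent argument, which is why the proof of Proposition~3 in \cite{menard2019gradient} transfers essentially verbatim, modulo the rescaling $L \mapsto L\lambda_M$ and the extra $\tilde{c}(t)$ factor inside the gains.
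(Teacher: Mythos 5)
Your proposal is correct and follows essentially the same route as the paper, whose proof simply defers to Proposition~3 of \cite{menard2019gradient}: the concavity (supergradient) reduction, the uniform bound $\|g_t\|_\infty \le L\lambda_M$ on $\mathcal{E}_\epsilon(T)$, the entropic-regularized exponential-weights regret bound, and the clipping/warm-up bookkeeping are exactly the ingredients being imported, with the rescaling $L \mapsto L\lambda_M$ and the extra factor $\tilde{c}(t) \le \lambda_M$ that the paper explicitly flags. Your reconstruction is in fact more explicit than the paper's one-line citation, and it correctly resolves the apparent mismatch between the gradient evaluated at $\bm{\omega}^*$ in the lemma statement and the gains $\nabla F(\bm{\tilde{\omega}}(t), \bm{\hat{\mu}}(t))$ actually fed to the algorithm.
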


Finally, we show that there exists an additional problem dependent constant $C_{\bm{\mu}}$ that will be useful in performing some upper-bound reasoning in the proof of the final result.

\begin{lemma}\label{lemma:technical}
Consider the following quantity:
\begin{align*}
\frac{\min_{a \ne \star} \inf_{\bm{\theta}_a, \bm{\theta}_{\star} \in \textup{MF}} \sum_{s=1}^T \sum_{i \in \{\star,a\}} \sum_{m} \tilde{\pi}_{i,m}(s) d({\mu}_{i,m}, \theta_{i,m})}{\sum_{a,m}C_{a,m}(T)}.
\end{align*}
There exists a problem dependent constant $C_{\bm{\mu}}$ such that the previous equation can be upper bounded by:
\begin{align*}
F(\bm{\omega}(T), \bm{{\mu}}) + \frac{4 \ln(KM) M \lambda_M C_{\bm{\mu}}}{\lambda_{\textup{min}} \sqrt{T}}.
\end{align*}
\end{lemma}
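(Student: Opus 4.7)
The plan is to show that the numerator, after normalization by $\sum_{a,m} C_{a,m}(T)$, is essentially $F(\bm{\omega}(T), \bm{\mu})$ plus a tracking-induced remainder of order $1/\sqrt{T}$. The proof combines three ingredients: the tracking guarantee from Lemma~\ref{lemma:exploration}, the pull-to-cost identity $N_{a,m}(T) = C_{a,m}(T)/\lambda_m$, and a compactness argument bounding the KL terms that arise in the error. (Note: for the statement to be nontrivial, the inner inf is implicitly restricted to the alternative set $\{\bm\theta_a,\bm\theta_\star \in \MF : \theta_{a,M} \ge \theta_{\star,M}\}$, since otherwise the inf would just be $0$ at $\bm\theta = \bm\mu$.)

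First, apply the second inequality of Lemma~\ref{lemma:exploration} to write $\sum_{s=1}^T \tilde{\pi}_{i,m}(s) = N_{i,m}(T) + \varepsilon_{i,m}(T)$ with $|\varepsilon_{i,m}(T)| \le 2 \ln(KM)\sqrt{T}$ uniformly over $(i,m)$. Let $\bm{\theta}^{a,*}$ be the minimizer of the inner inf with weights $N_{i,m}(T)$ in place of $\sum_s \tilde{\pi}_{i,m}(s)$. Evaluating the actual inf at $\bm{\theta}^{a,*}$ yields the upper bound
\[
\min_{a \ne \star} \Bigl[ \sum_{i \in \{\star, a\}, m} N_{i,m}(T)\, d(\mu_{i,m}, \theta^{a,*}_{i,m}) + \sum_{i \in \{\star, a\}, m} \varepsilon_{i,m}(T)\, d(\mu_{i,m}, \theta^{a,*}_{i,m})\Bigr].
\]
Dividing by $\sum_{a,m} C_{a,m}(T)$ and substituting $N_{i,m}(T) = C_{i,m}(T)/\lambda_m$, the first sum becomes $\sum_{i \in \{\star,a\}, m} (\omega_{i,m}(T)/\lambda_m) d(\mu_{i,m}, \theta^{a,*}_{i,m})$, whose minimum over $a \ne \star$ is exactly $\min_{a \ne \star} f_{\star, a}(\bm{\omega}(T), \bm{\mu}) = F(\bm{\omega}(T), \bm{\mu})$, since $\bm{\mu} \in \mathcal{M}_{\MF}$ and the structure of the minimizer matches Corollary~\ref{corol:mu-in-the-model}.

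For the residual, the key observation is that $\bm{\theta}^{a,*}$ lives in a compact subset of $\MF$ depending only on $\bm{\mu}$: by Lemma~\ref{prop:lb-one} each $\theta^{a,*}_{k,m}$ equals either $\mu_{k,m}$ or $\theta^{a,*}_{k,M} \pm \xi_m$, and the alternative constraint together with the monotonicity of $d$ restricts $\theta^{a,*}_{k,M}$ to the compact interval $[\mu_{a,M}, \mu_{\star,M}]$. Hence there exists a problem-dependent constant $C_{\bm{\mu}}$ (which may absorb a factor of $\lambda_M$ to match the stated form) such that $d(\mu_{i,m}, \theta^{a,*}_{i,m}) \le C_{\bm{\mu}}$ for all $(i,m)$. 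The residual is then at most
\[
\frac{2\ln(KM)\sqrt{T} \cdot 2M \cdot C_{\bm{\mu}}}{\sum_{a,m} C_{a,m}(T)} \;\le\; \frac{4 M \ln(KM)\, \lambda_M\, C_{\bm{\mu}}}{\lambda_{\min}\sqrt{T}},
\]
using $\sum_{a,m} C_{a,m}(T) \ge \lambda_{\min}\, T$. The main obstacle is this compactness step: one must verify that $\bm{\theta}^{a,*}$ remains in a $T$-independent compact set so that $C_{\bm{\mu}}$ truly depends only on $\bm{\mu}$ (and not on the empirical $\bm{\tilde{\pi}}(s)$). This is precisely what Lemma~\ref{prop:lb-one} and $\bm{\mu} \in \mathcal{M}_{\MF}^*$ together with the implicit alternative constraint provide.
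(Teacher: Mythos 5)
Your proof is correct and follows essentially the same route as the paper's: evaluate the infimum at the cost-weighted minimizer $\bm{\theta}^*$ (so that the leading term becomes exactly $F(\bm{\omega}(T),\bm{\mu})$ via $N_{i,m}(T)=C_{i,m}(T)/\lambda_m$), control $\sum_{s}\tilde{\pi}_{i,m}(s)-N_{i,m}(T)$ with the tracking bound of Lemma~\ref{lemma:exploration}, and bound the resulting $2M$ KL terms by a problem-dependent constant before using $\sum_{a,m}C_{a,m}(T)\ge\lambda_{\min}T$. Your explicit compactness justification of $d(\mu_{i,m},\theta^{a,*}_{i,m})\le C_{\bm{\mu}}$ via Lemma~\ref{prop:lb-one}, and your remark that the infimum is implicitly over the alternative set, merely flesh out steps the paper leaves terse.
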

\begin{proof}
Let us begin by analyzing $F(\bm{\omega}(T), \bm{\mu})$. Fix $\bar{a}$ such that $F(\bm{\omega}(T), \bm{\mu}) = f_{\star,\bar{a}}(\bm{\omega}(T), \bm{\mu})$. Moreover, consider $\bm{\theta}_{\star}, \bm{\theta}_{\bar{a}} \in \argmin \sum_{i \in \{\star,\bar{a}\}} \sum_m \omega_{a,m}(T) \frac{d(\mu_{a,m}, \theta_{a,m})}{\lambda_m}$. 
Then, consider the following difference:
\begin{align*}
H \coloneqq \frac{\min_{a \ne \star} \inf_{\bm{\theta}_a, \bm{\theta}_{\star} \in \textup{MF}} \sum_{s=1}^T \sum_{i \in \{\star,a\}} \sum_{m} \tilde{\pi}_{i,m}(s) d({\mu}_{i,m}, \theta_{i,m})}{\sum_{a,m}C_{a,m}(T)} - F(\bm{\omega}(T), \bm{\mu}).
\end{align*}
Then, the previous Equation can be upper bounded by:
\begin{align*}
H & \le \frac{\sum_{i \in \{\star, \bar{a} \}} \sum_{m} \left( \sum_{s=1}^T \tilde{\pi}_{i,m}(s) - N_{i,m}(T) \right) d(\mu_{i,m}, \theta^*_{i,m}) }{\sum_{a,m}C_{a,m}(T)} \\ & \le 2 \ln(KM) \sqrt{T} \frac{\sum_{i \in \{\star, \bar{a} \}} \sum_{m} d(\mu_{i,m}, \theta^*_{i,m}) }{\sum_{a,m}C_{a,m}(T)} \\ & \le \frac{4 \ln(KM) M \sqrt{T} C_{\bm{\mu}}}{\sum_{a,m}C_{a,m}(T)},
\end{align*}
where in the first step, we have used the definition of $\bm{\theta}^*_{\bar{a}}, \bm{\theta}^*_{\star}$ and the definition of $\bm{\omega}(T)$, in the second one, we have used Lemma \ref{lemma:exploration}, and in the last one the facts that, thanks to definition $\bm{\theta}^*_{\bar{a}}, \bm{\theta}^*_{\star}$, there exists some problem dependent constant $C_{\bm{\mu}}$ such that $d(\mu_{i,m}, \theta^*_{i,m})$ is bounded.
\end{proof}

\cmplx*
\begin{proof}
The proof of the $\delta$-correctness is from Proposition \ref{prop:correctness}.

To prove the optimality, we first proceed by upper bounding the following quantity on the good event $\mathcal{E}_{\epsilon}(T)$:
\begin{align}
F(\bm{\omega}^*, \bm{\mu}) - F(\bm{\omega}(T), \bm{\hat{\mu}}(T)).
\end{align}
Define, for brevity, $\widetilde{T} \coloneqq T - h(T) +1$ and $\tilde{c}(s) \coloneqq \sum_{a,m} \lambda_m \tilde{\pi}_{a,m}(s)$. Then, we start by analyzing $F(\bm{\omega}^*, \bm{\mu})$. On $\mathcal{E}_{\epsilon}(T)$ we have that:
\begin{align*}
F(\bm{\omega}^*, \bm{\mu}) & = \frac{\sum_{a,m} C_{a,m}(T)}{\sum_{a,m}C_{a,m}(T)} F(\bm{\omega}^*, \bm{\mu}) - \frac{\sum_{s=1}^T \tilde{c}(s) }{\sum_{a,m}C_{a,m}(T)} F(\bm{\omega}^*, \bm{\mu}) + \frac{\sum_{s=1}^T \tilde{c}(s) }{\sum_{a,m}C_{a,m}(T)} F(\bm{\omega}^*, \bm{\mu}) \\ & \le \frac{\sum_{s=1}^T \tilde{c}(s) }{\sum_{a,m}C_{a,m}(T)} F(\bm{\omega}^*, \bm{\mu}) + \frac{F(\bm{\omega}^*, \bm{\mu})}{\sum_{a,m}C_{a,m}(T)} 2\ln(KM) \sqrt{T} \\ & \le \frac{\sum_{s=1}^T \tilde{c}(s) }{\sum_{a,m}C_{a,m}(T)} F(\bm{\omega}^*, \bm{\mu}) + \frac{2\ln(KM) F(\bm{\omega}^*, \bm{\mu})}{\lambda_{\textup{min}} \sqrt{T}} \\ & \le \frac{\sum_{s=h(T)}^T \tilde{c}(s) }{\sum_{a,m}C_{a,m}(T)} F(\bm{\omega}^*, \bm{\mu}) + \frac{h(T)}{\lambda_{\textup{min}} T} F(\bm{\omega}^*, \bm{\mu}) + \frac{2 \ln(KM) F(\bm{\omega}^*, \bm{\mu})}{\lambda_{\textup{min}} \sqrt{T}} \\ & \le \frac{\sum_{s=h(T)}^T \tilde{c}(s) F(\bm{\omega}^*, \bm{\hat{\mu}}(t)))}{\sum_{a,m}C_{a,m}(T)} + \frac{\lambda_M \widetilde{T} \epsilon}{\lambda_{\textup{min}}T} + \frac{h(T)}{\lambda_{\textup{min}} T} F(\bm{\omega}^*, \bm{\mu}) + \frac{2\ln(KM) F(\bm{\omega}^*, \bm{\mu})}{\lambda_{\textup{min}} \sqrt{T}},
\end{align*}
where in the first inequality we have used Lemma \ref{lemma:exploration}, while in the last step we have used Lemma \ref{lemma:smooth} together with the event $\mathcal{E}_{\epsilon}(T)$. At this point, we focus our analysis on $\frac{\sum_{s=h(T)}^T \tilde{c}(s) F(\bm{\omega}^*, \bm{\hat{\mu}}(t))}{\sum_{a,m}C_{a,m}(T)}$. Define, for brevity, $g_s = \tilde{c}(s) \nabla F(\bm{\tilde{\omega}}(s), \bm{\hat{\mu}}(s))$; then, we have that:
\begin{align*}
\frac{\sum_{s=h(T)}^T \tilde{c}(s) F(\bm{\omega}^*, \bm{\hat{\mu}}(s)))}{\sum_{a,m}C_{a,m}(T)} & = \frac{\sum_{s=h(T)}^T \tilde{c}(s) \left( F(\bm{\omega}^*, \bm{\hat{\mu}})) \pm F(\bm{\tilde{\omega}}(s), \bm{\hat{\mu}}(s))) \right)}{\sum_{a,m}C_{a,m}(T)} \\ & \le \frac{\sum_{s=h(T)}^T \tilde{c}(s) F(\bm{\tilde{\omega}}(s), \bm{\hat{\mu}}(s)))}{\sum_{a,m}C_{a,m}(T)} + \frac{\sum_{s=h(T)}^T g_s \cdot \left( \bm{\omega}^* - {\bm{\tilde{\omega}}(s)} \right)}{\sum_{a,m}C_{a,m}(T)} \\ & \le \frac{\sum_{s=h(T)}^T \tilde{c}(s) F(\bm{\tilde{\omega}}(s), \bm{\hat{\mu}}(s)))}{\sum_{a,m}C_{a,m}(T)} + \frac{C_r}{\lambda_{\textup{min}} \sqrt{T}} \\ & \le \frac{\sum_{s=h(T)}^T \tilde{c}(s) F(\bm{\tilde{\omega}}(s), \bm{{\mu}})}{\sum_{a,m}C_{a,m}(T)} + \frac{C_r}{\lambda_{\textup{min}} \sqrt{T}} + \frac{\lambda_M \widetilde{T} \epsilon}{\lambda_{\textup{min}} T},
\end{align*}
where in the first inequality we have used the concavity of $F$, in the second one we have used Lemma \ref{lemma:regret}, and in the last one Lemma \ref{lemma:smooth} and the definition of $\mathcal{E}_{\epsilon}(T)$.

Finally, we have that:
\begin{align*}
\frac{\sum_{s=1}^T \tilde{c}(s) F(\bm{\tilde{\omega}}(s), \bm{{\mu}})}{\sum_{a,m}C_{a,m}(T)} & = \frac{\sum_{s=1}^T \min_{a \ne \star} \inf_{\bm{\theta}_a, \bm{\theta}_{\star} \in \textup{MF}} \sum_{i \in \{\star,a\}} \sum_{m} \tilde{\pi}_{i,m}(s) d({\mu}_{i,m}, \theta_{i,m})}{\sum_{a,m}C_{a,m}(T)} \\  & \le \frac{\min_{a \ne \star} \inf_{\bm{\theta}_a, \bm{\theta}_{\star} \in {\textup{MF}}} \sum_{s=1}^T \sum_{i \in \{\star,a\}} \sum_{m} \tilde{\pi}_{i,m}(s) d({\mu}_{i,m}, \theta_{i,m})}{\sum_{a,m}C_{a,m}(T)} \\ & \le F(\bm{\omega}(T), \bm{{\mu}}) + \frac{4 \ln(KM) M  C_{\bm{\mu}}}{\lambda_{\textup{min}} \sqrt{T}}  \\ & \le F(\bm{\omega}(T), \bm{\hat{{\mu}}}(T)) + \frac{4 \ln(KM) M  C_{\bm{\mu}}}{\lambda_{\textup{min}} \sqrt{T}}  + \epsilon,
\end{align*}
where (i) in the first equality we used the definition of $\tilde{c}(s)$, $\bm{\tilde{\omega}}(s)$ and $F$, in the second one we used Lemma \ref{lemma:technical}, and in the last one Lemma \ref{lemma:smooth}.

Given this analysis, let us define:
\begin{align*}
B_T \coloneqq \frac{2\lambda_M \widetilde{T} \epsilon}{\lambda_{\textup{min}}T} & + \frac{h(T)}{\lambda_{\textup{min}} T} F(\bm{\omega}^*, \bm{\mu}) + \frac{2 \ln(KM) F(\bm{\omega}^*, \bm{\mu})}{\lambda_{\textup{min}} \sqrt{T}} + \\ & + \frac{C_r}{\lambda_{\textup{min}} \sqrt{T}} + \frac{4 \ln(KM) M C_{\bm{\mu}}}{\lambda_{\textup{min}} \sqrt{T}} + \epsilon.
\end{align*}
Consider $T$ such that:
\begin{align*}
T \ge \max \left\{ \left( \frac{2 \ln(KM) F(\bm{\omega}^*, \bm{\mu})}{\lambda_{\textup{min}} \epsilon} \right)^2, \left( \frac{C_r}{\lambda_{\textup{min}} \epsilon} \right)^2, \left( \frac{4 \ln(KM) M C_{\bm{\mu}} }{\lambda_{\textup{min}} \epsilon} \right)^2 \right\} \coloneqq T_{\epsilon}.
\end{align*}
Then, it holds that:
\begin{align*}
B_T \le \frac{2\lambda_M \epsilon}{\lambda_{\textup{min}}}+ 5\epsilon = \bar{B}_{\epsilon},
\end{align*}
and, consequently, we have that
\begin{align*}
F(\bm{\omega}(T), \bm{\hat{\mu}}(T)) \ge F(\bm{\omega}^*, \bm{\mu}) - \bar{B}_{\epsilon}.
\end{align*}
Combining this result with Lemma \ref{lemma:stopping} and \ref{lemma:expected-cost}, we obtain:
\begin{align*}
\mathbb{E}[c_{\tau_\delta}] \le \lambda_M T_{\epsilon} + C_0(\delta, \epsilon) + 1 + \sum_{t=0}^{+\infty} \mathbb{P}_{\bm{\mu}}(\mathcal{E}_{\epsilon}(t)^c).
\end{align*}
By Lemma \ref{lemma:exploration} and Lemma 19 in \citep{garivier2016optimal}, we obtain that:
\begin{align*}
\limsup_{\delta \rightarrow 0 } \frac{\mathbb{E}[c_{\tau_\delta}]}{\log(1 / \delta)} \le \frac{1}{C^*(\bm{\mu})^{-1} - \bar{B}_\epsilon}.
\end{align*}
Letting $\epsilon \rightarrow 0$ concludes the proof.
\end{proof}

\section{Experiment details and additional results}\label{app:exp-details} 

In this section, we provide experimental details and additional results. For the experiments we relied on a server with $100$ Intel(R) Xeon(R) Gold 6238R CPU @ 2.20GHz cpus and
$256$GB of RAM. The time to obtain all the empirical results is less than a day.

This section is structured as follows.
\begin{itemize}
\item First, we provide an additional details and results on the experiment presented in Section \ref{sec:exp} (Section \ref{app:main-exp-details} and Section \ref{app:emp-cost}).
\item Secondly, we provide results on additional $4 \times 5$ multi-fidelity bandits (Section \ref{sec:add-domains}).
\item Then, we analyze a typical trick that is used to improve the performance of gradient-based methods, that is using a constant rate against using the learning rate that the theory prescribes. (Section \ref{app:conv}). 
\item We then present results using very small value of $\delta$ w.r.t. to the one that has been considered in the main text (Section \ref{app:asymptotic}). In particular, we verify that the performance difference amplifies.
\item Finally, we discuss the approaches of \cite{wang2024multi}.
\end{itemize}

\subsection{Further details on the experiments presented in Section \ref{sec:exp}}\label{app:main-exp-details}

\paragraph{First instance} We begin by providing further details on the $4 \times 5$ multi-fidelity bandit model that we used in Figure~\ref{fig:res-1-v2} and Figure~\ref{fig:res-2-v2}. First, Table~\ref{mu:1} reports the $4 \times 5$ bandit model of Figure~\ref{fig:res-1-v2}. 

\begin{table}[h]
  \caption{Multi-fidelity bandit model presented in Figure~\ref{fig:res-1-v2}.}
  \label{mu:1}
  \centering
  \begin{tabular}{lllllll}
    & $\mu_1$ & $\mu_2$ & $\mu_3$ & $\mu_4$ & $\xi$ & $\lambda$ \\
    \midrule
    $m=1$ & $0.9465$ & $0.8526$ & $0.8162$ & $0.9099$ & $0.1$ & $0.05$  \\
    $m=2$ & $0.7727$ & $0.8708$ & $0.9050$ & $1.0594$ & $0.08$ & $0.1$ \\
    $m=3$ & $0.8812$ & $0.8515$ & $0.8209$ & $1.0083$ & $0.05$ & $0.2$ \\
    $m=4$ & $0.8284$ & $0.8374$ & $0.8353$ & $0.9745$ & $0.025$ & $0.4$ \\
    $m=5$ & $0.8494$ & $0.8401$ & $0.8495$ & $0.9856$ & $0.0$ & $5$ \\
    \bottomrule
  \end{tabular}
\end{table}

All arms, both for the bandit model of Figure~\ref{fig:res-1-v2} and \ref{fig:res-2-v2} are Gaussian distributions with variance $\sigma^2 = 0.1$.
The bandit model in Figure~\ref{fig:res-1-v2} has been generated according to a procedure that has been used to generate MF instances in \cite{poiani2022multi} (see their Appendix D.1). Specifically, first, two $M$-dimensional vectors are specified, which we refer to as $\bm{a}$ and $\bm{b}$. Specifically, $\bm{a}$ and $\bm{b}$ are such that $a_{m} \ge a_{m+1}$ and $b_m \ge b_{m+1}$ for all $m \in [M-1]$. Then, we first sample the means of the arm at fidelity $M$\footnote{For this step, we constrained the minimum gap between arms at fidelity $M$ is at least $0.1$.}, and once this is done we sample $\mu_{i,m} \in \left[ \mu_{i,M} - a_m -\frac{b}{2}, \mu_{i,M} + a_m + \frac{b}{2}\right]$. Then, $\xi$ is computed as $\xi_m = a_m + \frac{b_m}{2}$. In this sampling procedure, we have used $\bm{a} = [0.075, 0.06, 0.04, 0.02, 0]$ and $\bm{b} = [0.05, 0.04, 0.02, 0.01, 0]$.

\paragraph{Second instance} We now recall the $5\times 2$ example of Section~\ref{sec:exp}:

\begin{table}[h]
  \caption{Multi-fidelity bandit model presented in Figure~\ref{fig:res-2-v2}.}
  \label{mu:5_by_2}
  \centering
  \begin{tabular}{llllllll}
    & $\mu_1$ & $\mu_2$ & $\mu_3$ & $\mu_4$ & $\mu_5$ & $\xi$ & $\lambda$ \\
    \midrule
    $m=1$ & $0.4$ & $0.4$ & $0.4$ & $0.4$ & $0.5$ & $0.1$ & $0.5$  \\
    $m=2$ & $0.5$ & $0.5$ & $0.5$ & $0.5$ & $0.6$ & $0$ & $5$ \\
    \bottomrule
  \end{tabular}
\end{table}

We prove that, in this instance, the oracle weights are given by $\omega^*_{i} = [0.09621, 0]$ for all $i \in [4]$, and $\omega^*_{5} = [0, 0.61516]$ (this number have been rounded to the fourth decimal precision). In order to prove this, we first notice that in the considered domain the optimal fidelity for $i \in [4]$ is $m=1$. This is direct from the fact that $\mu_{i,m} = \mu_{i,M} - \xi_m$ (see, e.g., Proposition \ref{prop:sparsity-example}). Furthermore, we recall the expression $f_{5,i}$, for any $i \in [4]$:
\begin{align*}
f_{5,i}(\bm{\omega}, \bm{\mu}) = \inf_{\eta \in [\mu_{i,M}, \mu_{5,M}]} \sum_{m \in [M]} \omega_{5,m} \frac{d^-(\mu_{5,m}, \eta + \xi_m)}{\lambda_m} + \omega_{i,m} \frac{d^+(\mu_{i,m}, \eta - \xi_m)}{\lambda_m}.
\end{align*}
Then, since $\eta^*_{5,i} \in [\mu_{i,M}, \mu_{5,M}]$, $\mu_{i,M} = \mu_{5,m} = \mu_{5,M} - \xi_m$, we have that the optimal fidelity for arm $5$ is $m=2$. 
At this point, consider the oracle weights $\bm{\omega}^*$. We notice that, due to the symmetry of the problem, $\omega_{i,1}^*$ is equal for all $i \in [4]$. Then, we can rewrite $f_{5,i}(\bm{\omega}^*, \bm{\mu})$ as a function of a single variable, that is:
\begin{align*}
f_{5,i}(\bm{\omega}, \bm{\mu}) = \inf_{\eta \in [\mu_{i,M}, \mu_{5,M}]} (1 - 4 \omega_{i,1}) \frac{d^-(\mu_{5,2}, \eta)}{\lambda_M} + \omega_{i,1} \frac{d^+(\mu_{i,1}, \eta - \xi_m)}{\lambda_m},
\end{align*}
and, consequently, we obtain that $C^*(\bm{\mu})^{-1}$ can be expressed as a convex optimization of a single variable, that is $\omega_{i,1}$. Taking the derivative of $F(\bm{\omega}, \bm{\mu})$ w.r.t. $\omega_{i,1}$ we obtain that the following equality should be satisfied at the optimum:
\begin{align*}
4 \frac{d(\mu_{1,M}, \eta^*_{5,i})}{\lambda_M} = \frac{d(\mu_{i,1}, \eta^*_{5,i} - 1)}{\lambda_m}.
\end{align*}
Solving for $\eta^*_{5,i}$ gives a unique solution in the range $[0.5, 0.6]$, which is $0.539$. Then, using Lemma~\ref{lemma:derivative-case-1} and solving for $\omega_{i,1}$, we obtain $\omega_{i,1} = 0.09621$, and consequently, $\omega_{5,2} = 0.61516$.

\paragraph{Thresholds} To conclude, we comment on the thresholds $\beta_{t,\delta}$ used by the algorithms. For the stopping rule in MF-GRAD we used $\beta_{t,\delta} = \log(K/\delta) + M \log(\log(t) + 1)$, which is a simplification of its theoretical value \eqref{eq:threshold_exact} that retains the same scaling in $K$ and $M$ (up to constants). In GRAD, we used $\beta_{t,\delta} = \log(K / \delta) + \log(\log(t) + 1)$, which is a similar simplification of the usual threshold for BAI, which instead of concentrating a sum of $2M$ KL terms (see the proof of Proposition~\ref{prop:correctness}) only requires to concentrate a sum over $2$ KL terms. Finally, in the confidence intervals that are used in IISE we have used the confidence bonuses $\sqrt{\frac{2\sigma^2(\log(KM /\delta) + \log(\log(t)))}{N_{a,m}(t)}}$\footnote{Notice, indeed, that ISEE requires a union bound both on $K$ and $M$.}, which compared to their original form is replacing some crude union bound over $t$ with a stylized version of the threshold that would follows from using tight time-uniform concentration. These choices were adopted consistently in all the experiments presented in this appendix, and they ensured the $\delta$-correctness requirement in all cases.

\subsection{Empirical cost proportions of MF-GRAD}\label{app:emp-cost}

In this section, we analyze the behavior of MF-GRAD by analyzing the evolution of the empirical cost proportions. Specifically, we repeat on the $4 \times 5$ bandit model of Table~\ref{mu:1}, the same experiment that we presented in Section~\ref{sec:exp} for the $5 \times 2$ bandit model. Figure~\ref{fig:speed-rnd-1} reports the result.  Interestingly, we highlight how the sparsity pattern emerges also in this domain.
% IN CASE THE THE END THERE IS NO SPACE FOR THE FIGURE IN THE MAIN TEXT, UNCOMMENT VERSION BELOW
\begin{figure}[p]
        \centering
        \includegraphics[width=0.9\textwidth]{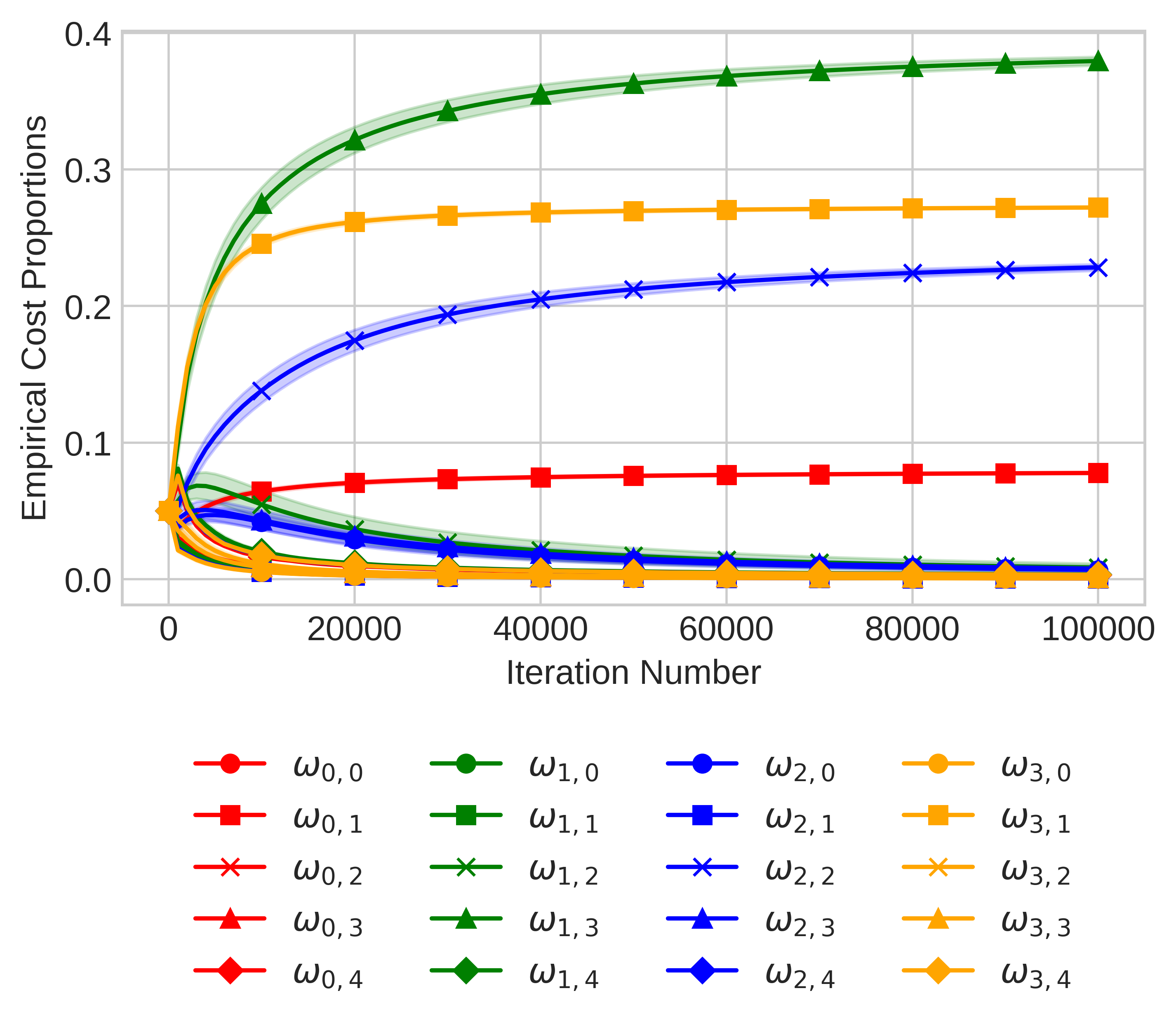} % replace with your image file
        \caption{Empirical cost proportions of MF-GRAD for $100000$ iterations on the $5 \times 2$ bandit model of Section \ref{sec:exp}. Results are average over $100$ runs and shaded area report $95\%$ confidence intervals. Empirical cost proportions of each arm are plotted with the same color. Cost proportions at fidelity $1$, $2$, $3$, $4$ and $5$ are visualized with circle, squared, cross, triangle, and diamond respectively.}
        \label{fig:speed-rnd-1}
\end{figure}

%In this section, we analyze the behavior of MF-GRAD by analyzing the evolution of the empirical cost proportions. Specifically, we removed the stopping rule from MF-GRAD, and we let the algorithm run for $10^5$ iterations; then, in Figure~\ref{fig:speed-5x2-1} and Figure~\ref{fig:speed-5x2-1} we report the results on the bandit problems that has been considered in Section~\ref{sec:exp}.

%\begin{figure}[t]
%        \centering
%       \includegraphics[width=\textwidth]{figure/speed/rnd/grad-1/plot.png} % replace with your image file
%        \caption{Empirical cost proportions of MF-GRAD for $100000$ iterations on the $5 \times 2$ bandit model of Section \ref{sec:exp}. Results are average over $100$ runs and shaded area report $95\%$ confidence intervals.}
%        \label{fig:speed-rnd-1}
%\end{figure}

%\begin{figure}[t]
%        \centering
%        \includegraphics[width=\textwidth]{figure/speed/cfg5x2/grad-1/plot.png} % replace with your image file
%        \caption{Empirical cost proportions of MF-GRAD for $100000$ iterations on the $5 \times 2$ bandit model. Results are average over $100$ runs and shaded area report $95\%$ confidence intervals.}
%        \label{fig:speed-5x2-1}
%\end{figure}

\subsection{Results on additional domains}\label{sec:add-domains}

In this section, we present results on additional $4 \times 5$ multi-fidelity bandit models. 
Specifically, we generate another random instance according to the procedure of \cite{poiani2022multi}, and we report the model in Table~\ref{mu:2}. Furthermore, we created an additional bandit model where means of some arms are slightly increasing on displaying a stationary trend over fidelity and we report the model in Table~\ref{mu:3}. In both cases, we considered Gaussian distribution with $\sigma^2 = 0.1$. Empirical results of MF-GRAD, IISE and GRAD for $\delta=0.01$ can be found in Figure~\ref{fig:mu-2} and \ref{fig:mu-3} respectively. As we can appreciate, MF-GRAD maintains the most competitive performance across both domains.

\begin{table}[h]
  \caption{Additional random multi-fidelity bandit model.}
  \label{mu:2}
  \centering
  \begin{tabular}{lllllll}
    & $\mu_1$ & $\mu_2$ & $\mu_3$ & $\mu_4$ & $\xi$ & $\lambda$ \\
    \midrule
    $m=1$ & $0.6944$ & $0.5080$ & $0.4153$ & $0.3564$ & $0.1$ & $0.05$  \\
    $m=2$ & $0.5634$ & $0.3723$ & $0.4132$ & $0.4570$ & $0.08$ & $0.1$ \\
    $m=3$ & $0.6178$ & $0.4322$ & $0.3817$ & $0.4065$ & $0.05$ & $0.2$ \\
    $m=4$ & $0.6323$ & $0.4225$ & $0.3838$ & $0.3582$ & $0.025$ & $0.4$ \\
    $m=5$ & $0.6171$ & $0.4216$ & $0.3831$ & $0.3783$ & $0.0$ & $1$ \\
    \bottomrule
  \end{tabular}
\end{table}

\begin{table}[h]
  \caption{Additional multi-fidelity bandit model.}
  \label{mu:3}
  \centering
  \begin{tabular}{lllllll}
    & $\mu_1$ & $\mu_2$ & $\mu_3$ & $\mu_4$ & $\xi$ & $\lambda$ \\
    \midrule
    $m=1$ & $0.41$ & $0.35$ & $0.51$ & $0.41$ & $0.1$ & $0.1$  \\
    $m=2$ & $0.45$ & $0.37$ & $0.56$ & $0.39$ & $0.08$ & $0.125$ \\
    $m=3$ & $0.47$ & $0.38$ & $0.64$ & $0.40$ & $0.04$ & $0.25$ \\
    $m=4$ & $0.48$ & $0.36$ & $0.62$ & $0.42$ & $0.02$ & $0.5$ \\
    $m=5$ & $0.5$ & $0.35$ & $0.61$ & $0.42$ & $0.0$ & $1$ \\
    \bottomrule
  \end{tabular}
\end{table}

\begin{figure}[p]
    \centering
    \begin{minipage}{0.49\textwidth}
        \centering
        \includegraphics[width=\textwidth]{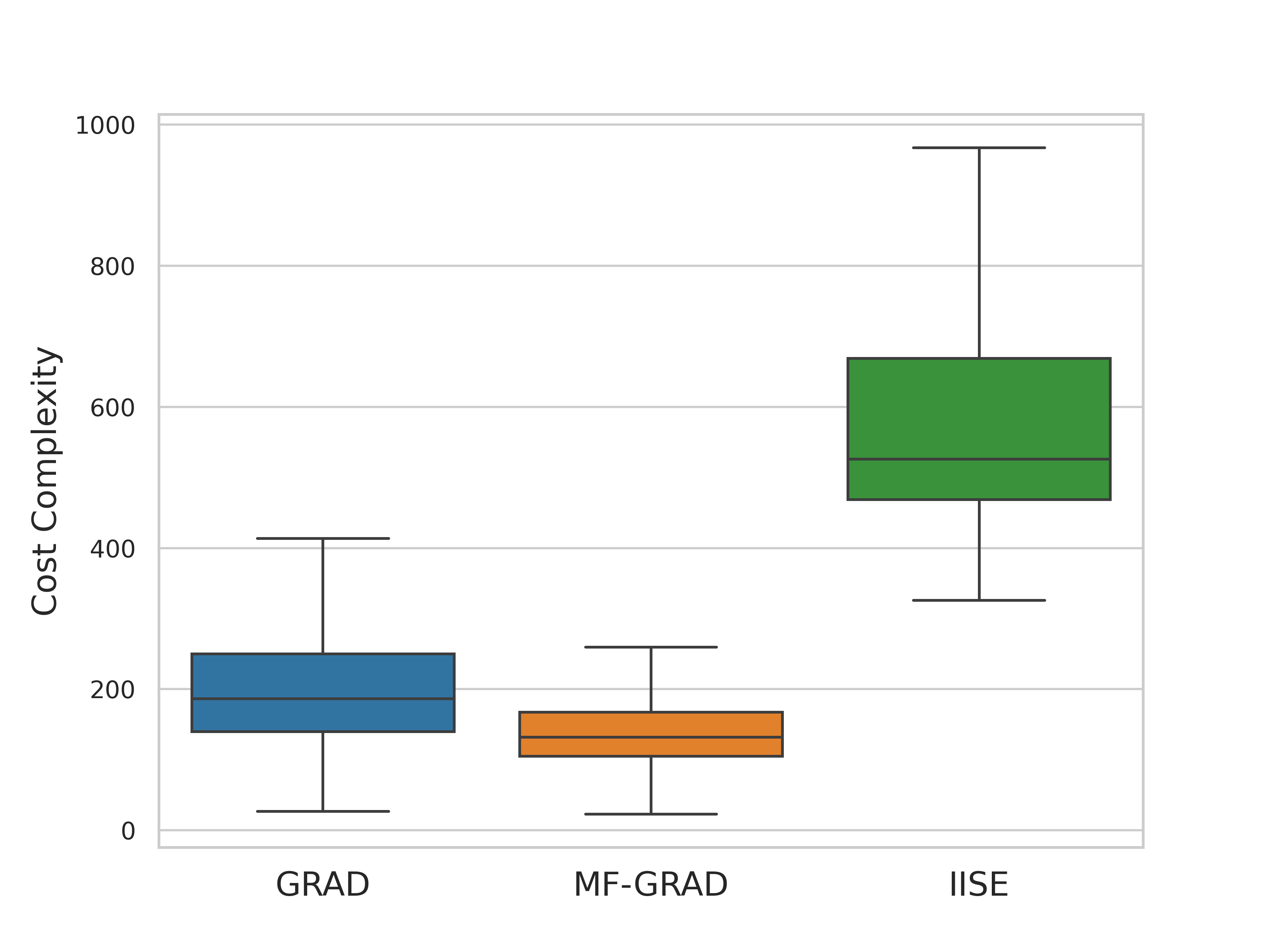} % replace with your image file
        \caption{Empirical cost complexity for $1000$ runs times with $\delta=0.01$ on the multi-fidelity bandit of Table~\ref{mu:2}.}
        \label{fig:mu-2}
    \end{minipage}
    \hfill
    \begin{minipage}{0.49\textwidth}
        \centering
        \includegraphics[width=\textwidth]{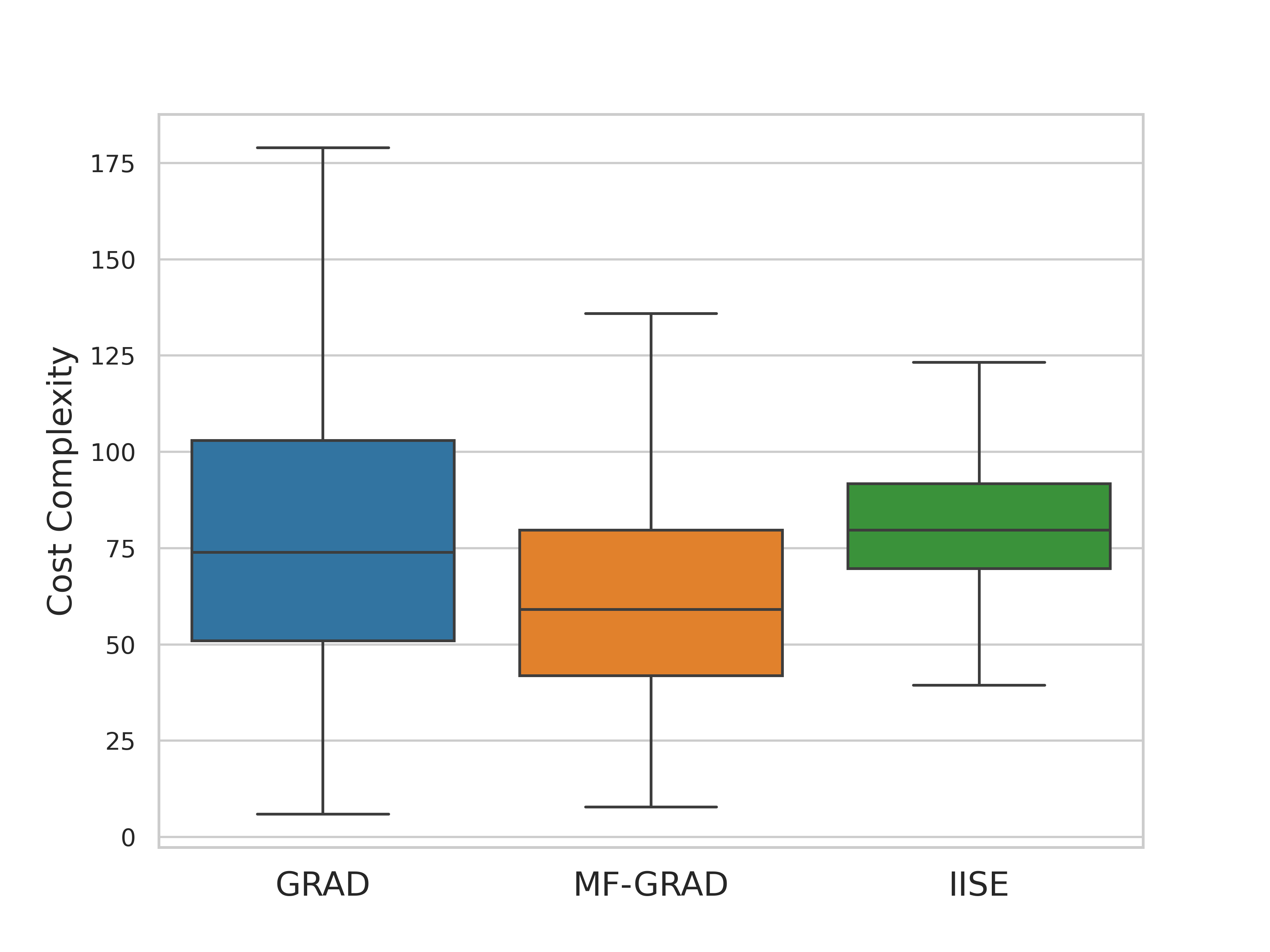} % replace with your image file
        \caption{Empirical cost complexity for $1000$ runs times with $\delta=0.01$ on the multi-fidelity bandit of Table~\ref{mu:3}.}
        \label{fig:mu-3}
    \end{minipage}
\end{figure}

\subsection{Improving performance with constant learning rate}\label{app:conv}
As reported in \cite{menard2019gradient}, using constant learning rate can improve the identification performance in standard BAI settings. In the following, we analyze the performance difference of MF-GRAD that uses the theoretical learning rate, and MF-GRAD that uses a constant learning rate of $\alpha=0.25$. We will refer this second version as MF-GRAD-CONST. Figure~\ref{fig:compare-1} and \ref{fig:compare-2} reports the performance of the algorithms in the two bandit models of Section \ref{sec:exp}. As we can see, in both cases, MF-GRAD-CONST outperforms MF-GRAD.

We further investigate this behavior by showing the evolution of the empirical cost proportions of MF-GRAD-CONST during the learning process. Figure~\ref{fig:speed-rnd-2} and \ref{fig:speed-5x2-2} reports the evolution of the empirical costs, over $100000$ iterations. Comparing the results with Figure~\ref{fig:speed-rnd-1} and \ref{fig:speed-rnd-2} we can appreciate as MF-GRAD-CONST move away from the initial cost proportions way sooner than MF-GRAD, which explains its superior performance in the moderate regime of $\delta$.

\begin{figure}[p]
    \centering
    \begin{minipage}{0.49\textwidth}
        \centering
        \includegraphics[width=\textwidth]{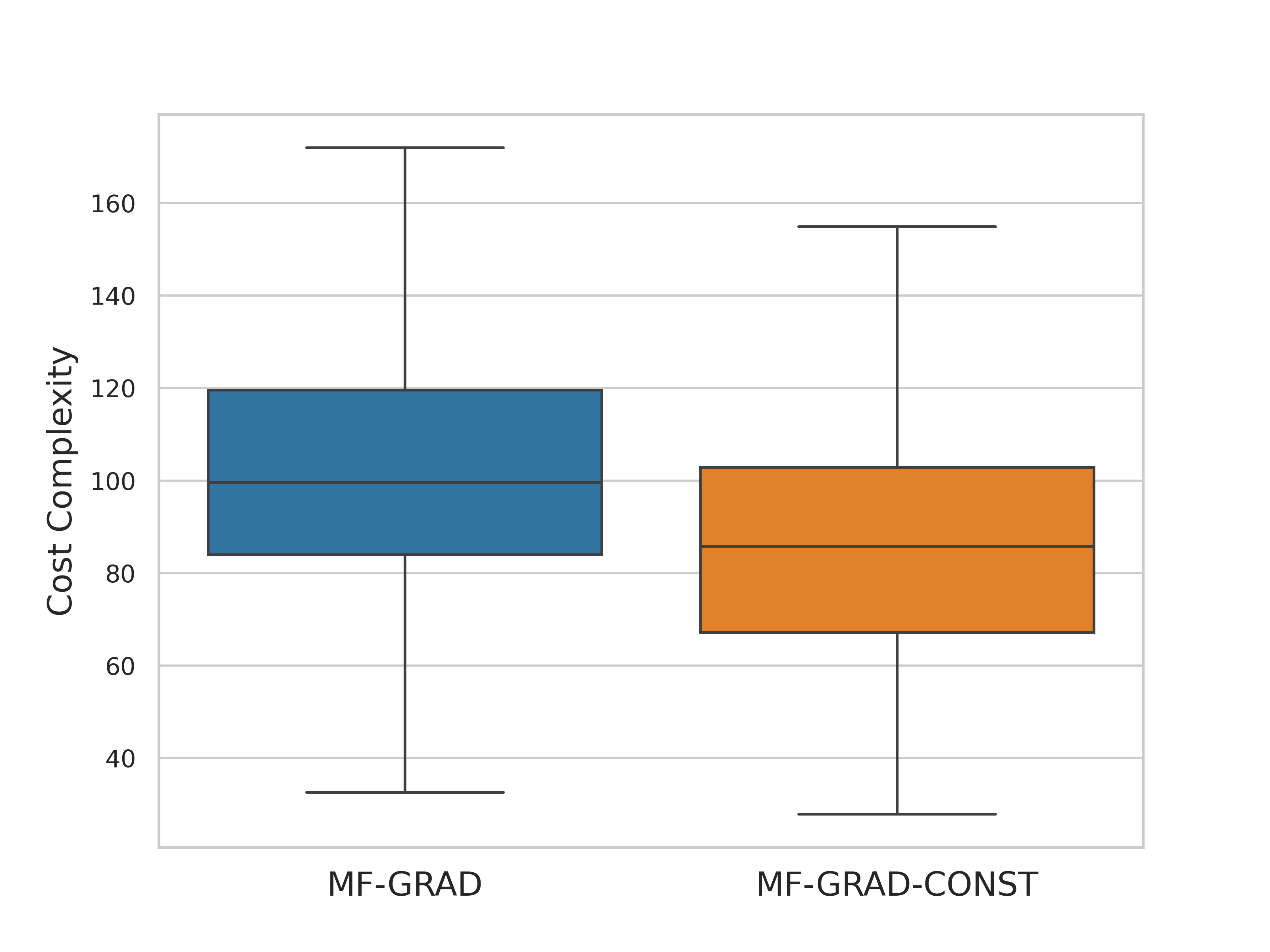} % replace with your image file
        \caption{Empirical cost complexity for $1000$ runs times with $\delta=0.01$ on the $4 \times 5$ multi-fidelity bandit of Section~\ref{sec:exp}.}
        \label{fig:compare-1}
    \end{minipage}
    \hfill
    \begin{minipage}{0.49\textwidth}
        \centering
        \includegraphics[width=\textwidth]{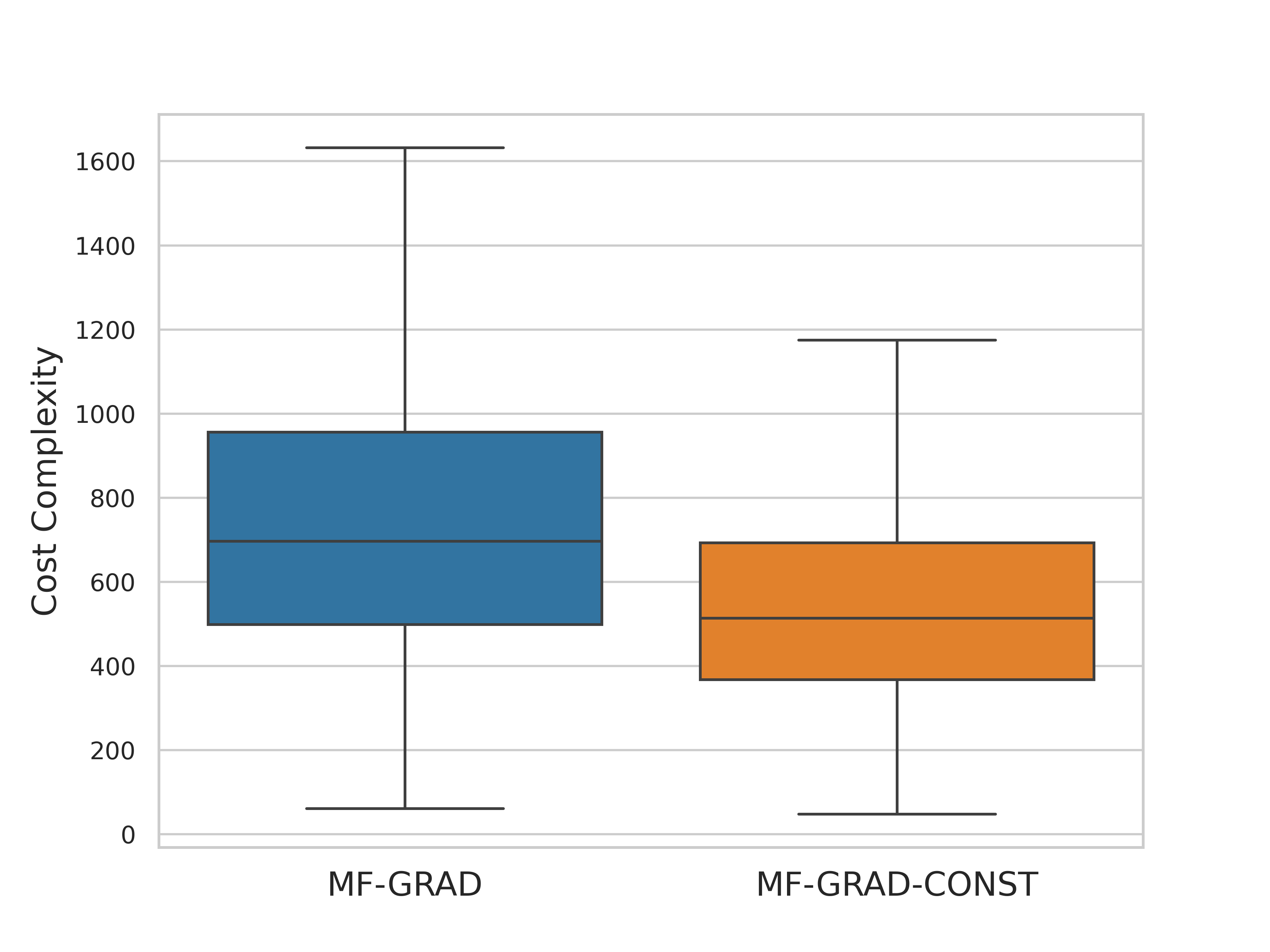} % replace with your image file
        \caption{Empirical cost complexity for $1000$ runs times with $\delta=0.01$ on the $5 \times 2$ multi-fidelity bandit of Section~\ref{sec:exp}.}
        \label{fig:compare-2}
    \end{minipage}
\end{figure}

\begin{figure}[p]
        \centering
        \includegraphics[width=\textwidth]{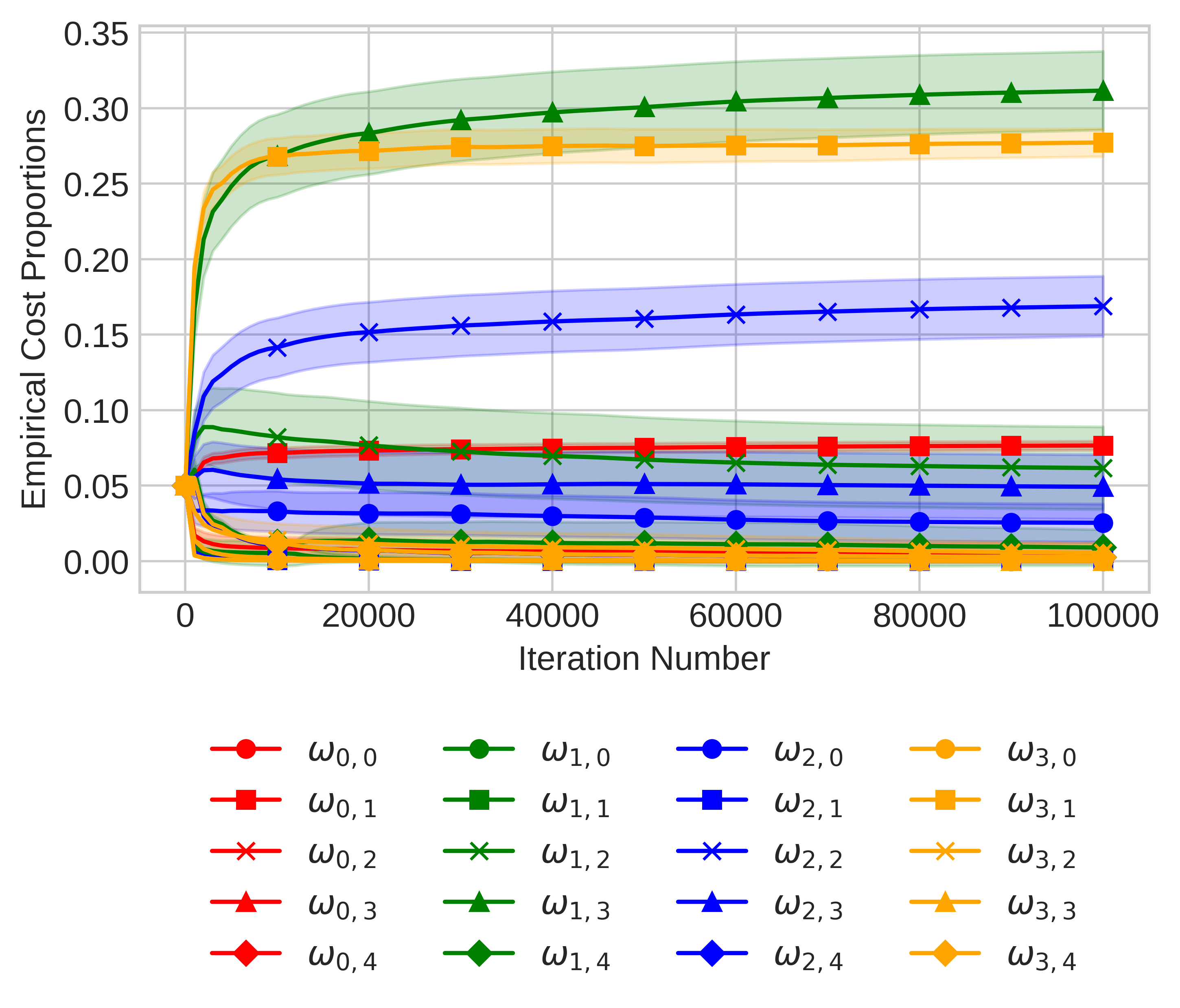} % replace with your image file
        \caption{Empirical cost proportions of MF-GRAD-CONST for $100000$ iterations on the $4 \times 5$ bandit model of Section \ref{sec:exp}. Results are average over $100$ runs and shaded area report $95\%$ confidence intervals. Empirical cost proportions of each arm are plotted with the same color. Cost proportions at fidelity $1$, $2$, $3$, $4$ and $5$ are visualized with circle, squared, cross, triangle, and diamond respectively.}
        \label{fig:speed-rnd-2}
\end{figure}

\begin{figure}[p]
        \centering
        \includegraphics[width=\textwidth]{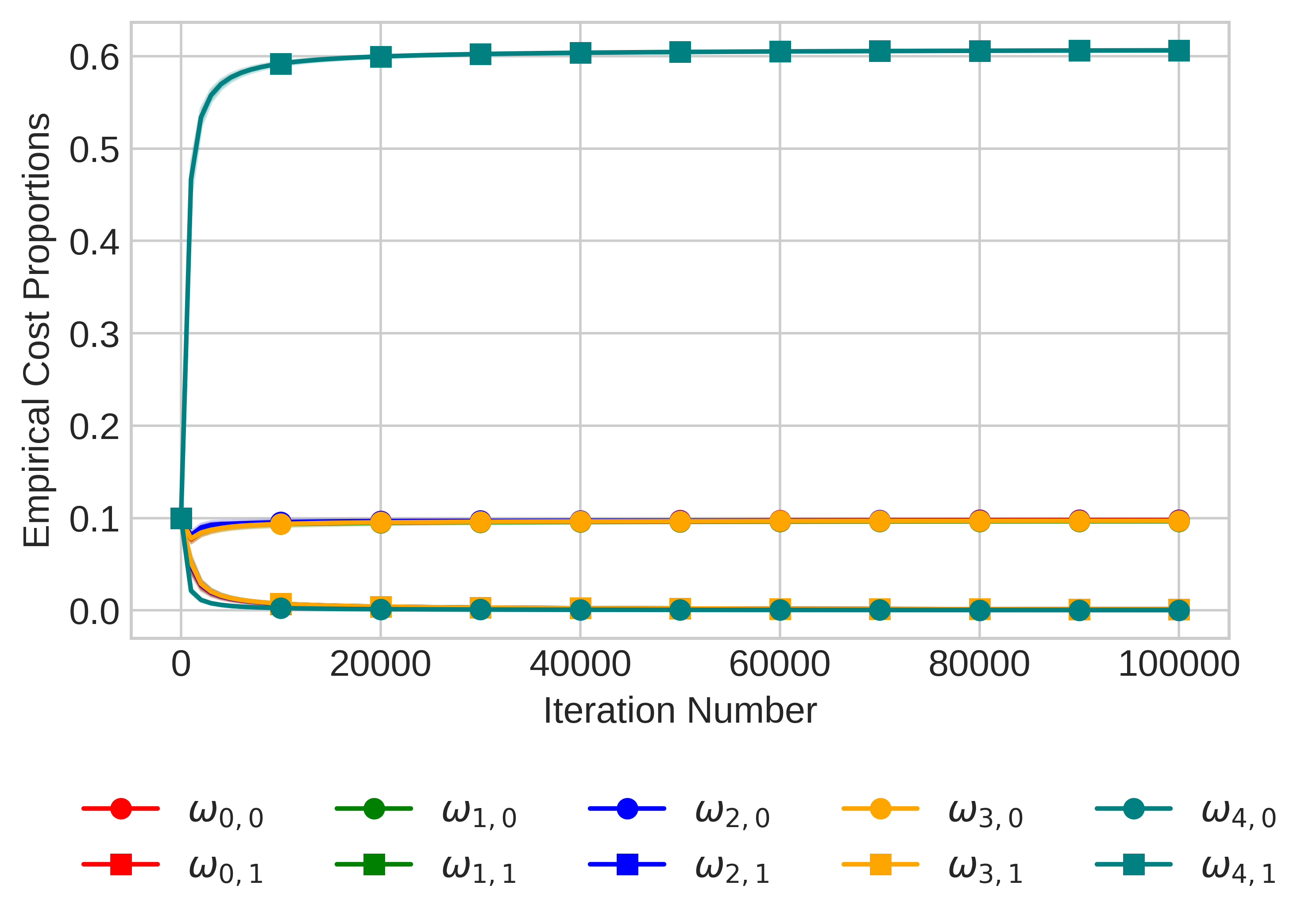} % replace with your image file
        \caption{Empirical cost proportions of MF-GRAD-CONST for $100000$ iterations on the $5 \times 2$ bandit model of Section \ref{sec:exp}. Results are average over $100$ runs and shaded area report $95\%$ confidence intervals. Empirical cost proportions of each arm are plotted with the same color. Cost proportions at fidelity $1$ and $2$ are visualized with circle and squared respectively.}
        \label{fig:speed-5x2-2}
\end{figure}

\subsection{Smaller value of $\delta$}\label{app:asymptotic}
Finally, we repeat the experiments that we presented in the previous section using smaller values of $\delta$. Specifically, we consider $\delta=10^{-10}$. Figure~\ref{fig:res-1-v2-asymp} reports the performance of the $4 \times 5$ bandit model of Section \ref{sec:exp}, Figure~\ref{fig:res-2-v2-asymp} reports the performance of the $5 \times 2$ bandit model of Section \ref{sec:exp}, Figure~\ref{fig:mu-2-asymp} and \ref{fig:mu-3-asymp} reports the performance of the additional bandit models presented in Appendix \ref{sec:add-domains}. As one can notice the performance gap between MF-GRAD and the considered baseline increases.

\begin{figure}[p]
    \centering
    \begin{minipage}{0.49\textwidth}
        \centering
        \includegraphics[width=\textwidth]{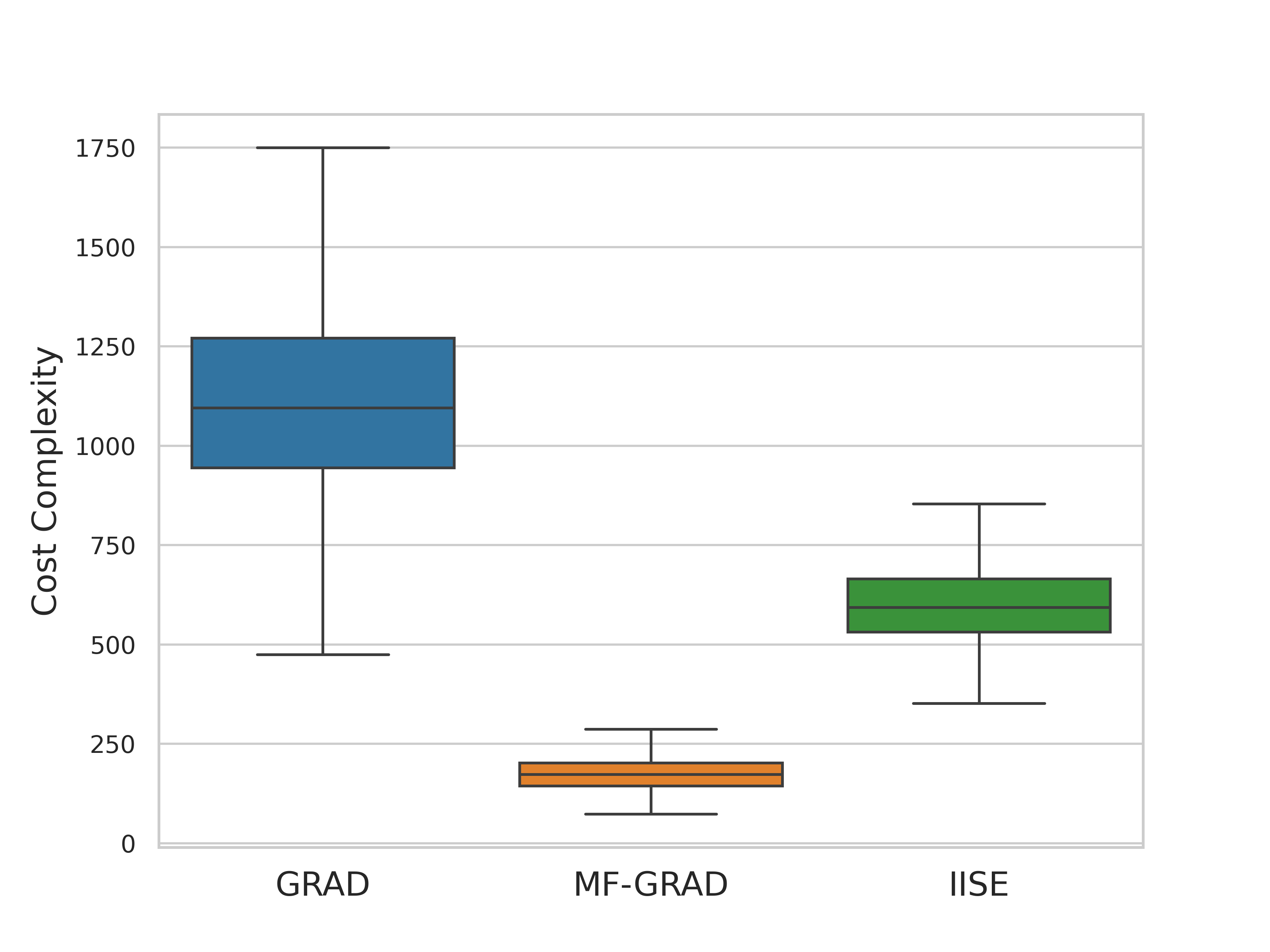} % replace with your image file
        \caption{Empirical cost complexity for $1000$ runs times with $\delta=10^{-10}$ on the $4 \times 5$ multi-fidelity bandit of Table~\ref{mu:1}.}
        \label{fig:res-1-v2-asymp}
    \end{minipage}
    \hfill
    \begin{minipage}{0.49\textwidth}
        \centering
        \includegraphics[width=\textwidth]{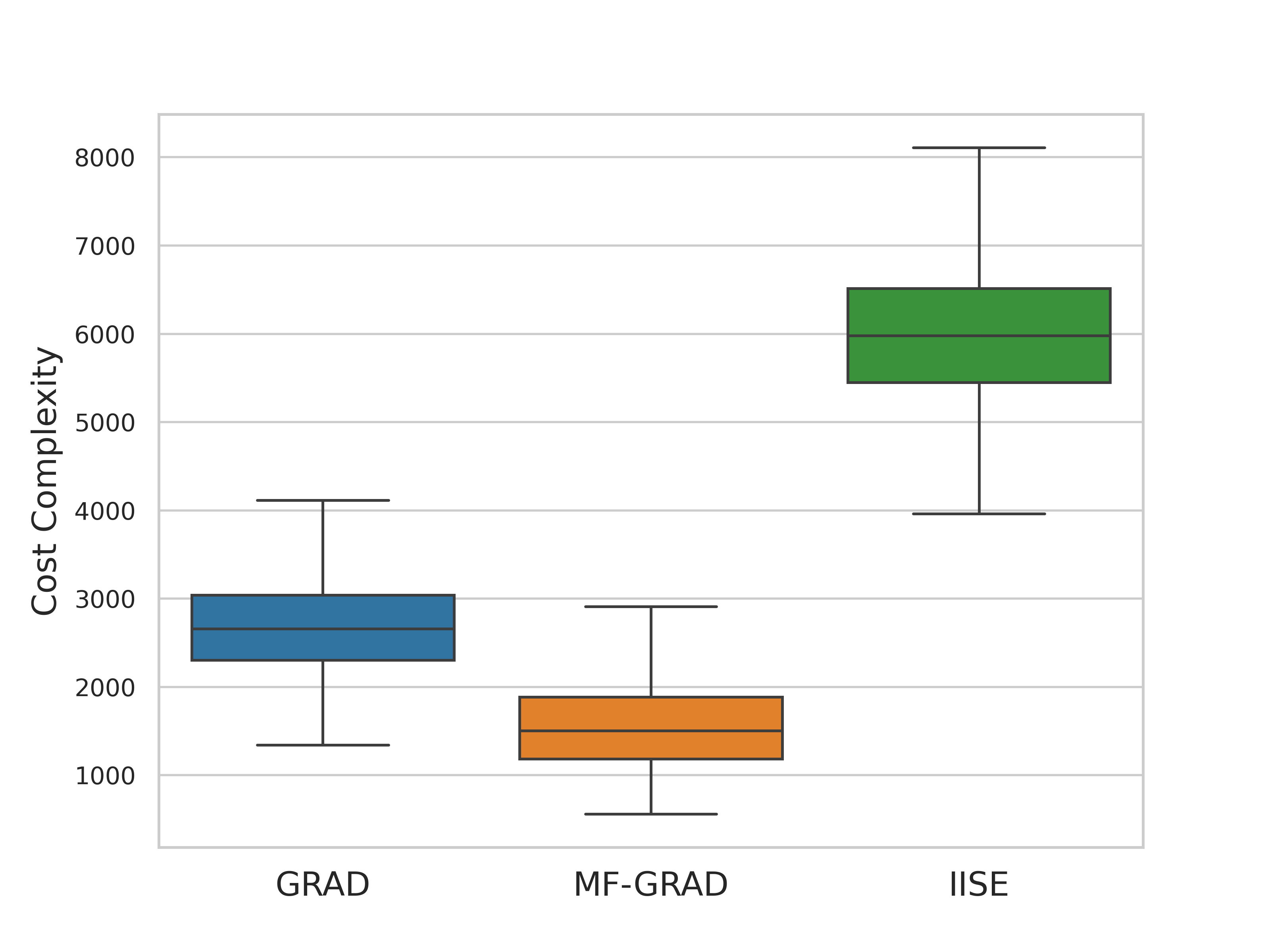} % replace with your image file
        \caption{Empirical cost complexity for $1000$ runs times with $\delta=10^{-10}$ on the $5 \times 2$ multi-fidelity bandit of Table~\ref{mu:5_by_2}.}
        \label{fig:res-2-v2-asymp}
    \end{minipage}
\end{figure}

\begin{figure}[t]
    \centering
    \begin{minipage}{0.49\textwidth}
        \centering
        \includegraphics[width=\textwidth]{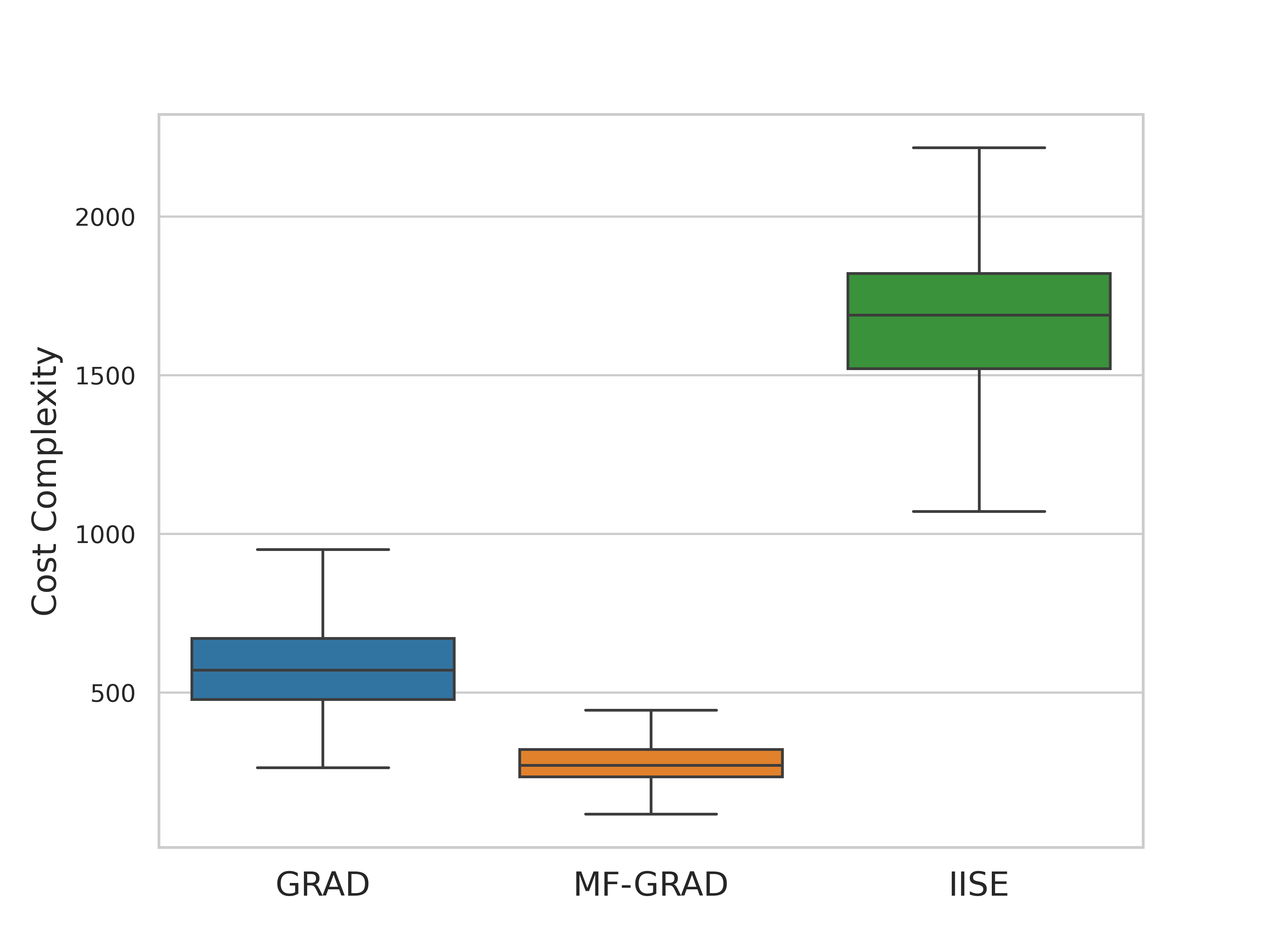} % replace with your image file
        \caption{Empirical cost complexity for $1000$ runs times with $\delta=10^{-10}$ on the multi-fidelity bandit of Table~\ref{mu:2}.}
        \label{fig:mu-2-asymp}
    \end{minipage}
    \hfill
    \begin{minipage}{0.49\textwidth}
        \centering
        \includegraphics[width=\textwidth]{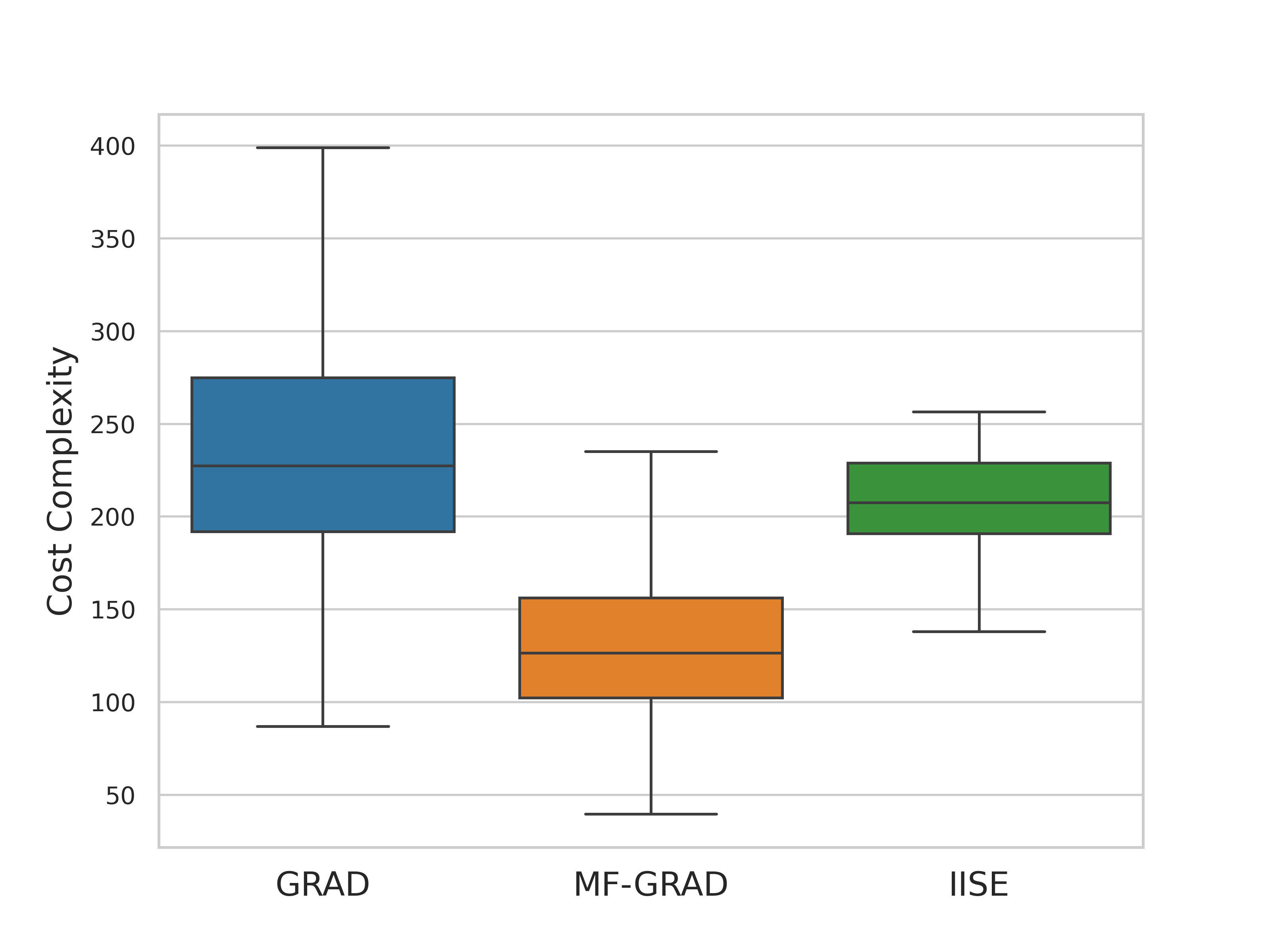} % replace with your image file
        \caption{Empirical cost complexity for $1000$ runs times with $\delta=10^{-10}$ on the multi-fidelity bandit of Table~\ref{mu:3}.}
        \label{fig:mu-3-asymp}
    \end{minipage}
\end{figure}

\subsection{On LUCB-ExploreA and LUCB-ExploreB}\label{app:lucb-sub-opt}

In this section, we present in detail the main issue behind the algorithms presented in \cite{wang2024multi}, i.e., LUCBExploreA and LUCBExploreB, that is the fact that these algorithm might fail at stopping in some specific multi-fidelity bandit models. First, we provide numerical evidence of this phenomena by running both methods in a specific instance (Section \ref{app:lucb-exp}). Then, in Section~\ref{app:lucb-theo}, we point out an error in the analysis of \cite{wang2024multi} that highlights how both algorithms fails at stopping when considering instances such as the one that has been considered in Section~\ref{app:lucb-theo}. 

\subsubsection{Experimental issues}\label{app:lucb-exp}
When experimenting with the algorithms proposed in \cite{wang2024multi}, namely LUCBExploreA and LUCBExploreB, we have faced stopping issues. Specifically, both algorithms were not terminating in any reasonable number of steps on some specific instances. We now report an illustrative example of such scenarios. Consider the following Gaussian multi-fidelity bandit model: $\mu_{1} = [0.64, 0.6]$, $\mu_{2} = [0.46, 0.5]$, $\lambda = [0.1, 5]$, $\xi = [0.1, 0]$ and $\sigma^2 = 1$. 
In this scenario, the well-known LUCB algorithm \cite{kalyanakrishnan2012pac} which only uses samples at fidelity $M$, stops soon (iteration $\approx 100$k) paying a total cost of roughly $500$k.
When running LUCBExploreA and LUCBExploreB, instead, we faced termination issues.
We let both algorithms run for a maximum number of $10^{8}$ samples (reaching a total cost which is approximately $10^{7}$), and the stopping criterion was never met for LUCBExploreA, while 70\% of LUCBExploreB runs did not stop. LUCBExploreB explores more fidelities at the beginning, and that initial exploration can be enough to trigger the stopping test on some runs, but many continue until we artificially stop the experiment. Figure~\ref{fig:bug} reports the results of this experiment.

\begin{figure}[t]
        \centering
        \includegraphics[width=0.7\textwidth]{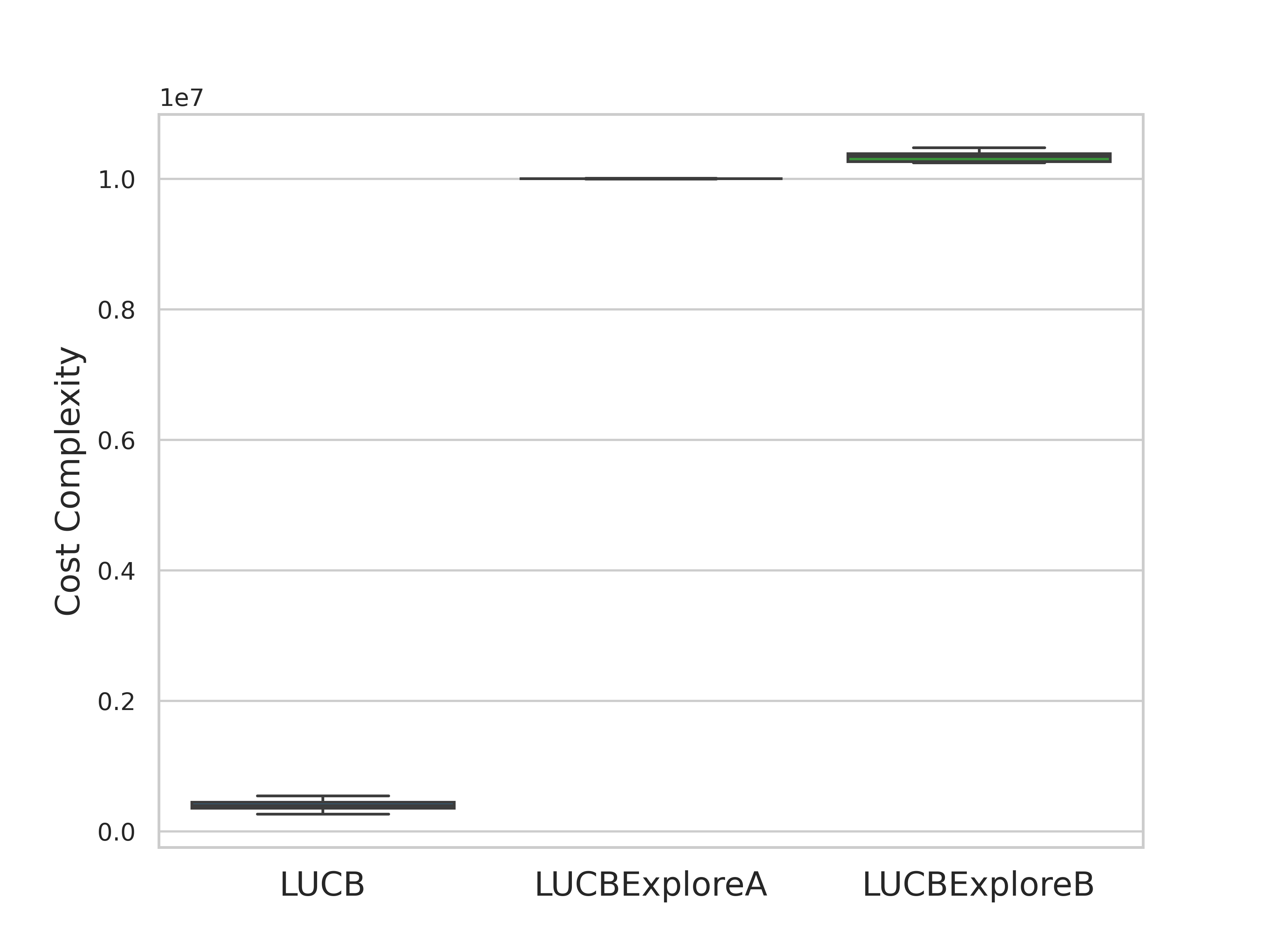} % replace with your image file
        \caption{Visualization of the non-stopping behavior of LUCBExploreA and LUCBExploreB.}
        \label{fig:bug}
\end{figure}

As a final remark, we notice that both LUCBExploreA and LUCBExploreB require additional knowledge in order to run, that is an upper bound on $\mu_{1,M}$ and a lower bound on $\mu_{2,M}$ (assuming arms to being ordered according to $\mu_{1,M} > \mu_{2,M} \ge \dots \ge \mu_{K,M}$). The result presented in this section have been presented running their algorithms in the most favorable scenario, that is the situation in which the agent has perfect knowledge on the values $\mu_{1,M}$ and $\mu_{2,M}$.

\subsubsection{Theoretical issues}\label{app:lucb-theo}

The general idea of the LUCBExplore algorithms of \citep{wang2024multi} is to identify for each arm the ``optimal fidelity'' and pull the arm at that fidelity.
In Appendix~\ref{app:sub-opt}, we described how that ``optimal fidelity'' can differ from the fidelity selected by our lower bound. Since our lower bound can be matched by an algorithm and thus describes the actual cost complexity of the problem, it betters represent the notion of optimal fidelity.
We will thus call the fidelity used by the LUCBExplore algorithms \emph{target fidelity} instead.
The two variants ExploreA and ExploreB differ in the mechanism used to look for the target fidelity.

We first show that even if their algorithm used an oracle for the fidelity exploration mechanism that returns the target fidelity for all arms, it would still not be able to stop on some examples.
We then highlight an issue with the proof of \citep{wang2024multi}.

\paragraph{Failure to stop with an oracle} Consider the bandit instance from Appendix~\ref{app:sub-opt}.
Recall that this is a $2 \times 2$ example of multi-fidelity BAI problem with $\xi_1 = 0.1$, $\xi_2 = 0.0$, $\mu_{1,M} = 0.6$, $\mu_{1,m} = 0.65$, $\mu_{2,M} = 0.5$, $\mu_{2,m} = 0.45$ (we write $M = 2$ and $m = 1$).
All distributions are Gaussian with variance 1.
We choose $\lambda_M > 4 \lambda_m$, which means that the target fidelity for that problem are $m^*_1 = 1$ and $m^*_2 = 1$ (see details in Appendix~\ref{app:sub-opt}). LUCBExplore with an oracle that always selects that fidelity is the following algorithm:
\begin{itemize}
    \item Initialization: $\hat{\mu}_{k,m}(t) = 0$, $N_{k,m}(t) = 0$, $UCB_{k}(t) = 1$, $LCB_{k}(t) = 0$ for all arms $k$ and fidelity $m$. $\ell_t = 1$, $u_t = 2$.
    \item While $LCB_{\ell_t}(t) \le UCB_{u_t}(t)$
    \begin{itemize}
        \item $\ell_t = \arg\max_k UCB_{k}(t)$, $u_t = \arg\max_{k \in [k]\setminus\{\ell_t\}} UCB_{k}(t)$
        \item Pull arms $\ell_t$ and $u_t$ at their target fidelity.
    \end{itemize}
    \item Output $\ell_t$
\end{itemize}
The indices are 
\begin{align*}
LCB_k(t) &= \max_m \left(\hat{\mu}_{k,m}(t) - \xi_m - \beta(N_{k,m}(t), t, \delta)\right)
\\
UCB_k(t) &= \min_m \left(\hat{\mu}_{k,m}(t) + \xi_m + \beta(N_{k,m}(t), t, \delta)\right)
\end{align*}
where $\beta(n, t, \delta) = \sqrt{\log(Lt^4/\delta)/n}$ for some constant $L>0$.

In the two-arms example here, the algorithm simplifies greatly: it always pulls both arms alternatively, always at fidelity $m=1$. It stops when the LCB of one arm surpasses the LCB of the other.

We show that it can't stop and return the best arm 1, unless a confidence interval is not valid, which happens with small probability. If $\hat{\mu}_{1,1}(t) \le \mu_{1,1} + \beta(t/2, t, \delta)$,
\begin{align*}
LCB_1(t) &= \max\{0, \hat{\mu}_{1,1}(t) - \xi_1 - \beta(t/2, t, \delta)\}
\\
&\le \mu_{1, 1} - \xi_2
\\
&= 0.55
\: .
\end{align*}
On the other hand, if $\min\{1, \hat{\mu}_{2,1}(t) \ge \mu_{2,1} - \beta(t/2, t, \delta)$,
\begin{align*}
UCB_2(t) &= \min\{1, \hat{\mu}_{2,1}(t) + \xi_1 + \beta(t/2, t, \delta)\}
\\
&\ge \mu_{2,1} + \xi_1
\\
&= 0.55
\: .
\end{align*}
We get that we always have $LCB_1(t) \le UCB_2(t)$, unless one of the two concentration inequalities on the empirical means are not true. The confidence width $\beta$ is designed to make those inequalities true for all $t \in \mathbb{N}$ with probability close to 1.
We can similarly get that $LCB_2(t) \le UCB_1(t)$ (which is expected since 2 is a worse arm) unless some concentration inequality is false.

We obtain that this algorithm with an oracle selection for the target fidelity cannot stop fast: the only way it can stop is if unlikely deviations occur.

\paragraph{Issue with the proof}

There is an issue with the proof of the cost complexity upper bound of \citep{wang2024multi}.
The issue is in the first 3 steps of their appendix E.2, pages 17 and 18. They identify a threshold $c$ (with value $0.55$ in our example of the last paragraph) and prove the following.
\begin{itemize}
    \item Step 1: if the algorithm does not terminate and confidence intervals hold, then either both $LCB_{\ell_t}(t) \le c$ and $UCB_{\ell_t}(t) \ge c$ or both $LCB_{u_t}(t) \le c$ and $UCB_{u_t}(t) \ge c$.
    \item Step 2: confidence intervals are likely to hold.
    \item Step 3: if a sub-optimal arm $k$ satisfies $LCB_{k}(t) \le c$ and $UCB_{k}(t) \ge c$, then its target fidelity cannot be pulled much.
\end{itemize}
They conclude that for all arms, the number of pulls at the target fidelity is upper bounded, with large probability.

Let's see the issue with that proof, on the same example as in the last paragraph.

In the example above with the oracle choice for the target fidelity, we saw that if confidence intervals hold and $\ell_t = 1$ (which is the most likely), then $LCB_{\ell_t}(t) \le c$ and $UCB_{\ell_t}(t) \ge c$. That is, step 1 gives a condition on arm 1 only (and nothing on arm 2).
But then we get nothing from step 3, since arm 1 is not a sub-optimal arm.

We only get an upper bound on the number of pulls for sub-optimal arms if we can say that they satisfy $LCB_{k}(t) \le c$ and $UCB_{k}(t) \ge c$ at some point, but it might not be the case. Indeed, when the algorithm does not terminate, steps 1 and 2 together give that with large probability either both $LCB_{\ell_t}(t) \le c$ and $UCB_{\ell_t}(t) \ge c$ or both $LCB_{u_t}(t) \le c$ and $UCB_{u_t}(t) \ge c$. It is possible that we always have this property for $\ell_t = 1$ (the optimal arm), and that we can never apply step 3.

\end{document}